\documentclass[10pt,journal,compsoc]{IEEEtran}
\IEEEoverridecommandlockouts
\usepackage{graphicx}

\usepackage{array}
\usepackage{float}
\usepackage{url}
\usepackage{xcolor}
\usepackage{float}
\usepackage{lipsum}

\usepackage{todonotes}
\newcounter{todocounter}

\usepackage{epsfig}
\usepackage{amsmath}
\usepackage{physics}
\usepackage{amssymb}
\usepackage{bbm}

\usepackage{bm}
\usepackage{amsfonts,multirow,bigstrut,booktabs,ctable,latexsym}
\usepackage{mathtools}
\usepackage[mathscr]{eucal}
\usepackage{verbatim}
\usepackage{amsthm}
\usepackage{multicol}
\usepackage{soul}
\usepackage{cite}
\usepackage{float}
\usepackage{flafter}

\restylefloat{table}

\usepackage{algorithm}
\usepackage{algcompatible}

\usepackage{caption}

\ifCLASSOPTIONcompsoc
\usepackage[caption=false,font=normalsize,labelfont=sf,textfont=sf]{subfig}
\else
\usepackage[caption=false,font=footnotesize]{subfig}
\fi
 
\newtheorem{theorem}{Theorem}

\newtheorem{assumption}{Assumption}
\newtheorem{lemma}{Lemma}
\newtheorem{definition}{Definition}

\makeatletter
\def\ALG@special@indent{%
    \ifdim\ALG@thistlm=0pt\relax
        \hskip-\leftmargin
    \else
        \hskip\ALG@thistlm
    \fi
}

\newcommand{\Serverexe}[1]{\item[]\noindent\ALG@special@indent  \textbf{Server executes:}\ #1}
\newcommand{\Clientexe}[1]{\item[]\noindent\ALG@special@indent \textbf{ClientUpdate($\bm{\theta}^{t}, \bm{m}^t, \spar$):}\ #1}
\newcommand{\Clientexedpfed}[1]{\item[]\noindent\ALG@special@indent \textbf{ClientUpdate($\bm{\theta}^{t}$):}\ #1}
\newcommand{\Selecttopk}[1]{\item[]\noindent\ALG@special@indent \textbf{SelectTopk($\bm{\theta}^t, k$):}\ #1}
\newcommand{\Selectrandk}[1]{\item[]\noindent\ALG@special@indent \textbf{SelectRandk($\bm{\theta}^t, k$):}\ #1}
\newcommand{\DPspar}[1]{\item[]\noindent\ALG@special@indent \textbf{DP-spar($\Delta, \bm{m}^t, \spar$):}\ #1}
\makeatother

\makeatletter

\newcommand{\returnx}{\item\noindent{return}\ }

\makeatother

\DeclareMathOperator{\topk}{top} 
\DeclareMathOperator{\randk}{rand} 
\DeclareMathOperator{\spar}{spar} 

\def\BibTeX{{\rm B\kern-.05em{\sc i\kern-.025em b}\kern-.08em
    T\kern-.1667em\lower.7ex\hbox{E}\kern-.125emX}}
\begin{document}


\title{Federated Learning with Sparsified Model Perturbation: Improving Accuracy under Client-Level Differential Privacy}
\author{Rui~Hu,~\IEEEmembership{Member,~IEEE,} Yuanxiong~Guo,~\IEEEmembership{Senior~Member,~IEEE} and~Yanmin~Gong,~\IEEEmembership{Senior~Member,~IEEE}
\IEEEcompsocitemizethanks{\IEEEcompsocthanksitem R. Hu is with the Department of Computer Science \& Engineering, University of Nevada, Reno, Reno, NV, 89557 USA (E-mail: ruihu@unr.edu); Y. Guo is with the Department of Information Systems and Cyber Security, University of Texas at San Antonio, San Antonio, TX, 78249 USA (E-mail: yuanxiong.guo@utsa.edu); Y. Gong is with the Department of Electrical and Computer Engineering, University of Texas at San Antonio, San Antonio, TX, 78249 USA (E-mail: yanmin.gong@utsa.edu). 
}



 }

\IEEEtitleabstractindextext{%
\begin{abstract}
%
Federated learning (FL) that enables edge devices to collaboratively learn a shared model while keeping their training data locally has received great attention recently and can protect privacy in comparison with the traditional centralized learning paradigm. However, sensitive information about the training data can still be inferred from model parameters shared in FL. Differential privacy (DP) is the state-of-the-art technique to defend against those attacks. The key challenge to achieving DP in FL lies in the adverse impact of DP noise on model accuracy, particularly for deep learning models with large numbers of parameters. This paper develops a novel differentially-private FL scheme named Fed-SMP that provides a client-level DP guarantee while maintaining high model accuracy. To mitigate the impact of privacy protection on model accuracy, Fed-SMP leverages a new technique called Sparsified Model Perturbation (SMP) where local models are sparsified first before being perturbed by Gaussian noise. 
We provide a tight end-to-end privacy analysis for Fed-SMP using R{\'e}nyi DP and prove the convergence of Fed-SMP with both unbiased and biased sparsifications. Extensive experiments on real-world datasets are conducted to demonstrate the effectiveness of Fed-SMP in improving model accuracy with the same DP guarantee and saving communication cost simultaneously. 

\end{abstract}
\begin{IEEEkeywords}
Federated learning, edge computing, differential privacy, communication efficiency, sparsification. 
\end{IEEEkeywords}}
\maketitle
\IEEEdisplaynontitleabstractindextext
\IEEEpeerreviewmaketitle

\IEEEraisesectionheading{
\section{Introduction}\label{sec:intro}}
\IEEEPARstart{T}he proliferation of edge devices such as smartphones and Internet-of-things (IoT) devices, each equipped with rich sensing, computation, and storage resources, leads to tremendous data being generated on a daily basis at the network edge. These data can be analyzed to build machine learning models that enable a wide range of intelligent services such as personal fitness tracking \cite{iotwearable}, traffic monitoring \cite{iottraffic}, and smart home security \cite{smarthome}. Traditional machine learning paradigm requires transferring all the raw data to the cloud before training a model, leading to high communication cost and severe privacy risk. 

As an alternative machine learning paradigm, \emph{Federated Learning (FL)} has attracted significant attention recently due to its benefits in communication efficiency and privacy \cite{kairouz2019advances}. In FL, multiple clients collaboratively learn a shared statistical model under the orchestration of the cloud without sharing their local data \cite{mcmahan2017communication}. Although only model updates instead of raw data are shared by each client in FL, it is not sufficient to ensure privacy as the sensitive training data can still be inferred from the shared model parameters by using advanced inference attacks. 
For instance, given an input sample and a target model, the membership inference attack \cite{shokri2017membership} can train an attack model to determine whether the sample was used for training the target model or not. 
Also, given a target model and class, the model inversion attack \cite{fredrikson2015model} can recover the typical representations of the target class using an inversion model learned from the correlation between the inputs and outputs of the target model.  
%


As a cryptography-inspired rigorous definition of privacy, differential privacy (DP) has become the de-facto standard for achieving data privacy and can give a strong privacy guarantee against an adversary with arbitrary auxiliary information \cite{dwork2014algorithmic}. For privacy protection in FL, client-level DP is often more relevant than record-level DP: client-level DP guarantee protects the participation of a client, while record-level DP guarantee protects only a single data sample of a client \cite{geyer2017differentially,abadi2016deep}. 
While DP can be straightforwardly achieved using Gaussian or Laplacian mechanism \cite{dwork2014algorithmic}, achieving client-level DP in the FL setting faces several major challenges in maintaining high model accuracy. Firstly, FL is an iterative learning process where model updates are exchanged in multiple rounds, leading to more privacy leakage compared to the one-shot inference. Secondly, the intensity of added DP noise is linearly proportional to the model size, which can be very large (e.g., millions of model parameters) for modern deep neural networks (DNNs), and will severely degrade the accuracy of the trained model. Thirdly, it is more challenging to achieve client-level DP than record-level DP because the entire dataset of a client rather than a single data sample of that client needs to be protected. Existing studies \cite{mcmahan2017learning,liu2020flame,erlingsson2019amplification} on FL with client-level DP suffer from significant accuracy degradation due to the inherent challenge of large additive random noise required to achieve a certain level of client-level DP for DNNs.  


In this paper, we propose a new differentially-private FL scheme called \emph{Fed-SMP}, which guarantees client-level DP while preserving model accuracy and saving communication cost simultaneously. Fed-SMP utilizes \emph{Sparsified Model Perturbation (SMP)} to improve the privacy-accuracy tradeoff of DP in FL. Specifically, SMP first sparsifies the local model update of a client at each round by selecting only a subset of coordinates to keep and then adds Gaussian noise to perturb the values at those selected coordinates. As we will show later in this paper, by using SMP in FL, the sparsification will have an amplification effect on the privacy guarantee offered by the added Gaussian noise, leading to a better privacy-accuracy tradeoff. Meanwhile, it can reduce the communication cost by compressing the shared model updates at each round. In summary, the main contributions of this paper are summarized as follows.
\begin{itemize}
    \item We propose a new differentially private FL scheme called Fed-SMP, which achieves client-level DP guarantee for FL with a large number of clients. Fed-SMP only makes lightweight modifications to FedAvg \cite{mcmahan2017communication}, the most common learning method for FL, which enables easy integration into existing packages. 

    \item Compared with DP-FedAvg\cite{mcmahan2017learning}, the state-of-art of differentially private FL schemes, Fed-SMP improves the privacy-utility tradeoff and communication efficiency of FL with client-level DP by using sparsification as a tool for amplifying privacy and reducing the communication cost at the same time. 

    \item By integrating different sparsification operators with model perturbation, we design two algorithms of Fed-SMP: Fed-SMP with unbiased random sparsification and Fed-SMP with biased top-$k$ sparsification. The resulting algorithms require less amount of added random noise to achieve the same level of DP and are compatible with secure aggregation, which is a crucial privacy-enhancing technique to achieve client-level DP in practical FL systems.

    \item To prove the $(\epsilon,\delta)$-DP guarantee of Fed-SMP, we use R{\'e}nyi differential privacy (RDP) to tightly account the end-to-end privacy loss of  Fed-SMP. We also theoretically analyze the impact of sparsified model perturbation on the convergence of Fed-SMP with both unbiased and biased sparsifications, filling the gap in the state-of-arts. The theoretical results indicate that under a certain DP guarantee, the optimal compression level of Fed-SMP needs to balance the increased compression error and reduced privacy error due to sparsification. 
    
    \item We empirically evaluate the performances of Fed-SMP on both IID and non-IID datasets and compare the results with those of the state-of-art baselines. Experimental results demonstrate that Fed-SMP can achieve higher model accuracy than baseline approaches under the same level of DP and save the communication cost simultaneously. 



    
\end{itemize}

The rest of the paper is organized as follows. Preliminaries on DP are described in Section~\ref{sec:pre}. Section~\ref{sec:sys-mod} introduces the problem formulation and presents the proposed Fed-SMP scheme. The privacy and convergence properties of Fed-SMP with random and top-$k$ sparsification strategies are rigorously analyzed in Section~\ref{sec:main_results}. Section~\ref{sec:exp} shows the experimental results. Finally, Section~\ref{sec:related} reviews the related work, and Section~\ref{sec:con} concludes the paper. 

\section{Preliminaries}\label{sec:pre}
DP has been proposed as a rigorous privacy notion for measuring privacy risk. The classic notion of DP, $(\epsilon, \delta)$-DP, is defined as follows:
\begin{definition}[$(\epsilon,\delta)$-DP\cite{dwork2014algorithmic}]\label{DP} 
Given privacy parameters $\epsilon >0$ and $0\leq \delta < 1$, a randomized mechanism $\mathcal{M}$ satisfies $(\epsilon,\delta)$-DP if for any two adjacent datasets $D, D^{\prime}$ and any subset of outputs ${O} \subseteq \text{range}(\mathcal{M})$,
\begin{equation}
\Pr[\mathcal{M}(D) \in {O}] \leq e^{\epsilon} \Pr[\mathcal{M}(D^{\prime}) \in {O}] + \delta.
\end{equation}
When $\delta=0$, we have $\epsilon$-DP, or Pure DP.
\end{definition}



To better calculate the privacy loss over multiple iterations in differentially private learning algorithms, R{\'e}nyi differential privacy (RDP) has been proposed as follows:
\begin{definition}[$(\alpha, \rho)$-RDP  \cite{mironov2017renyi}]\label{rdp}
Given a real number $\alpha>1$ and privacy parameter $\rho\geq 0$, a randomized mechanism $\mathcal{M}$ satisfies $(\alpha, \rho)$-RDP if for any two adjacent datasets $D, D^{\prime}$, the R{\'e}nyi $\alpha$-divergence between $\mathcal{M}(D)$ and $\mathcal{M}(D^{\prime})$ satisfies
\begin{equation*}
D_{\alpha}[\mathcal{M}(D)\|\mathcal{M}(D^{\prime})]:= \frac{1}{\alpha-1}\log\mathbb{E}\left[ \left(\frac{{\mathcal{M}(D)}}{{\mathcal{M}(D^\prime)}}\right)^{\alpha}\right]\leq \rho(\alpha).
\end{equation*}
\end{definition}
RDP is a relaxed version of pure DP with a tighter composition bound. Thus, it is more suitable to analyze the end-to-end privacy loss of iterative algorithms. We can convert RDP to $(\epsilon, \delta)$-DP for any $\delta>0$ using the following lemma:
\begin{lemma}[From RDP to $(\epsilon, \delta)$-DP \cite{wang2019subsampled}]\label{lemma:rdp_dp}
If the randomized mechanism $\mathcal{M}$ satisfies $(\alpha, \rho(\alpha))$-RDP, then it also satisfies $(\rho(\alpha) + \frac{\log(1/\delta)}{\alpha-1}, \delta)$-DP.
\end{lemma}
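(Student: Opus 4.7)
The plan is to convert the R\'enyi divergence bound into a tail bound on the privacy loss random variable and then translate that tail bound into the $(\epsilon,\delta)$-DP inequality. Concretely, I would define the privacy loss $L(o) := \log\left(\Pr[\mathcal{M}(D)=o]/\Pr[\mathcal{M}(D')=o]\right)$ and rewrite the RDP hypothesis in its equivalent moment-generating form, namely $\mathbb{E}_{o\sim\mathcal{M}(D)}\left[e^{(\alpha-1)L(o)}\right]\le e^{(\alpha-1)\rho(\alpha)}$. This is just unpacking the definition of the R\'enyi $\alpha$-divergence in Definition~\ref{rdp}.

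Next, I would apply Markov's inequality to the non-negative random variable $e^{(\alpha-1)L}$. For any threshold $\epsilon'$, this gives
\begin{equation*}
\Pr_{o\sim\mathcal{M}(D)}\left[L(o)>\epsilon'\right]\;\le\; e^{-(\alpha-1)\epsilon'}\,\mathbb{E}\left[e^{(\alpha-1)L(o)}\right]\;\le\; e^{(\alpha-1)(\rho(\alpha)-\epsilon')}.
\end{equation*}
Setting the right-hand side equal to $\delta$ and solving yields exactly $\epsilon'=\rho(\alpha)+\frac{\log(1/\delta)}{\alpha-1}$, which is the candidate $\epsilon$ in the target conclusion.

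Finally, I would convert this tail bound on the privacy loss into the $(\epsilon',\delta)$-DP inequality from Definition~\ref{DP}. For any measurable output set $O\subseteq\mathrm{range}(\mathcal{M})$, split $O$ according to whether the privacy loss exceeds $\epsilon'$:
\begin{equation*}
\Pr[\mathcal{M}(D)\in O]\;=\;\Pr[\mathcal{M}(D)\in O,\,L\le\epsilon']+\Pr[\mathcal{M}(D)\in O,\,L>\epsilon'].
\end{equation*}
On the first event the definition of $L$ gives $\Pr[\mathcal{M}(D)\in O,\,L\le\epsilon']\le e^{\epsilon'}\Pr[\mathcal{M}(D')\in O]$, while the second event is bounded by $\delta$ from the Markov step. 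Combining the two pieces yields the desired $(\epsilon'\!,\delta)$-DP guarantee with $\epsilon'=\rho(\alpha)+\log(1/\delta)/(\alpha-1)$.

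The main obstacle is mostly bookkeeping rather than a deep difficulty: one must handle the case where densities are not well-defined (e.g., discrete outputs or mixed measures) by working with Radon--Nikodym derivatives, and one must be careful that the Markov step is applied to the correct measure (outputs drawn from $\mathcal{M}(D)$) so that the resulting exponential factor $e^{\epsilon'}$ multiplies $\Pr[\mathcal{M}(D')\in O]$ rather than $\Pr[\mathcal{M}(D)\in O]$. Beyond that, the argument is the standard Markov-plus-splitting reduction, and gives the tightest exponent achievable from a single moment of the privacy loss.
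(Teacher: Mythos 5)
The paper states this lemma as a cited preliminary and gives no proof of its own, so there is no in-paper argument to compare against; your derivation must therefore stand on its own, and it does. The key identity $\mathbb{E}_{o\sim\mathcal{M}(D)}\bigl[e^{(\alpha-1)L(o)}\bigr]=\mathbb{E}_{o\sim\mathcal{M}(D')}\bigl[(\mathcal{M}(D)/\mathcal{M}(D'))^{\alpha}\bigr]$ correctly unpacks Definition~\ref{rdp}; the Markov step is legitimate because $\alpha>1$ makes $x\mapsto e^{(\alpha-1)x}$ increasing; solving $e^{(\alpha-1)(\rho(\alpha)-\epsilon')}=\delta$ gives exactly $\epsilon'=\rho(\alpha)+\log(1/\delta)/(\alpha-1)$; and the split of $O$ along the event $\{L\le\epsilon'\}$, with the mass where $\mathcal{M}(D')$ vanishes absorbed into the tail term, is the standard reduction from a privacy-loss tail bound to $(\epsilon,\delta)$-DP. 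It is worth noting that the proof in the sources the paper cites takes a genuinely different route: Mironov's original argument applies H\"older's inequality to obtain the probability-preservation bound $\Pr[\mathcal{M}(D)\in O]\le e^{(1-1/\alpha)\rho(\alpha)}\,\Pr[\mathcal{M}(D')\in O]^{1-1/\alpha}$ and then performs a case analysis on the size of $\Pr[\mathcal{M}(D')\in O]$, arriving at the same constants. Your moment-plus-Markov route is the one used in the moments-accountant literature; it is more elementary, makes the role of the single $\alpha$-moment transparent, and (as you observe) is tight for what one moment can deliver, whereas the H\"older route generalizes more naturally to the refined conversions that optimize over $\alpha$ or use the full moment profile.
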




In the following, we provide some useful definitions and lemmas about DP and RDP that will be used to derive our main results in the rest of the paper.
\begin{definition}[$\ell_2$-sensitivity\cite{dwork2014algorithmic}]\label{def:sensitivity}
Let $h: \mathcal{D} \rightarrow \mathbb{R}^d$ be a query function over a dataset. The $\ell_2$-sensitivity of $h$ is defined as $\psi(h):=\sup_{D,D^\prime \in \mathcal{D}, D \sim D^\prime}\|h(D)-h(D^\prime)\|_2$ where $D \sim D^\prime$ denotes that $D$ and $D^\prime$ are two adjacent datasets.  
\end{definition}
\begin{lemma}[Gaussian Mechanism \cite{mironov2017renyi}]\label{lemma:gaussian_mechanism}
Let $h: \mathcal{D} \rightarrow \mathbb{R}^d$ be a query function with $\ell_2$-sensitivity $\psi(h)$. The Gaussian mechanism $\mathcal{M} = h(D) + \mathcal{N}(0, \sigma^2 \psi(h)^2 \bm{I}_d)$ satisfies $(\alpha,\alpha /2\sigma^2)$-RDP.
\end{lemma}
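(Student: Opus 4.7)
The plan is to reduce the claim to computing the R\'enyi divergence between two multivariate Gaussians that share the same covariance matrix but have different means, and then to apply the $\ell_2$-sensitivity bound. Observe that for any fixed adjacent pair $D \sim D'$, the output distributions are $\mathcal{M}(D) \sim \mathcal{N}(h(D), \sigma^2\psi(h)^2 \bm{I}_d)$ and $\mathcal{M}(D') \sim \mathcal{N}(h(D'), \sigma^2\psi(h)^2 \bm{I}_d)$, so Definition~\ref{rdp} requires us to upper bound $D_\alpha[\mathcal{M}(D)\|\mathcal{M}(D')]$ uniformly over all such adjacent pairs.

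The first step is to establish the closed-form identity
\begin{equation*}
D_\alpha\bigl(\mathcal{N}(\mu_1,\Sigma)\,\|\,\mathcal{N}(\mu_2,\Sigma)\bigr) \;=\; \frac{\alpha}{2}\,(\mu_1-\mu_2)^{\!\top}\Sigma^{-1}(\mu_1-\mu_2).
\end{equation*}
I would derive this by writing the R\'enyi divergence as $\frac{1}{\alpha-1}\log \int p(x)^\alpha q(x)^{1-\alpha}\,dx$, exploiting rotational invariance to rotate coordinates so that the mean difference is aligned with a single axis, and thereby reducing the $d$-dimensional integral to a one-dimensional Gaussian integral. Completing the square in the exponent produces a Gaussian in $x$ whose integral equals $1$, and the remaining scalar factor evaluates cleanly to $\exp\!\bigl(\frac{\alpha(\alpha-1)}{2}\|\mu_1-\mu_2\|_2^2/\sigma_{\text{scalar}}^2\bigr)$; taking the logarithm and dividing by $\alpha-1$ yields the stated identity.

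The second step is a direct substitution. Setting $\mu_1 = h(D)$, $\mu_2 = h(D')$, and $\Sigma = \sigma^2\psi(h)^2\bm{I}_d$, the identity gives
\begin{equation*}
D_\alpha[\mathcal{M}(D)\|\mathcal{M}(D')] \;=\; \frac{\alpha\,\|h(D)-h(D')\|_2^2}{2\,\sigma^2\,\psi(h)^2}.
\end{equation*}
Applying Definition~\ref{def:sensitivity}, namely $\|h(D)-h(D')\|_2 \le \psi(h)$, and taking the supremum over adjacent pairs $D \sim D'$ immediately yields the bound $\alpha/(2\sigma^2)$, which is exactly the RDP parameter claimed.

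The only step that requires real care is the Gaussian integral computation; the main obstacle is to handle the complex-looking cross terms carefully when completing the square, so as to verify that the normalization factor in front of the integral matches the one inside, leaving only the quadratic form in $\mu_1-\mu_2$. Once the scalar case is done, the multivariate extension follows either by the rotational-invariance argument above or by diagonalizing $\Sigma$ and reducing to a product of independent one-dimensional R\'enyi divergences, which add additively. The final sensitivity substitution is then a one-line conclusion.
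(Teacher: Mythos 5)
Your proof is correct: the closed-form R\'enyi divergence between two Gaussians with common covariance, $D_\alpha(\mathcal{N}(\mu_1,\Sigma)\|\mathcal{N}(\mu_2,\Sigma)) = \tfrac{\alpha}{2}(\mu_1-\mu_2)^{\top}\Sigma^{-1}(\mu_1-\mu_2)$, together with the sensitivity bound $\|h(D)-h(D')\|_2\le\psi(h)$, gives exactly $\alpha/2\sigma^2$. The paper states this lemma without proof, citing Mironov's RDP paper, and your derivation (complete the square, reduce to one dimension, substitute the sensitivity) is precisely the standard argument from that reference, so there is nothing to add.
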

\begin{lemma}[RDP for Subsampling Mechanism \cite{wang2019subsampled,wang2019efficient}]\label{lemma:rdp_sub}
For a Gaussian mechanism $\mathcal{M}$ and any m-datapoints dataset $D$, define $\mathcal{M}\circ \textit{SUBSAMPLE}$ as 1) subsample without replacement $B$ datapoints from the dataset (denote $q=B/m$ as the sampling ratio); and 2) apply $\mathcal{M}$ on the subsampled dataset as input. Then if $\mathcal{M}$ satisfies $(\alpha,\rho(\alpha))$-RDP with respect to the subsampled dataset for all integers $\alpha \geq 2$, then the new randomized mechanism $\mathcal{M}\circ \textit{SUBSAMPLE}$ satisfies $(\alpha,\rho^\prime(\alpha))$-RDP w.r.t $D$, where
\begin{multline*}
\rho^\prime(\alpha) \leq \frac{1}{\alpha-1}\log \bigg(1 + q^2 {\alpha \choose 2}\min\{4(e^{\rho(2)}-1), 2e^{\rho(2)}\}\\ 
+ \sum_{j=3}^{\alpha}q^j{\alpha \choose j}2e^{(j-1)\rho(j)}\bigg).
\end{multline*}
If $\sigma^2 \geq 0.7$ and $\alpha \leq (2/3)\sigma^2\psi^2(h) \log(1/q\alpha(1+\sigma^2)) + 1$, $\mathcal{M}\circ \textit{SUBSAMPLE}$ satisfies $(\alpha, 3.5 q^2 \alpha / \sigma^2)$-RDP. 
\end{lemma}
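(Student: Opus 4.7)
The plan is to bound the R\'enyi divergence of the subsampled mechanism by exploiting the mixture structure induced by subsampling without replacement. Without loss of generality, assume the adjacent datasets are $D$ and $D' = D \cup \{x^*\}$. For any subsample drawn from $D'$, I would construct a coupling with a subsample from $D$ such that with probability $1-q$ the point $x^*$ is not selected and the two subsamples coincide, and with probability $q$ the point $x^*$ is selected and the subsamples differ in exactly one record. Let $P_0$ denote the output distribution of $\mathcal{M}$ when $x^*$ is excluded, and $P_1$ when $x^*$ is included. Then on $D'$ the output of $\mathcal{M}\circ\textit{SUBSAMPLE}$ is the mixture $P = (1-q)P_0 + qP_1$, while on $D$ it reduces to $Q = P_0$. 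The task becomes bounding $D_\alpha(P\|Q)$.

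The next step is to expand $\mathbb{E}_Q\!\left[(P/Q)^\alpha\right] = \mathbb{E}_{P_0}\!\left[\bigl(1 + q(P_1/P_0 - 1)\bigr)^\alpha\right]$ by the binomial theorem. This yields $1 + \sum_{j=2}^{\alpha}\binom{\alpha}{j} q^j \mathbb{E}_{P_0}[(P_1/P_0 - 1)^j]$, since the $j=1$ term vanishes. Each moment $\mathbb{E}_{P_0}[(P_1/P_0 - 1)^j]$ can be decomposed into a signed sum of $\mathbb{E}_{P_0}[(P_1/P_0)^\ell]$ for $\ell \leq j$, and each such quantity is controlled by the $(\ell,\rho(\ell))$-RDP guarantee of $\mathcal{M}$ on the subsampled dataset. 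Careful bookkeeping of these moments, combined with the elementary inequality $e^{\rho(2)} - 1 \leq (P_1/P_0 - 1)$-moment bounds, yields the $\min\{4(e^{\rho(2)}-1), 2e^{\rho(2)}\}$ factor on the quadratic term and the $2e^{(j-1)\rho(j)}$ factor on the higher-order terms, producing the first inequality stated in the lemma.

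For the second part of the lemma, I would specialize to the Gaussian mechanism and invoke Lemma \ref{lemma:gaussian_mechanism} to substitute $\rho(j) = j/(2\sigma^2)$. The quadratic term then contributes on the order of $q^2 \alpha(\alpha-1)/\sigma^2$, while the $j$-th term scales like $q^j \binom{\alpha}{j} e^{j(j-1)/(2\sigma^2)}$. The condition $\alpha \leq (2/3)\sigma^2\psi^2(h)\log(1/(q\alpha(1+\sigma^2))) + 1$ together with $\sigma^2 \geq 0.7$ is engineered so that the growth factor $e^{j(j-1)/(2\sigma^2)}$ is outweighed by the shrinkage $q^j$, making the tail sum over $j\geq 3$ a small multiple of the quadratic term. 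Applying $\log(1+x) \leq x$ and dividing by $\alpha - 1$ then consolidates the remainder into the explicit constant $3.5 q^2 \alpha/\sigma^2$.

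The main obstacle is this tail estimate for the Gaussian case: one needs a geometric-type comparison of consecutive terms of the binomial sum and a careful tracking of constants to land precisely at $3.5$. Everything else, including the mixture decomposition and the binomial expansion, is essentially algebraic manipulation once the coupling and the per-moment RDP bounds are in place.
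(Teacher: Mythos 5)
This lemma is not proved in the paper: it is imported verbatim from \cite{wang2019subsampled,wang2019efficient} and used as a black box in the privacy accounting of Theorem~\ref{thm:privacy_loss}, so there is no in-paper proof to compare against. Judged on its own, your sketch correctly reconstructs the argument of the cited works: the coupling that reduces subsampling without replacement to the mixture $P=(1-q)P_0+qP_1$ according to whether the differing record enters the subsample, the binomial expansion of $\mathbb{E}_{P_0}\bigl[(1+q(P_1/P_0-1))^{\alpha}\bigr]$ with the vanishing first-order term, and the control of the $j$-th centered moments through the $(j,\rho(j))$-RDP of the base mechanism are exactly the ingredients of that proof.

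Two caveats. First, your reduction only handles the direction of adjacency in which the mechanism run on the smaller dataset is exactly $P_0$; RDP requires a bound for both orderings, so one must also control $D_{\alpha}\bigl(P_0\,\|\,(1-q)P_0+qP_1\bigr)$, and for the replace-one relation a ternary mixture. It is this case analysis, not bookkeeping of the one-sided moments, that produces the $\min\{4(e^{\rho(2)}-1),\,2e^{\rho(2)}\}$ coefficient, and your sentence relating $e^{\rho(2)}-1$ to the moment bounds is garbled: the correct identity is $\mathbb{E}_{P_0}[(P_1/P_0-1)^2]=e^{D_2(P_1\|P_0)}-1\leq e^{\rho(2)}-1$. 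Second, the quantitative tail estimate that collapses the series into the closed form $3.5q^2\alpha/\sigma^2$ under the stated conditions on $\sigma^2$ and $\alpha$ is the genuinely technical content of \cite{wang2019efficient}; your plan names it as the main obstacle but does not carry it out, so as written the proposal is a faithful roadmap rather than a complete proof. (As a minor aside, the condition as restated in the lemma carries a spurious $\psi^2(h)$ factor relative to the source, which your sketch inherits without comment.)
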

\begin{lemma}[RDP Composition \cite{mironov2017renyi}]\label{lemma:rdp_comr} 
For randomized mechanisms $\mathcal{M}_1$ and $\mathcal{M}_2$ applied on dataset $D$, if $\mathcal{M}_1$ satisfies $(\alpha, \rho_1)$-RDP and $\mathcal{M}_2$ satisfies $(\alpha, \rho_2)$-RDP, then their composition $ \mathcal{M}_1 \circ \mathcal{M}_2$ satisfies $(\alpha, \rho_1+\rho_2)$-RDP.
\end{lemma}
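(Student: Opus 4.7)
The plan is to work directly from the definition of R\'enyi $\alpha$-divergence applied to the joint output of the composed mechanism, and then exploit a chain-rule factorization of the joint densities. Let $p_i, q_i$ denote the densities of $\mathcal{M}_i(D)$ and $\mathcal{M}_i(D')$ with respect to a common dominating measure, and factor the joint density of $(\mathcal{M}_1(D),\mathcal{M}_2(D))$ as $p(x,y)=p_1(x)\,p_{2\mid 1}(y\mid x)$, with an analogous factorization for $q$. The likelihood ratio inside the definition of $D_\alpha$ then splits as the product of a round-1 ratio and a conditional round-2 ratio:
\[
\frac{p(x,y)}{q(x,y)}=\frac{p_1(x)}{q_1(x)}\cdot\frac{p_{2\mid 1}(y\mid x)}{q_{2\mid 1}(y\mid x)}.
\]

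Next, I would raise this identity to the power $\alpha$ and take expectation under $q$, using the tower property to split the double integral into an outer expectation over $x\sim q_1$ and an inner expectation over $y\sim q_{2\mid 1}(\cdot\mid x)$. The inner expectation is exactly the quantity whose logarithm defines the conditional R\'enyi divergence between $\mathcal{M}_2$ on the two adjacent datasets; by the $(\alpha,\rho_2)$-RDP hypothesis on $\mathcal{M}_2$ it is bounded by $e^{(\alpha-1)\rho_2}$ pointwise in $x$. Pulling this uniform bound out of the outer integral leaves $\mathbb{E}_{x\sim q_1}\!\bigl[(p_1(x)/q_1(x))^{\alpha}\bigr]\le e^{(\alpha-1)\rho_1}$ by the $(\alpha,\rho_1)$-RDP hypothesis on $\mathcal{M}_1$.

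Multiplying the two bounds, taking logarithms, and dividing by $\alpha-1$ then produces
\[
D_\alpha\bigl[(\mathcal{M}_1,\mathcal{M}_2)(D)\,\big\|\,(\mathcal{M}_1,\mathcal{M}_2)(D')\bigr]\le \rho_1+\rho_2,
\]
which is precisely the $(\alpha,\rho_1+\rho_2)$-RDP statement for the composition.

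The step I expect to attract the most scrutiny is the adaptive one: when $\mathcal{M}_2$'s randomness depends on the output of $\mathcal{M}_1$, the hypothesis $(\alpha,\rho_2)$-RDP must be read as holding uniformly for every possible realization $x$ of the first-round output, so that the inner expectation can be bounded pointwise before the outer expectation is taken. In the non-adaptive special case the joint density is a product and independence makes the factorization immediate, so the only real content of the lemma is verifying that the conditional/uniform interpretation of RDP is the right one to plug into the tower argument; once that is granted, the rest of the manipulation is routine algebra on the moment generating function of the log-likelihood ratio.
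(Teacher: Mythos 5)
Your proof is correct and is essentially the standard argument for (adaptive) RDP composition from the cited reference \cite{mironov2017renyi}: chain-rule factorization of the joint likelihood ratio, a pointwise bound on the inner conditional R\'enyi moment by $e^{(\alpha-1)\rho_2}$, and then the bound $e^{(\alpha-1)\rho_1}$ on the outer expectation. The paper itself states this lemma as a known result without proof, so there is no alternative argument to compare against; your observation that the $(\alpha,\rho_2)$-RDP hypothesis must hold uniformly over the first mechanism's output is exactly the right caveat for the adaptive case.
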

\section{System Modeling and Problem Formulation}\label{sec:sys-mod}

In this section, we first present the problem formulation of FL and the attack model, then describe the classic method to solve the problem and its privacy-preserving variant. 

\subsection{Problem Formulation}\label{subsec:form_fed}

A typical FL system consists of $n$ clients (e.g., smartphones or IoT devices) and a central server (e.g., the cloud), as shown in Fig.~\ref{fig:fl_system}. Each client $i\in[n]$ has a local dataset $D_i$, and those clients collaboratively train a global model $\bm{\theta} \in \mathbb{R}^d$ based on their collective datasets under the orchestration of the central server. The goal of FL is to solve the following optimization problem:
\begin{align}\label{fed_obj}
\min_{\bm{\theta}  \in \mathbb{R}^d} f(\bm{\theta}) := \frac{1}{n}\sum_{i\in[n]} f_{i}(\bm{\theta}),
\end{align}
where $f_{i}(\bm{\theta}) = \mathbb{E}_{z \in  D_i}[l(\bm{\theta};z)]$ is the local loss function of client $i$, $z$ represents a datapoint sampled from ${D}_i$, and $l(\bm{\theta};z)$ denotes the loss of model $\bm{\theta}$ on datapoint $z$. For $i\neq j$, the data distributions of ${D}_i$ and ${D}_j$ may be different. 

\begin{figure}[t]
\vspace*{-10pt}
\centering
\includegraphics[width=3in]{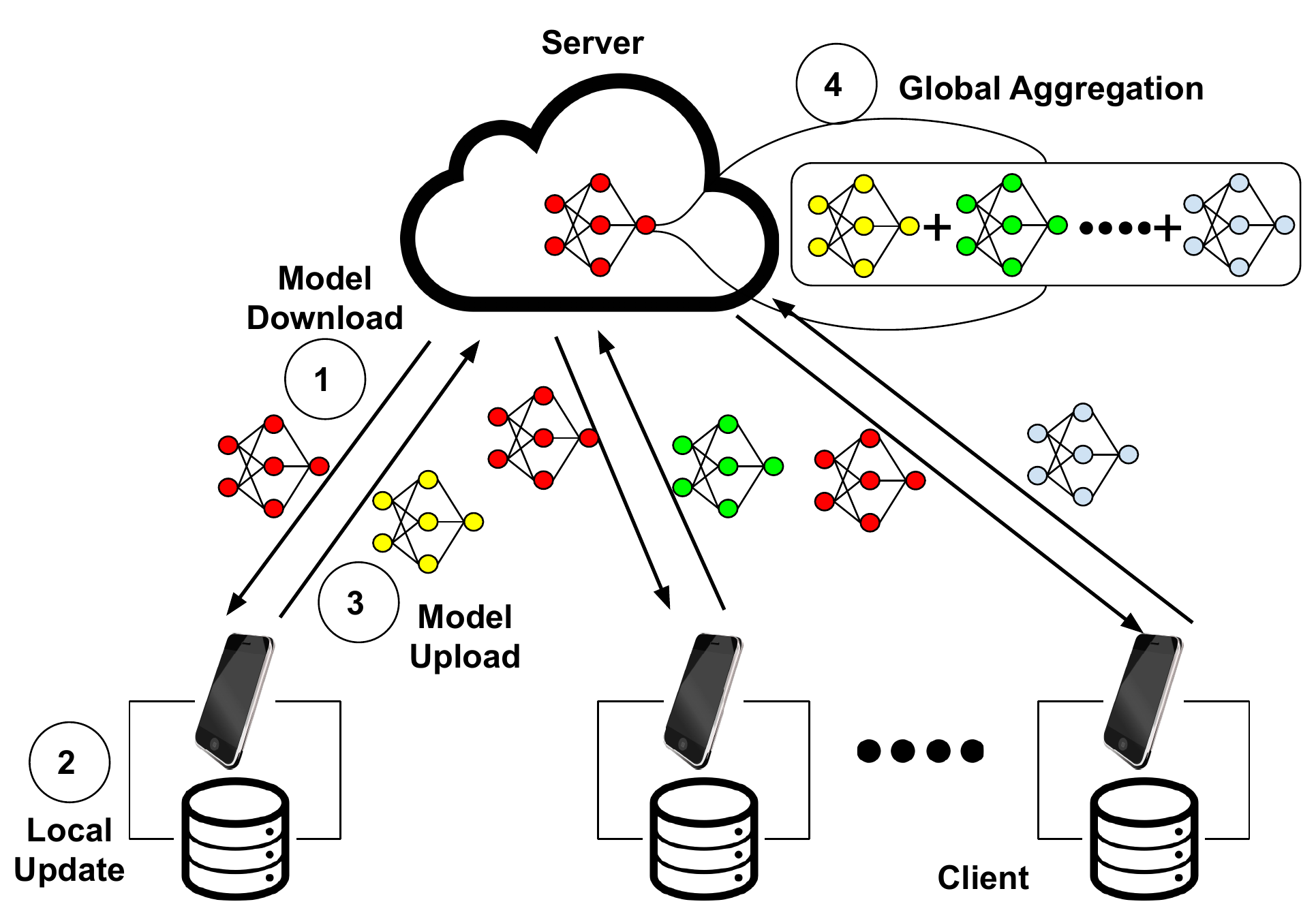}
\caption{An exemplary FL system. 
}
\label{fig:fl_system}

\end{figure}
\subsection{Attack Model}
The adversary can be the ``honest-but-curious'' server or clients in the system. The adversary will honestly follow the designed training protocol but is curious about a target client’s private data and wants to infer it from the shared messages. Furthermore, some clients can collude with the server or each other to infer private information about a specific victim client. Besides, the adversary could also be the passive outside attacker. These attackers can eavesdrop on all the shared messages during the execution of the training but will not actively inject false messages into or interrupt message transmissions.

\subsection{Achieving Client-level DP in FL}\label{subsec:fedavg}

As the classic and most widely-used algorithm in the FL setting, Federated Averaging (FedAvg) \cite{mcmahan2017communication} solves \eqref{fed_obj} by running multiple iterations of stochastic gradient descent (SGD) in parallel on a subset of clients and then averaging the resulting local model updates at a central server periodically. Compared with distributed SGD, FedAvg is shown to achieve the same model accuracy with fewer communication rounds. Specifically, FedAvg involves $T$ communication rounds, and each round can be divided into four stages as shown in Fig.~\ref{fig:fl_system}. First, at the beginning of round $t \in \{0,\dots,T-1\}$, the server randomly selects a subset $\mathcal{W}^t$ of $r$ clients and sends them the latest global model $\bm{\theta}^{t}$ to perform local computations. Second, each client $i \in \mathcal{W}^t$ runs $\tau$ iterations of SGD on its local dataset to update its local model. Let $\bm{\theta}_i^{t,s}$ denote client $i$'s model after the $s$-th local iteration at the $t$-th communication round where $s \in [0, \tau-1]$. By model initialization, we have $\bm{\theta}_i^{t, 0} = \bm{\theta}^{t}$. Then the update rule at client $i$ is represented as
\begin{equation}\label{eqn:local-sgd}
\bm{\theta}_i^{t,s+1} = \bm{\theta}_i^{t,s} - \eta_l \bm{g}_i^{t,s}, \forall s = 0, \ldots, \tau - 1,
\end{equation}
where $\eta_l$ is the local learning rate and $\bm{g}_i^{t,s}:=(1/B)\sum_{z\in\xi_i^{t,s}}\nabla l(\bm{\theta}_i^{t,s},z)$ represents the stochastic gradient over a mini-batch $\xi_i^{t,s}$ of $B$ datapoints sampled from $D_i$. Third, the client $i$ uploads its model update $\Delta_i^t:=\bm{\theta}^{t}-\bm{\theta}_i^{t,\tau} $ to the server. Fourth, after receiving all the local model updates, the server updates the global model by
\begin{equation}\label{eqn:server_avg}
\bm{\theta}^{t+1} = \bm{\theta}^{t} - \frac{1}{r} \sum_{i\in\mathcal{W}^t} \Delta_i^t.
\end{equation}
The same procedure repeats at the next round $t + 1$ thereafter until satisfying certain convergence criteria. 

In FedAvg, clients repeatedly upload the local model updates of large dimension $d$ (e.g., millions of model parameters for DNNs) to the server and download the newly-updated global model from the server many times in order to learn an accurate global model. Since the bandwidth between the server and clients could be rather limited, especially for uplink transmissions, the overall communication cost could be very high. Furthermore, although FedAvg can avoid direct information leakage by keeping the local dataset on the client, the model updates shared at each round can still leak private information about the local dataset, as demonstrated in recent advanced attacks such as model inversion attack \cite{fredrikson2015model} and membership inference attack \cite{shokri2017membership}. 

As the commonly-used privacy protection technique for machine learning, DP mechanisms have been used to mitigate the above-mentioned privacy leakage in FedAvg. According to Section~\ref{sec:pre}, the DP definitions apply to a range of different granularities, depending on how the adjacent datasets are defined. In this paper, we define the adjacent datasets by adding or removing the entire local data of a client and aim to protect whether one client participates in training or not in FL, resulting in the \emph{client-level DP} \cite{mcmahan2017learning}. In comparison, the commonly used privacy notion in standard non-federated learning DP is \emph{record-level DP} \cite{abadi2016deep}, where the adjacent datasets are defined by adding or removing a single training example of a client, and only the privacy of one training example is protected. Therefore, client-level DP is stronger than record-level DP and has been shown to be more relevant to the cross-device federated learning we considered, where there are a large number of participating devices, and each device can contribute multiple data records\cite{mcmahan2017learning}.

To provide client-level DP in FL without a fully trusted server, prior studies \cite{mcmahan2017learning,geyer2017differentially} have proposed the model perturbation mechanism to prevent the privacy leakage from model updates in FedAvg, known as DP-FedAvg. As the pseudo-code of DP-FedAvg given in Algorithm~\ref{algorithm-dpfedavg}, DP-FedAvg has the following changes to FedAvg at each FL round: 1) Each client's model update is clipped to have a bounded $\ell_2$-norm (line~\ref{dp-fedavg-clip}); 2) Gaussian noise is added to the clipped model update at each client (line~\ref{dp-fedavg-noise}); 3) Final local model updates are encrypted following a secure aggregation protocol (e.g., \cite{bonawitz2017practical,bonawitz2019towards}) and sent to the server for aggregation (line~\ref{dp-fedavg-encrypt}).

\begin{algorithm}[htpb]
\caption{DP-FedAvg}\label{algorithm-dpfedavg}
\begin{algorithmic}[1]
\REQUIRE number of selected clients per round $r$, number of training rounds $T$, local update period $\tau$, local learning rate $\eta_l$, clipping threshold $C$, noise multiplier $\sigma$. 
\Serverexe
\STATE Initialize $\bm{\theta}^0\in\mathbb{R}^d$
\FOR{$t=0$ to $T-1$}
    \STATE Sample a set of $r$ clients uniformly at random without replacement, denoted by  $\mathcal{W}^t \subseteq[n]$
    \STATE Broadcast $\bm{\theta}^t$ to all clients in $\mathcal{W}^t$
    \FOR{each clients $i \in \mathcal{W}^t$ \textbf{in parallel}}
        \STATE $\textbf{y}_i^t \leftarrow \textbf{ClientUpdate}(\bm{\theta}^t)$ 
    \ENDFOR
    \STATE $\bm{\theta}^{t+1} \leftarrow \bm{\theta}^t - (1/r) \sum_{i\in\mathcal{W}^t} \textbf{y}_i^{t}$ \label{dp-fedavg-decrypt}
    \ENDFOR
    \returnx $\bm{\theta}^{T}$
\item[]
\Clientexedpfed
    \STATE $\bm{\theta}_i^{t, 0} \gets \bm{\theta}^t$
    \FOR{$s=0$ to $\tau-1$}
        \STATE Compute a mini-batch stochastic gradient $\bm{g}_i^{t,s}$
        \STATE ${\bm{\theta}}_i^{t, s+1} \leftarrow \bm{\theta}_i^{t, s} - \eta_l   \bm{g}_i^{t,s}$
    \ENDFOR
    \STATE $\hat{\Delta}_i^t \leftarrow \bm{\theta}^{t} -{\bm{\theta}}_i^{t,\tau}$
     \STATE $\bar{\Delta}_i^t \leftarrow \hat{\Delta}_i^t \times \min(1, C/\|\hat{\Delta}_i^t\|_2)$ \label{dp-fedavg-clip}
    \STATE $\Delta_i^t \leftarrow \bar{\Delta}_i^t + \mathcal{N}(0, (C^2\sigma^2/r) \cdot \bm{I}_d) $ \label{dp-fedavg-noise}
    \STATE Encrypt $\Delta_i^t$ and send it to the server via secure aggregation \label{dp-fedavg-encrypt}
\end{algorithmic}
\end{algorithm}

Note that the use of secure aggregation is a common practice in the literature to achieve client-level DP in FL, and the design of a new secure aggregation protocol is out of the scope of this paper. Secure aggregation enables the server to learn just an aggregate function of the clients' local model updates, typically the sum, and nothing else, so the Gaussian noise is added to prevent privacy leakage from the sum of local model updates. Specifically, in secure aggregation, clients generate randomly sampled zero-sum mask vectors locally by working in the space of integers modulo $m$ and sampling the elements of the mask uniformly from $\mathbb{Z}_m$. When the server computes the modular sum of all the masked updates, the masks cancel out, and the server obtains the exact sum of local model updates. As with the existing works in FL \cite{goryczka2013secure,truex2019hybrid,valovich2017computational}, we ignore the finite precision and modular summation arithmetic associated with secure aggregation in this paper, noting that one can follow the strategy in \cite{bonawitz2019federated} to transform the real-valued vectors into integers for minimizing the approximation error of recovering the sum.

Although we can improve privacy and achieve client-level DP in FL by adding Gaussian noise locally and using secure aggregation, the resulting accuracy of the trained model is often low due to the significant intensity of added Gaussian noise. Moreover, communication cost has never been considered in those studies. This motivates us to develop a new differentially private FL scheme that can maintain high model accuracy while reducing the communication cost.

\section{Fed-SMP: Federated Learning with Sparsified Model Perturbation}\label{subsec:Fed-SMP}



\begin{algorithm*}[htpb]
\vspace*{-10pt}
\caption{Fed-SMP: Federated Learning with Sparsified Model Perturbation}\label{algorithm}
\begin{multicols}{2}
\begin{algorithmic}[1]
\REQUIRE number of selected clients per round $r$, number of training rounds $T$, local update period $\tau$, local learning rate $\eta_l$, clipping threshold $C$, noise multiplier $\sigma$, compression parameter $k$, sparsifier $\spar$ ($\randk_k$ or $\topk_k$). 
\item[]
\Serverexe
\STATE Initialize $\bm{\theta}^0\in\mathbb{R}^d$
\FOR{$t=0$ to $T-1$}
    \STATE Sample a set of $r$ clients uniformly at random without replacement, denoted by $\mathcal{W}^t \subseteq[n]$ \label{alg1:ln_1}
    \IF{$\spar$ is $\randk_k$}
        \STATE $\bm{m}^t \leftarrow \textbf{SelectRandk}(\bm{\theta}^t,k)$
    \ELSIF{$\spar$ is $\topk_k$}
        \STATE $\bm{m}^t \leftarrow \textbf{SelectTopk}(\bm{\theta}^t,k)$
    \ENDIF
    \STATE Broadcast $\bm{\theta}^t$ and $\bm{m}^t$ to all clients in $\mathcal{W}^t$ \label{alg1:ln_2}
    \FOR{each clients $i \in \mathcal{W}^t$ \textbf{in parallel}}
        \STATE $\textbf{y}_i^t \leftarrow \textbf{ClientUpdate}(\bm{\theta}^t, \bm{m}^t, \spar)$ 
    \ENDFOR
    \STATE $\bm{\theta}^{t+1} \leftarrow \bm{\theta}^t - (1/r) \sum_{i\in\mathcal{W}^t} \textbf{y}_i^{t}$ \label{alg1:ln_aggre}
    \ENDFOR
    \returnx $\bm{\theta}^{T}$
\item[]
\Selectrandk
    \STATE Select a random set of $k$ coordinates of $\bm{\theta}^t$ and create a corresponding mask vector $\bm{m}^t \in \{0, 1\}^d$
    \returnx $\bm{m}^t$ \label{alg1:ln_randk}
\item[]
\Selecttopk
\REQUIRE public dataset ${D}_p$, update period $\tau_p$
    \STATE $\bm{\theta}_p^0 \leftarrow \bm{\theta}^t$ \label{alg1:ln_topk_1}
    \FOR{$s=0$ to $\tau-1$}
        \STATE Compute a mini-batch stochastic gradient $\bm{g}_p^{s}$ over ${D}_p$
        \STATE ${\bm{\theta}}_p^{s+1} \leftarrow \bm{\theta}_p^{s} - \eta_l \bm{g}_p^{s}$
    \ENDFOR
    \STATE $\Delta_p \leftarrow  \bm{\theta}^{t}- {\bm{\theta}}_p^{\tau}$
    \STATE Select the top $k$ coordinates of $\abs{\Delta_p}$ and create a corresponding mask vector $\bm{m}^t \in \{0, 1\}^d$
    \returnx $\bm{m}^t$ \label{alg1:ln_topk_2}
\item[]
\Clientexe
    \STATE $\bm{\theta}_i^{t, 0} \gets \bm{\theta}^t$ \label{alg1:ln_3}
    \FOR{$s=0$ to $\tau-1$}
        \STATE Compute a mini-batch stochastic gradient $\bm{g}_i^{t,s}$
        \STATE ${\bm{\theta}}_i^{t, s+1} \leftarrow \bm{\theta}_i^{t, s} - \eta_l   \bm{g}_i^{t,s}$
    \ENDFOR \label{alg1:ln_4}
    \STATE $\Delta_i^t \leftarrow \textbf{DP-spar}(\bm{\theta}^{t} -{\bm{\theta}}_i^{t,\tau}, \bm{m}^t, \spar)$
    %
    \STATE Encrypt $\Delta_i^t$ and send it to the server via secure aggregation \label{alg1:ln_enc}
\item[]
\DPspar
    \IF{$\spar$ is $\randk_k$}  \label{alg1:ln_dpspar_1}
        \STATE $\Delta^\prime \leftarrow \frac{d}{k} \times \Delta \odot \bm{m}^t$ \label{alg1:ln_scale}
    \ELSIF{$\spar$ is $\topk_k$}
        \STATE $\Delta^\prime \leftarrow \Delta \odot \bm{m}^t$
    \ENDIF \label{alg1:ln_dpspar_0}
    \STATE $\hat{\Delta} \leftarrow \Delta^\prime \times \min(1, C/\|\Delta^\prime\|_2)$ \label{alg1:ln_clip}
    \returnx $\hat{\Delta} + (\mathcal{N}(0, (C^2\sigma^2/r) \cdot \bm{I}_d) \odot \bm{m}^t) $ \label{alg1:ln_dpspar_2}
\end{algorithmic}
\end{multicols}
\vspace*{-5pt}
\end{algorithm*}
 
In this section, we develop a new FL scheme called Fed-SMP to provide client-level DP with high model accuracy while improving communication efficiency at the same time. To ensure easy integration into existing packages/systems, Fed-SMP follows the same overall procedure of FedAvg as depicted in Fig.~\ref{fig:fl_system} and employs a novel integration of Gaussian mechanism and sparsification in the local update stage. This guarantees that each client's shared local model update is both sparse and differentially private. The two specific sparsifiers considered in Fed-SMP are defined as follows: 

\begin{definition}[Random and Top-$k$ Sparsifiers]\label{def:sparsifier}
For a parameter $1\leq k\leq d$ and vector $\mathbf{x}\in\mathbb{R}^d$, the random sparsifier $\randk_k:\mathbb{R}^d \times \Omega_k \rightarrow \mathbb{R}^d$ and top-$k$ sparsifier $\topk_k:\mathbb{R}^d \rightarrow \mathbb{R}^d$  are defined as
\begin{align}
&[\randk_k(\mathbf{x},\omega)]_j:=
\begin{cases} 
[\mathbf{x}]_j, & \text{if } j\in\omega\\ 
0, & \text{otherwise}
\end{cases},
\\
&[\topk_k(\mathbf{x})]_j:=
\begin{cases} 
[\mathbf{x}]_{\pi(j)}, & \text{if } j\leq k\\ 
0, & \text{otherwise}
\end{cases},
\end{align}
where $\Omega_k = {[d] \choose k}$ denotes the set of all $k$-element subsets of $[d]$, $\omega$ is chosen uniformly at random, i.e., $\omega\sim_{u.a.r}\Omega_k$, and $\pi$ is a permutation of $[d]$ such that $|[\mathbf{x}]_{\pi(j)}| \geq |[\mathbf{x}]_{\pi(j+1)}|$ for $j\in[1,d-1]$.
\end{definition}

In Fed-SMP with both $\randk_k$ and $\topk_k$ sparsifier, only $k$ coordinates of the local model update will be sent out. Generally speaking, $\randk_k$ sparsifier may discard coordinates that are actually important, which inevitably degrades model accuracy when $k$ is small. On the other hand, $\topk_k$ sparsifier keeps the coordinates with the largest magnitude and can achieve higher model accuracy for small $k$. This seems to indicate that $\topk_k$ is always the better choice. However, the two sparsifiers have different privacy implications. For $\randk_k$ sparsifier, since the set of selected coordinates $\omega$ is chosen uniformly at random, the coordinates themselves are not data-dependent and thus do not contain any private information of client data. On the other hand, the set of selected coordinates for $\topk_k$ sparsifier depends on the values of model parameters and hence contains private information of client data and cannot be used like that of $\randk_k$ sparsifier. It is worth noting that although $\randk_k$ sparsifier is a randomized mechanism, it does not provide any privacy guarantee by itself in terms of DP and needs to be combined with Gaussian mechanism for rigorous DP guarantee.

Using the above sparsifiers for client-level DP in FL has new challenges. As mentioned before, secure aggregation is a key privacy-enhancing technique to achieve client-level DP in practical FL systems. However, the naive application of $\randk_k$ or $\topk_k$ sparsifier to the local model update of each client typically results in a different set of $k$ coordinates for each client, preventing us from only encrypting the $k$ selected coordinates of each client in the secure aggregation protocol. 
%
%
%
One can apply secure aggregation to all the $d$ coordinates, but the communication efficiency benefit of sparsification will get lost. In the following, we design new $\randk_k$ and $\topk_k$ sparsifiers in Fed-SMP that are compatible with secure aggregation. Specifically, we let the selected clients keep the same set of $k$ active coordinates at each round; therefore, those clients can transmit the sparsified model update directly using the secure aggregation protocol and save the communication cost. 

The pseudo-code for the proposed Fed-SMP is provided in Algorithm~\ref{algorithm}. At the beginning of round $t$, the server randomly selects a set $\mathcal{W}^t$ of $r$ clients and broadcasts to them the current global model $\bm{\theta}^t$ and mask vector $\bm{m}^t \in \{0,1\}^d$ (lines~\ref{alg1:ln_1}-\ref{alg1:ln_2}). The $j$-th coordinate of mask vector $\bm{m}^t$ equals to 1 if that coordinate is selected by the sparsifier at round $t$ and 0 otherwise. In Fed-SMP with $\randk_k$ sparsifier, the $k$ coordinates are selected randomly from $[d]$ by the server (i.e., the procedure $\textbf{SelectRandk}(\cdot)$). In Fed-SMP with $\topk_k$ sparsifier, we let the server choose a set of top $k$ coordinates for all clients using a small public dataset $D_p$ at each round, avoiding privacy leakage from the selected coordinates.
The distribution of the public dataset $D_p$ is assumed to be similar to the overall dataset distribution of clients. This assumption is common in the FL literature\cite{papernot2018scalable,alon2019limits, wang2020differentially, sharma2022federated}, where the server is assumed to have a small public dataset for model validation that mimics the overall dataset distribution.
Specifically, at round $t$, the server first performs multiple iterations of SGD similar to \eqref{eqn:local-sgd} on the global model $\bm{\theta}^t$ using the public dataset $D_p$ and obtains the model difference $\Delta_p$. Then, the server selects a set of $k$ coordinates with the largest absolute values in $\Delta_p$ and generates a corresponding mask vector $\bm{m}^t$ (i.e., the procedure $\textbf{SelectTopk}(\cdot)$). 

After receiving the global model $\bm{\theta}^t$ and mask vector $\bm{m}^t$ from the server, each client $i \in \mathcal{W}^t$ initializes its local model to $\bm{\theta}^t$ and runs $\tau$ iterations of SGD to update its local model in parallel (lines~\ref{alg1:ln_3}-\ref{alg1:ln_4}). Then, the client $i$ sparsifies its local model update $\bm{\theta}^t - \bm{\theta}_i^{t, \tau}$ using the mask vector $\bm{m}^t$ (lines~\ref{alg1:ln_dpspar_1}-\ref{alg1:ln_dpspar_0}). The operator $\odot$ in Algorithm~\ref{algorithm} represents the element-wise multiplication. Note that for $\randk_k$ sparsifier, the model update will be scaled by $d/k$ to ensure an unbiased estimation on the sparsified model update (line~\ref{alg1:ln_scale}). Since there is no a priori bound on the size of the sparsified model update $\Delta^\prime$, each client will clip its sparsified model update in $\ell_2$-norm with clipping threshold $C$ so that $\|\hat{\Delta}\|_2 \leq  C$ (line~\ref{alg1:ln_clip}). 
Next, each client perturbs its sparsified model update by adding independent Gaussian noise $\mathcal{N}(0, C^2\sigma^2/r)$ on the $k$ selected coordinates (line~\ref{alg1:ln_dpspar_2}), where $\sigma$ represents the noise multiplier. The noisy sparsified model update $\Delta_i^t$ is then encrypted as an input into a secure aggregation protocol and sent to the server (line~\ref{alg1:ln_enc}). Finally, the server computes the modular sum of all encrypted models to obtain the exact sum of local model updates (i.e.,  $\sum_{i\in \mathcal{W}^t}\textbf{y}_i^t=\sum_{i\in \mathcal{W}^t}\Delta_i^t$) and updates the global model for the next round (line~\ref{alg1:ln_aggre}). 
\section{Main Theoretical Results}\label{sec:main_results}

In this section, we analyze the end-to-end privacy guarantee and convergence results of Fed-SMP. For better readability, we state the main theorems and only give the proof sketches in this section while leaving the complete proofs in the appendix. Before presenting our theoretical results, we make the following assumptions:

\begin{assumption}[Smoothness]\label{assp:smooth}
Each local objective function $f_i:\mathbb{R}^d \rightarrow \mathbb{R}$ is $L$-smooth, i.e., for any $\mathbf{x}, \mathbf{y}\in \mathbb{R}^d$, 
\begin{equation*}
\|\nabla f_i(\mathbf{y})-\nabla f_i(\mathbf{x})\|\leq L \|\mathbf{y}-\mathbf{x}\|, \forall i \in [n]. 
\end{equation*}
\end{assumption}
\begin{assumption}[Unbiased Gradient and Bounded Variance]
\label{assp:bounded_variance}
The stochastic gradient at each client is an unbiased estimator of the local gradient: $\mathbb{E}[\bm{g}_i(\mathbf{x})] = \nabla f_i(\mathbf{x})$, and has bounded variance: $\mathbb{E}[\|\bm{g}_i(\mathbf{x}) - \nabla f_i(\mathbf{x})\|^2] \leq \zeta_i^2, \forall \mathbf{x} \in \mathbb{R}^d, i \in [n]$, where the expectation is over all the local mini-batches. We also denote $\bar{\zeta}^2 := ({1}/{n})\sum_{i=1}^{n}\zeta_i^2$ for convenience. 
%
\end{assumption}
\begin{assumption}[Bounded Dissimilarity]
\label{assp:bounded_divergence}
There exist constants $\beta^2\geq 1, \kappa^2\geq 0$ such that $(1/n)\sum_{i=1}^{n} \|\nabla f_i(\mathbf{x})\|^2 \leq \beta^2 \|(1/n)\sum_{i=1}^{n}\nabla f_i(\mathbf{x})\|^2 + \kappa^2$. If local objective functions are identical to each other, then we have $\beta^2=1$ and $\kappa^2=0$. 
\end{assumption}
Assumptions~\ref{assp:smooth} and \ref{assp:bounded_variance} are standard in the analysis of SGD \cite{bottou2018optimization}, and Assumption~\ref{assp:bounded_divergence} is commonly used in the federated optimization literature \cite{ward2019adagrad,li2019convergence} to capture the dissimilarities of local objectives under non-IID data distribution. 


\subsection{Privacy Analysis}

In this subsection, we provide the end-to-end privacy analysis of Fed-SMP based on RDP. 
Given the fact that the server only knows the sum of local model updates $\sum_{i\in\mathcal{W}^t}\Delta_i^t$ due to the use of secure aggregation, we need to compute the privacy loss incurred from releasing the sum of local model updates. Assume the client sets $\mathcal{W}$ and $\mathcal{W}^\prime$ differ in one client index $c$ such that $\mathcal{W}^\prime := \mathcal{W} \bigcup \{c\}$. For any adjacent datasets $D:=\{D_i\}_{i\in\mathcal{W}}$ and $D^\prime:=\{D_j\}_{j\in\mathcal{W}^\prime} = \{D_i\}_{i\in\mathcal{W}} \bigcup D_c$, according to Definition~\ref{def:sensitivity}, the $\ell_2$-sensitivity of the sum of local model updates is 
\begin{equation*}
\psi_\Delta:= \sup_{D, D^\prime}\left\|\sum_{i\in\mathcal{W}}  {{\Delta}_i^{t}}(D_i) - \sum_{j\in\mathcal{W}^\prime}  {{\Delta}_j^{t}}(D_j)\right\|_2.
\end{equation*}
Due to the clipping, we have $\|{{\Delta}_i^{t}}(D_i)\|_2 \leq C, \forall i \in [n]$, and therefore $\psi_\Delta = \sup_{D, D^\prime}\left\| {{\Delta}_{c}^{t}}(D_{c})\right\|_2 \leq  C$. As the sum of Gaussian random variables is still a Gaussian random variable, the variance of the Gaussian noise added to each selected coordinate of the sum of local model updates is $C^2\sigma^2$. According to Lemmas~\ref{lemma:rdp_dp}--\ref{lemma:rdp_comr}, we compute the overall privacy guarantee of Fed-SMP as follows:

\begin{theorem}[Privacy Guarantee of Fed-SMP]
\label{thm:privacy_loss}
Suppose the client is sampled without replacement with probability $q:=r/n$ at each round. For any $\epsilon < 2\log(1/\delta)$ and $\delta\in(0,1)$, Fed-SMP satisfies $(\epsilon, \delta)$-DP after $T$ communication rounds if
\begin{equation*}
\sigma^2 \geq \frac{7q^2T(\epsilon + 2\log(1/\delta))}{\epsilon^2}.
\end{equation*}
\end{theorem}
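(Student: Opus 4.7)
The plan is to follow the standard Rényi-DP pipeline: bound the per-round RDP of the Gaussian mechanism under client subsampling, compose across $T$ rounds, then convert to $(\epsilon,\delta)$-DP and optimize the order $\alpha$. The first thing I would argue is that the sparsification step does not alter the sensitivity analysis: in both variants the mask $\bm{m}^t$ is generated either by a uniformly random server-side draw (for $\randk_k$) or from the server's public dataset $D_p$ (for $\topk_k$), so $\bm{m}^t$ is independent of any client's private data. Consequently, for the quantity released via secure aggregation at round $t$, namely $\sum_{i\in\mathcal{W}^t}\Delta_i^t$, the $\ell_2$-sensitivity reduces (by the clipping step) to $\psi_\Delta\le C$, exactly as in the un-sparsified analysis. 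The injected noise has total variance $C^2\sigma^2$ per selected coordinate (since each of the $r$ clients adds $\mathcal{N}(0,(C^2\sigma^2/r)\bm{I})$ on the $k$ active coordinates and independent Gaussians sum in variance), so the effective noise multiplier seen by the adversary is $\sigma$.

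With this setup, I would invoke Lemma~\ref{lemma:gaussian_mechanism} to get per-round $(\alpha,\alpha/(2\sigma^2))$-RDP for the Gaussian release, then apply Lemma~\ref{lemma:rdp_sub} (client subsampling without replacement at rate $q=r/n$) to amplify this to $(\alpha, 3.5 q^2 \alpha/\sigma^2)$-RDP, provided the regularity conditions $\sigma^2\ge 0.7$ and $\alpha \le (2/3)\sigma^2\log\bigl(1/(q\alpha(1+\sigma^2))\bigr)+1$ hold at the chosen $\alpha$. Next, Lemma~\ref{lemma:rdp_comr} (RDP composition across the $T$ rounds) yields an overall $(\alpha,\, 3.5 T q^2\alpha/\sigma^2)$-RDP guarantee. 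Finally, Lemma~\ref{lemma:rdp_dp} converts this to $(\epsilon(\alpha),\delta)$-DP with
\begin{equation*}
\epsilon(\alpha) \;=\; \frac{3.5\,T q^2\,\alpha}{\sigma^2} \;+\; \frac{\log(1/\delta)}{\alpha-1}.
\end{equation*}

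The final step is to pick $\alpha$ and solve for $\sigma$. Rather than performing a full optimization, I would simply plug in $\alpha = 1 + 2\log(1/\delta)/\epsilon$, which is well-defined and satisfies $\alpha>2$ because of the hypothesis $\epsilon<2\log(1/\delta)$. Substituting makes the second term equal to $\epsilon/2$, so the inequality $\epsilon(\alpha)\le \epsilon$ becomes $3.5 Tq^2(1 + 2\log(1/\delta)/\epsilon)/\sigma^2 \le \epsilon/2$, which rearranges exactly to $\sigma^2 \ge 7 T q^2(\epsilon + 2\log(1/\delta))/\epsilon^2$, matching the stated bound. The main obstacle is verifying that the chosen $\alpha$ actually lies in the admissibility range of Lemma~\ref{lemma:rdp_sub}; I would discharge this by observing that the stated lower bound on $\sigma^2$ grows like $Tq^2\log(1/\delta)/\epsilon^2$, so $\sigma$ is large in the relevant regime (large $T$, small $\epsilon$) and the logarithmic condition on $\alpha$ is comfortably satisfied. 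A minor secondary check is that the Lemma~\ref{lemma:rdp_sub} bound was stated for integer $\alpha\ge 2$; since the composition-then-conversion argument is monotone in $\alpha$, I would round $\alpha$ up to the next integer with only a constant-factor loss absorbed into the factor $3.5\to$ the already-present slack.
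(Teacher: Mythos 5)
Your proposal is correct and follows essentially the same route as the paper's proof: Gaussian mechanism RDP, subsampling amplification via Lemma~\ref{lemma:rdp_sub}, composition over $T$ rounds, conversion to $(\epsilon,\delta)$-DP, and the same choice $\alpha = 1 + 2\log(1/\delta)/\epsilon$ to obtain the stated bound on $\sigma^2$. If anything, you are slightly more careful than the paper, which simply assumes the admissibility conditions of Lemma~\ref{lemma:rdp_sub} hold, whereas you explicitly note why the chosen $\alpha$ lies in the valid range and address the integer-order caveat.
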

\begin{proof}
After adding noise $\mathcal{N}(0,C^2\sigma^2)$ to the $k$ selected coordinates, the sum of local model updates satisfies $(\alpha, \alpha /2\sigma^2)$-RDP for each client in $\mathcal{W}^t$ at round $t$ by Lemma~\ref{lemma:gaussian_mechanism}. As $r$ clients are uniformly sampled from all clients at each round, the per-round privacy guarantee of Fed-SMP can be further amplified according to the subsampling amplification property of RDP in Lemma~\ref{lemma:rdp_sub}. The overall privacy guarantee of Fed-SMP follows by using the composition property in Lemma~\ref{lemma:rdp_comr} to compute the RDP guarantee after $T$ rounds and Lemma~\ref{lemma:rdp_dp} to convert RDP to $(\epsilon, \delta)$-DP. The details are given in Appendix~\ref{subsec:proof_priv}. 
\end{proof}

\subsection{Convergence Analysis}\label{subsec:convergence}
In this subsection, we provide the convergence result of Fed-SMP under the general non-convex setting in Theorem~\ref{thm:convergence}. 

\begin{theorem}[Convergence result of Fed-SMP]\label{thm:convergence}
Under Assumptions~\ref{assp:smooth}--\ref{assp:bounded_divergence}, assume the local learning rate satisfies $\eta_l\leq \min\{ (1/{24\tau L (\phi_k +1)\beta^2}, {1}/{4\tau L \sqrt{4\beta^2+2}},  1/12\tau L \}$ and $\|\hat{\Delta}\|_2 \leq C$, then the sequence of outputs $\bm{\theta}^t$ generated by Algorithm~\ref{algorithm} satisfies:
\begin{multline}\label{eqn:convergen_result}
\frac{1}{T}\sum_{t=0}^{T-1}\left\|\nabla f(\bm{\theta}^{t})\right\|^2 \leq \frac{8e_0}{T\eta_l\tau}   + {8}{\eta_l\tau L}(3\kappa^2 + 2\bar{\zeta}^2)\\
+  \left({8}\eta_l\tau L(2\kappa^2 + \bar{\zeta}^2) + \zeta^\prime \right)\phi_k   + \frac{4LkC^2\sigma^2}{\eta_l\tau r^2},
\end{multline}
where $e_0:= f({\bm{\theta}}^{0})-f^*$, $\phi_k:=1 - k/d$ and $\zeta^\prime:= {4\bar{\zeta}^2}/{\gamma}$ for $\topk_k$ sparsifier, $\phi_k:=d/k-1 $ and  $\zeta^\prime:=0$ for $\randk_k$ sparsifier, and $f^*$ represents the optimal objective value.

\end{theorem}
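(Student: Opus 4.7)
The plan is to apply the standard $L$-smoothness descent inequality to the global iterate sequence and then decompose the noisy sparsified update into four orthogonal sources of error: (i) the local-SGD client drift that arises from $\tau>1$ local steps, (ii) the sparsification distortion controlled by $\phi_k$, (iii) the injected Gaussian noise, and (iv) the stochastic-gradient variance from mini-batch sampling. Client sampling of $\mathcal{W}^t$ will be folded into (i) and (iv) by taking expectations.

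First I would start from
\begin{equation*}
\mathbb{E} f(\bm{\theta}^{t+1}) \le f(\bm{\theta}^{t}) + \mathbb{E}\bigl\langle \nabla f(\bm{\theta}^t),\, \bm{\theta}^{t+1}-\bm{\theta}^t\bigr\rangle + \tfrac{L}{2}\mathbb{E}\|\bm{\theta}^{t+1}-\bm{\theta}^t\|^2,
\end{equation*}
and write $\bm{\theta}^{t+1}-\bm{\theta}^t=-(1/r)\sum_{i\in\mathcal{W}^t}(\hat\Delta_i^t + \bm{n}_i^t\odot\bm{m}^t)$, where $\hat\Delta_i^t$ is the clipped sparsified local update and $\bm{n}_i^t\sim\mathcal{N}(0,(C^2\sigma^2/r)\bm{I}_d)$. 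Because the Gaussian noises are zero-mean and independent of everything else, they contribute only to the quadratic term, giving exactly $L k C^2\sigma^2/r^2$ per round (only $k$ coordinates are perturbed, variances add through secure aggregation, and the $1/r$ average squares once). The factor $4/(\eta_l\tau)$ in the final expression comes from telescoping after dividing through by $\eta_l\tau/2$.

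Second, I would handle the sparsification by invoking the standard contraction properties of the two operators: for $\randk_k$ with the $d/k$ scaling, $\mathbb{E}_{\omega}[\randk_k(\bm{x})]=\bm{x}$ and $\mathbb{E}_{\omega}\|\randk_k(\bm{x})-\bm{x}\|^2\le (d/k-1)\|\bm{x}\|^2$; for $\topk_k$, $\|\topk_k(\bm{x})-\bm{x}\|^2\le (1-k/d)\|\bm{x}\|^2$. Substituting $\bm{x}=\Delta_i^t$ and splitting
\begin{equation*}
\bigl\langle \nabla f(\bm{\theta}^t),\,\hat\Delta_i^t\bigr\rangle = \bigl\langle \nabla f(\bm{\theta}^t),\,\Delta_i^t\bigr\rangle + \bigl\langle \nabla f(\bm{\theta}^t),\,\hat\Delta_i^t-\Delta_i^t\bigr\rangle
\end{equation*}
lets me isolate a descent term $-\eta_l\tau\|\nabla f(\bm{\theta}^t)\|^2$ (up to drift error) plus a bias term that is zero in expectation for $\randk_k$ but must be bounded via Young's inequality for the biased $\topk_k$. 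The client-drift term $\sum_{s=0}^{\tau-1}\mathbb{E}\|\bm{\theta}_i^{t,s}-\bm{\theta}^t\|^2$ is bounded by the standard recursion from the local-SGD literature, which under Assumptions~\ref{assp:bounded_variance}--\ref{assp:bounded_divergence} yields a bound of order $\eta_l^2\tau^2(\kappa^2+\bar\zeta^2+\beta^2\|\nabla f(\bm{\theta}^t)\|^2)$; the stepsize conditions $\eta_l\le 1/(24\tau L(\phi_k+1)\beta^2)$ etc.\ are precisely what is needed to absorb these $\|\nabla f(\bm{\theta}^t)\|^2$ cross terms back into the descent term on the left-hand side.

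Finally I would telescope from $t=0$ to $T-1$ and divide by $T$, using $f(\bm{\theta}^0)-f^*=e_0$, to obtain \eqref{eqn:convergen_result}. \textbf{Main obstacle.} The hard step is the $\topk_k$ case: because the operator is biased and data-dependent (chosen at the server from a public surrogate $D_p$), the inner product $\langle\nabla f(\bm{\theta}^t),\hat\Delta_i^t-\Delta_i^t\rangle$ does not vanish in expectation and its magnitude depends on how well the public-dataset top-$k$ mask approximates the client's true top-$k$ coordinates. This is where the extra $\zeta'=4\bar\zeta^2/\gamma$ term enters, via a Young-type split that trades the bias against a multiple $\gamma$ of $\|\nabla f\|^2$ and a residual variance term quantifying the mismatch between $\Delta_p$ and the average $\Delta_i^t$. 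Carefully choosing $\gamma$ and rolling all compression-dependent coefficients into a single $\phi_k$ factor is the main bookkeeping challenge; the remaining algebra is routine Young-inequality manipulation.
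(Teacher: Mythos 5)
Your plan follows essentially the same route as the paper's proof: the $L$-smoothness descent inequality, splitting the update into a linear and a quadratic term, exploiting unbiasedness for $\randk_k$ versus a Young-inequality split with parameter $\gamma$ for the biased $\topk_k$, the standard local-SGD drift recursion to bound $\sum_s\mathbb{E}\|\bm{\theta}_i^{t,s}-\bm{\theta}^t\|^2$, and telescoping. The only slight imprecision is your attribution of $\zeta'=4\bar\zeta^2/\gamma$ to the mismatch between the public-dataset mask and the clients' true top-$k$ coordinates: in the paper this term arises from the mini-batch variance $\mathbb{E}\|\frac{1}{n}\sum_i\bm{d}_i^t-\frac{1}{n}\sum_i\bm{h}_i^t\|^2\le\bar\zeta^2$ surviving the Young split (the companion $\|\frac{1}{n}\sum_i\bm{h}_i^t\|^2$ piece is cancelled against the descent term under the side condition $\phi_k\le\gamma$), while the public-versus-client mask discrepancy is assumed away rather than quantified.
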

\begin{proof}
The proof is given in Appendix~\ref{subsec:proof_conve_randk}-\ref{subsec:proof_conve_topk}.
\end{proof}
The convergence bound \eqref{eqn:convergen_result} contains three parts. The first two terms $ {8e_0}/{T\eta_l\tau} + {8}{\eta_l\tau L}(3\kappa^2 + 2\bar{\zeta}^2) $ represent the optimization error bound in FedAvg. The third term $({8}\eta_l\tau L(2\kappa^2 + \bar{\zeta}^2) + \zeta^\prime )\phi_k$ is the \emph{compression error} resulted from applying sparsification on local model updates. The last term ${4LkC^2\sigma^2}/{\eta_l\tau r^2}$ is the \emph{privacy error} resulted from adding DP noise to perturb local model updates. Both compression error and privacy error increase the error floor at convergence. When no sparsification is applied (i.e., $k = d$ and $\phi_k = 0$), the compression error is zero. When no DP noise is added (i.e., $\sigma = 0$), the privacy error is zero. The above result shows an explicit tradeoff between compression error and privacy error in Fed-SMP. As $k$ decreases, the variance of sparsification $\phi_k$ gets larger which leads to a larger compression error, but the privacy error decreases. Therefore, there exists an optimal parameter $k$ that can balance those two errors to minimize the convergence bound. 

\begin{table}[htpb]
\vspace*{-5pt}
\renewcommand{\arraystretch}{1.2}
\caption{Summary of results on Fashion-MNIST dataset.}
\label{tab:fmnist}
\begin{center}
\resizebox{\columnwidth}{!}{\begin{tabular}{c|l|c|c|c}
\hline
\multirow{1}{*}{Compression} & \multirow{2}{*}{Algorithm} & \multicolumn{3}{c}{Performance}  \\\cline{3-5}  
\multirow{1}{*}{ratio}& & {Accuracy ($\%$)} & {Cost (MB)} & { Privacy ($\epsilon$)} \\\cline{1-5} 
\multirow{2}{*}{$p = 0.001$} 
& Fed-SMP-$\topk_k$ & $77.18 \pm 0.48$   & 0.02  & 1.01 \\\cline{2-5} 
& Fed-SMP-$\randk_k$ & $27.16 \pm 10.20$  & 0.02  & 1.01 \\\hline 
\multirow{2}{*}{$p = 0.005$} 
& Fed-SMP-$\topk_k$ & $\mathbf{80.76 \pm 0.27}$ & 0.10  & 1.01 \\\cline{2-5} 
& Fed-SMP-$\randk_k$ & $56.04 \pm 5.22$  & 0.10  & 1.01 \\\hline

\multirow{2}{*}{$p = 0.01$} 
& Fed-SMP-$\topk_k$ &$80.76 \pm 0.47$ & 0.20  & 1.01 \\\cline{2-5} 
& Fed-SMP-$\randk_k$ & $65.61 \pm 1.04$  & 0.20  & 1.01 \\\hline

\multirow{2}{*}{$p = 0.1$} 
& Fed-SMP-$\topk_k$ &  $80.49 \pm 0.23$ & 2.00  & 1.01 \\\cline{2-5} 
& Fed-SMP-$\randk_k$ & $77.59 \pm 0.53$ &2.00  & 1.01 \\\hline 

\multirow{2}{*}{$p = 0.2$} 
& Fed-SMP-$\topk_k$ & $80.16 \pm 0.09$ & 3.99 & 1.01 \\\cline{2-5} 
& Fed-SMP-$\randk_k$ &$79.12 \pm 0.40$  & 3.99  & 1.01 \\\hline

\multirow{2}{*}{$p = 0.4$}  
& Fed-SMP-$\topk_k$ & $80.26 \pm 0.25$ & 7.98 & 1.01 \\\cline{2-5} 
& Fed-SMP-$\randk_k$ & $\mathbf{79.88 \pm 0.37}$  & 7.98 & 1.01 \\\hline

\multirow{2}{*}{$p = 0.8$}  
& Fed-SMP-$\topk_k$ & $79.77 \pm 1.00$ &15.97 & 1.01 \\\cline{2-5} 
& Fed-SMP-$\randk_k$ & $79.82 \pm 0.35$ & 15.97  & 1.01 \\\hline

\multirow{2}{*}{$ p = 1.0$} 
& DP-FedAvg &$72.72 \pm 3.72$ & 19.96 & 1.01 \\\cline{2-5}
& FedAvg & $86.98 \pm 0.12$  & 19.96 & - \\
\hline
\end{tabular}}
\end{center}
\vspace*{-10pt} 
\end{table}
\begin{table}[htpb]
\renewcommand{\arraystretch}{1.2}
\caption{Summary of results on SVHN dataset.}\label{tab:svhn}
\begin{center}
\resizebox{\columnwidth}{!}{\begin{tabular}{c|l|c|c|c}
\hline
\multirow{1}{*}{Compression} & \multirow{2}{*}{Algorithm} & \multicolumn{3}{c}{Performance}  \\\cline{3-5} 
 \multirow{1}{*}{ratio} & & {Accuracy ($\%$)} & {Cost (MB)} & { Privacy ($\epsilon$)} \\\cline{1-5} 

\multirow{2}{*}{$p = 0.005$} 
& Fed-SMP-$\topk_k$ & $80.94 \pm 0.85$   & 0.21  & 1.01 \\\cline{2-5} 
& Fed-SMP-$\randk_k$ & $19.45 \pm 0.42$  & 0.21  & 1.01 \\\hline 
\multirow{2}{*}{$p = 0.01$} 
& Fed-SMP-$\topk_k$ & $\mathbf{81.12 \pm 1.08}$  & 0.41  & 1.01 \\\cline{2-5} 
& Fed-SMP-$\randk_k$ & $19.75 \pm 0.12$  & 0.41  & 1.01 \\\hline

\multirow{2}{*}{$p = 0.05$} 
& Fed-SMP-$\topk_k$ &$80.22 \pm 1.02$ & 2.06  & 1.01 \\\cline{2-5} 
& Fed-SMP-$\randk_k$ & $28.15 \pm 2.46$  & 2.06  & 1.01 \\\hline

\multirow{2}{*}{$p = 0.1$} 
& Fed-SMP-$\topk_k$ &  $79.64 \pm 0.29$ & 4.12  & 1.01 \\\cline{2-5} 
& Fed-SMP-$\randk_k$ & $64.50 \pm 2.95$  & 4.12  & 1.01 \\\hline 

\multirow{2}{*}{$p = 0.2$} 
& Fed-SMP-$\topk_k$ & $77.91 \pm 0.37$ & 8.24 & 1.01 \\\cline{2-5} 
& Fed-SMP-$\randk_k$ & $76.94 \pm 0.58$ & 8.24  & 1.01 \\\hline

\multirow{2}{*}{$p = 0.4$}  
& Fed-SMP-$\topk_k$ & $75.58 \pm 0.40$ & 16.47 & 1.01 \\\cline{2-5} 
& Fed-SMP-$\randk_k$ &  $\mathbf{77.07 \pm 0.48}$ & 16.47 & 1.01 \\\hline

\multirow{2}{*}{$p = 0.8$}  
& Fed-SMP-$\topk_k$ & $73.18 \pm 0.69$ & 32.94 & 1.01 \\\cline{2-5} 
& Fed-SMP-$\randk_k$ & $73.90 \pm 0.37$ & 32.94  & 1.01 \\\hline

\multirow{2}{*}{$p = 1.0$} 
& DP-FedAvg &$72.46 \pm 0.54$ & 41.18 & 1.01 \\\cline{2-5}
& FedAvg & $88.47 \pm 0.17$ & 41.18 & - \\
\hline
\end{tabular}}
\end{center}
\vspace*{-10pt}
\end{table}

\section{Performance Evaluation}\label{sec:exp}
In this section, we evaluate the performance of Fed-SMP with $\randk_k$ and $\topk_k$ sparsifiers (denoted by Fed-SMP-$\randk_k$ and Fed-SMP-$\topk_k$, respectively) by comparing it with the following baselines:
\begin{itemize}
\item FedAvg: the classic FL algorithm served as the baseline without privacy consideration;
\item FedAvg-$\topk_k$: a communication-efficient variant of FedAvg, where the local model update of each client is compressed using $\topk_k$ sparsifier before being uploaded;
\item FedAvg-$\randk_k$: another communication-efficient variant of FedAvg, where the local model update of each client is compressed using $\randk_k$ sparsifier before being uploaded;
\item DP-FedAvg \cite{mcmahan2017learning}: the state-of-art differentially-private variant of FedAvg that achieves client-level DP, where the full-precision local model update from each client is clipped with clipping threshold $C$ and then perturbed by adding Gaussian noise drawn from the distribution $\mathcal{N}(0,(C^2\sigma^2/r) \cdot \mathbf{I}_d)$, where $\sigma$ is the noise multiplier and $r$ is the number of selected clients per round.
\end{itemize}
For fair comparisons, all the algorithms use the same secure aggregation protocol. When $p=1.0$, Fed-SMP-$\topk_k$ and Fed-SMP-$\randk_k$ reduce to DP-FedAvg, and FedAvg-$\topk_k$ and FedAvg-$\randk_k$ reduce to FedAvg.

\subsection{Dataset and Model}

We use Fashion-MNIST\cite{xiao2017fashion} dataset, SVHN dataset\cite{netzer2011reading} and Shakespeare dataset\cite{mcmahan2017communication}, three common benchmarks for differentially private machine/federated learning. The first two are image classification datasets, and the last one is a text dataset for next-character-prediction. Note that while Fashion-MNIST and SVHN datasets are considered as ``solved'' in the computer vision community, achieving high accuracy with strong privacy guarantee remains difficult on these datasets\cite{papernot2020tempered}.

The Fashion-MNIST dataset consists of a training set of 60,000 examples and a test set of 10,000 examples. Each example is a $28 \times 28$ grayscale image associated with a label from 10 classes. For Fed-SMP-$\topk_k$, we randomly sample 1,000 examples from the training set as the public dataset $D_p$ and evenly partition the remaining 59,000 examples across 6,000 clients. For other algorithms, we evenly split the training data across 6,000 clients. 
We use the convolutional neural network (CNN) model in \cite{mcmahan2017communication} on Fashion-MNIST that consists of two $5\times5$ convolution layers (the first with 32 filters, the second with 64 filters, each followed with $2\times2$ max pooling and ReLu activation), a fully connected layer with 512 units and ReLu activation, and a final softmax output later, which results in about $1.6$ million total parameters.


The SVHN dataset contains 73,257 training examples and 26,032 testing examples, and each of them is a  $32 \times 32$ colored image of digits from 0 to 9. We extend the training set with another 531,131 additional examples. For all algorithms except Fed-SMP-$\topk_k$, we evenly partition the training set across 6,000 clients. For Fed-SMP-$\topk_k$, we randomly sample 2,000 examples from the training set as the public dataset $D_p$ and evenly partition the remaining across 6,000 clients. We train a CNN model used in \cite{liang2020exploring} on SVHN dataset, which stacks two $5\times5$ convolution layers (the first with 64 filters, the second with 128 filters, each followed with $2\times2$ max pooling and ReLu activation), two fully connected layers (the first with 384 units, the second with 192 units, each followed with ReLu activation), and a final softmax output later, resulting in about $3.4$ million parameters in total. 

The Shakespeare dataset is a natural federated dataset built from \textit{The Complete Works of William Shakespeare}, where each of the total 715 clients corresponds to a speaking role with at least two lines. The training set and test set are obtained by splitting the lines from each client, and each client has at least one line for training or testing. We refer the reader to \cite{reddi2020adaptive} for more details on processing the raw data. Note that for Fed-SMP-$\topk_k$, we randomly select 1000 samples in total from the processed training set of clients as the public dataset $D_p$. For Shakespeare, we use the recurrent neural network (RNN) model in \cite{reddi2020adaptive} to predict the next work based on the preceding words in a line. The RNN model takes an 80-character sequence as input and consists of a $80\times8$ embedding layers and 2 LSTM layers (each with 256 nodes) followed by a dense layer, resulting in about 0.8 million parameters in total.

\begin{table}[htpb]
\renewcommand{\arraystretch}{1.2}
\caption{Summary of results on Shakespeare dataset.}\label{tab:shakespeare}
\begin{center}
\resizebox{\columnwidth}{!}{\begin{tabular}{c|l|c|c|c}
\hline
\multirow{1}{*}{Compression} & \multirow{2}{*}{Algorithm} & \multicolumn{3}{c}{Performance}  \\\cline{3-5} 
\multirow{1}{*}{ratio}  & & {Accuracy ($\%$)} & {Cost (MB)} & { Privacy ($\epsilon$)} \\\cline{1-5} 

\multirow{2}{*}{$p = 0.005$} 
& Fed-SMP-$\topk_k$ & $55.41 \pm 0.20$  & 0.23  & 6.99 \\\cline{2-5} 
& Fed-SMP-$\randk_k$ & $36.41 \pm 0.15$  & 0.23  & 6.99\\\hline
\multirow{2}{*}{$p = 0.01$} 
& Fed-SMP-$\topk_k$ &$56.04 \pm 0.23$ & 0.46  & 6.99 \\\cline{2-5} 
& Fed-SMP-$\randk_k$ & $38.99 \pm 1.79$  & 0.46  & 6.99 \\\hline
\multirow{2}{*}{$p = 0.05$} 
& Fed-SMP-$\topk_k$ &  $\mathbf{56.76 \pm 0.13}$ & 2.30  & 6.99\\\cline{2-5} 
& Fed-SMP-$\randk_k$ & $50.79 \pm 0.27$  & 2.30  & 6.99 \\\hline 

\multirow{2}{*}{$p = 0.1$} 
& Fed-SMP-$\topk_k$ & ${55.74 \pm 0.06}$ & 4.60 & 6.99 \\\cline{2-5} 
& Fed-SMP-$\randk_k$ & $53.15 \pm 0.14$ & 4.60  & 6.99 \\\hline
\multirow{2}{*}{$p = 0.2$}  
& Fed-SMP-$\topk_k$ & $55.89 \pm 0.14$ & 9.20 & 6.99 \\\cline{2-5} 
& Fed-SMP-$\randk_k$ &  $\mathbf{54.22 \pm 0.15}$ & 9.20 & 6.99 \\\hline

\multirow{2}{*}{$p = 0.4$}  
& Fed-SMP-$\topk_k$ & $54.36 \pm 0.22$ & 18.41 & 6.99 \\\cline{2-5} 
& Fed-SMP-$\randk_k$ & ${53.68 \pm 0.14}$ & 18.41  & 6.99\\\hline
\multirow{2}{*}{$p = 0.8$}  
& Fed-SMP-$\topk_k$ & $52.14 \pm 0.31$ & 36.82 & 6.99 \\\cline{2-5} 
& Fed-SMP-$\randk_k$ & $51.85 \pm 0.29$ & 36.82  & 6.99 \\\hline
\multirow{2}{*}{$p = 1.0$} 
& DP-FedAvg &$51.03\pm0.28$ & 46.02 & 6.99 \\\cline{2-5}
& FedAvg & $62.77\pm0.05$ & 46.02 & - \\
\hline
\end{tabular}}
\end{center}
\vspace*{-10pt}
\end{table}

\subsection{Experimental Settings}
For Fashion-MNIST and SVHN datasets, the server randomly samples $r = 100$ clients to participate in the training at each round, and for Shakespeare, $r=10$. For the local optimizer on clients, we use momentum SGD for both datasets. Specifically, for Fashion-MNIST, we set the local momentum coefficient as $0.5$, local learning rate as $\eta_l=0.125$ decaying at a rate of 0.99 at each communication round, batch size $B=10$ and local update period to be 10 epochs; for SVHN, we set the local momentum coefficient as $0.8$, local learning rate as $\eta_l=0.05$ decaying at a rate of 0.99 at each communication round, batch size $B=50$, local update period to be 5 epochs and number of communication rounds $T=180$; for Shakespeare, we set the local momentum coefficient as $0.9$, local learning rate as $\eta_l=1$ decaying at a rate of 0.99 at every 50 round, batch size $B=4$ and local update period to be 1 epoch and number of rounds $T=1000$.
It is worth noting that the momentum at each client is initialized at the beginning of every communication round, since local momentum will be stale due to the partial participation of clients. Specially, for Fed-SMP-$\topk_k$, the server uses the same optimizer as the clients with the same local update period to compute the global model update $\Delta_p$.
%

For the privacy-preserving algorithms, we compute the end-to-end privacy loss using the API provided in \cite{Opacus}. For all datasets, we follow \cite{mcmahan2017learning} to set $\delta=1/n^{1.1}$ such that $\delta$ is less than $1/n$. In particular, as it is challenging to obtain a small $\epsilon$ on Shakespeare dataset due to the small $n$ and the minimal $r$ sufficient for convergence, we follow \cite{andrew2021differentially} and \cite{mcmahan2017learning} to report privacy with a hypothetical population size $n=10^9$ for Shakespeare dataset. Moreover, the noise multiplier is set as $\sigma=1.4$ for Fashion-MNIST and SVHN datasets and $\sigma=0.3$ for Shakespeare dataset by default. We tune the clipping threshold over the grid $C=\{0.1, 0.2, 0.4, 0.6, 0.8, 1.0, 2.0\}$ and obtain the optimal clipping threshold, i.e., $C=1.0$ for Fashion-MNIST dataset, $C=0.4$ for SVHN and Shakespeare datasets.


\begin{figure*}[htpb]
\centering
\includegraphics[width=\textwidth]{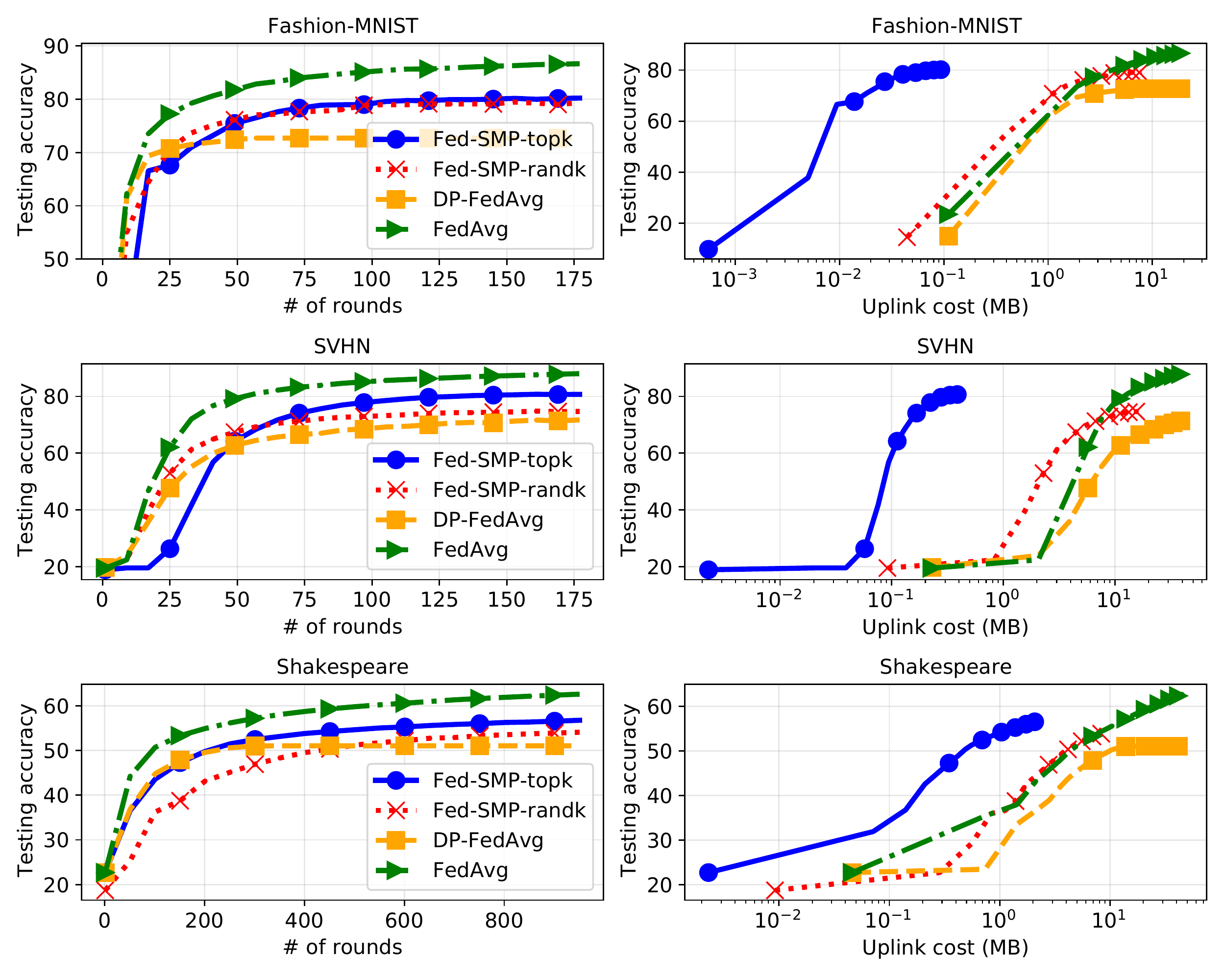}\label{fig:fmnist-communication}
\caption{Testing accuracy of Fed-SMP and DP-FedAvg w.r.t. communication round (left) and uplink communication cost (right).}\label{fig:com_}
\vspace*{-10pt}
\end{figure*}
In addition, we define the \textit{compression ratio} for $\randk_k$ and $\topk_k$ sparsifiers as $p := k / d$. When $p$ is smaller, more coordinates of the local model updates are set to zero, leading to a higher level of communication compression.

\subsection{Experiment Results}

Table~\ref{tab:fmnist}, Table~\ref{tab:svhn} and Table~\ref{tab:shakespeare} summarize the performance of FedAvg, DP-FedAvg and Fed-SMP after $T$ rounds on Fashion-MNIST, SVHN and Shakespeare, respectively. 
%
%
We run each experiment with 5 random seeds and report the average and standard deviation of testing \emph{accuracy}. Note that we always report the best testing accuracy across all rounds in each experiment. We also report the uplink communication \emph{cost} that denotes the size of the data sent from a client to the server as the input into the secure aggregation protocol, which equals to $32 k \times T \times (r/n)$ bits for Fed-SMP and $32d \times T \times (r/n)$ bits for DP-FedAvg and FedAvg. The \emph{privacy} for DP-FedAvg and Fed-SMP is the accumulated privacy loss $\epsilon$ at the end of the training. 
\begin{figure*}[t]
\vspace*{-5pt}
\centering
\includegraphics[width=\textwidth]{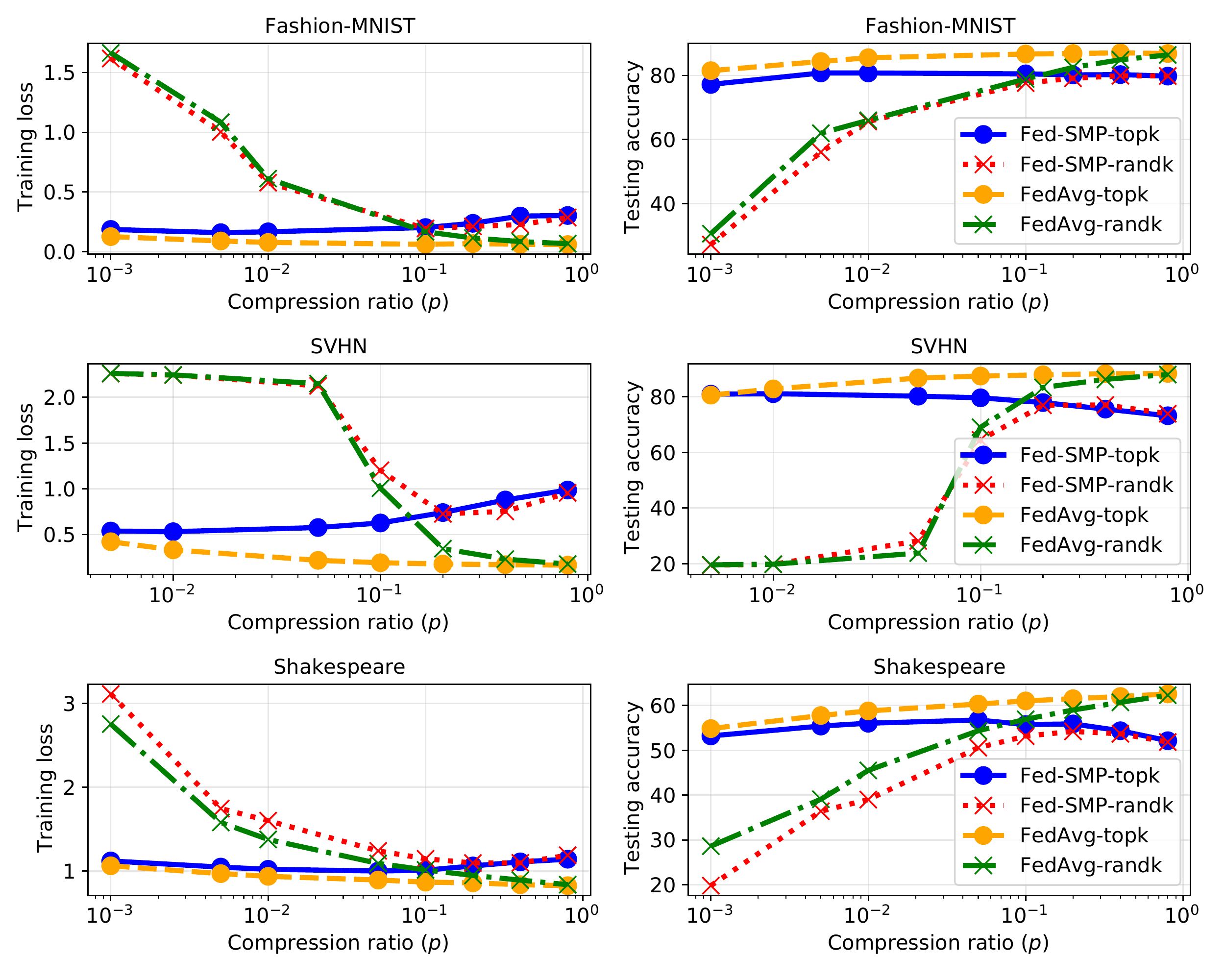}\label{fig:error_svhn}
\caption{Training loss and testing accuracy of Fed-SMP w.r.t compression ratio, compared with FedAvg-$\randk_k$/$\topk_k$.}\label{fig:error}
\vspace*{-10pt}
\end{figure*}

From Table~\ref{tab:fmnist}, we can see that the non-private FedAvg achieves a testing accuracy of $86.61\%$ on Fashion-MNIST dataset. When adding Gaussian noise to ensure $(1.01, n^{-1.1})$-DP, the accuracy drops to $71.40\%$ for DP-FedAvg. By choosing a proper value of $p$, Fed-SMP can reach a higher accuracy than DP-FedAvg under the same level of DP guarantee. For example, the highest testing accuracy of Fed-SMP-$\topk_k$ is $80.76\%$ when $p = 0.005$, outperforming DP-FedAvg by $11\%$; the highest testing accuracy of Fed-SMP-$\randk_k$ is $79.88\%$ when $p = 0.4$, outperforming DP-FedAvg by $10\%$. 

From Table~\ref{tab:svhn}, we can see that the accuracy of non-private FedAvg on SVHN dataset is $88.47\%$, and then it decreases to $72.46\%$ to achieve $(1.01, n^{-1.1})$-DP guarantee for DP-FedAvg. For Fed-SMP with the same level of DP guarantee, Fed-SMP-$\topk_k$ can achieve a testing accuracy of $81.12\%$ on SVHN dataset when $p = 0.01$, outperforming DP-FedAvg by $12\%$, and Fed-SMP-$\randk_k$ can achieve a testing accuracy of $77.07\%$ on SVHN dataset when $p = 0.4$, outperforming DP-FedAvg by $6\%$.

From the results of Shakespeare dataset in Table~\ref{tab:shakespeare}, we can see that the accuracy of non-private FedAvg is $62.77\%$ and then decreases to $51.03\%$ to achieve $(6.99, n^{-1.1})$-DP guarantee for DP-FedAvg. For Fed-SMP with the same level of DP guarantee, Fed-SMP-$\topk_k$ achieves a testing accuracy of $56.76\%$ on Shakespeare when $p = 0.05$, outperforming DP-FedAvg by $11\%$, and Fed-SMP-$\randk_k$ achieves a testing accuracy of $54.22\%$ on Shakespeare when $p = 0.2$, outperforming DP-FedAvg by $6\%$. In the following, we further evaluate Fed-SMP with more experimental settings.


\subsubsection{Communication efficiency of Fed-SMP}

For the communication cost, we can observe from Table~\ref{tab:fmnist}-\ref{tab:shakespeare} that the uplink communication cost for Fed-SMP on both datasets is relatively lower than that of DP-FedAvg and FedAvg under the same privacy guarantee, since our scheme integrates well with the secure aggregation protocol and takes advantage of model update compression. To further demonstrate the communication efficiency of Fed-SMP, we show the convergence speed and communication cost of our Fed-SMP algorithms. Specifically, we select the Fed-SMP algorithms with the optimal compression ratio and show its testing accuracy with respect to the communication round and uplink communication cost in Fig.~\ref{fig:com_}, compared with DP-FedAvg and FedAvg: For Fashion-MNIST dataset, we draw the results of Fed-SMP-$\topk_k$ with $p=0.005$ and Fed-SMP-$\randk_k$ with $p=0.4$, while achieving $(1.01, n^{-1.1})$-DP; for SVHN dataset, we draw the results of Fed-SMP-$\topk_k$ with $p=0.01$ and Fed-SMP-$\randk_k$ with $p=0.4$, while achieving $(1.01, n^{-1.1})$-DP; for Shakespeare dataset, we draw the results of Fed-SMP-$\topk_k$ with $p=0.05$ and Fed-SMP-$\randk_k$ with $p=0.2$, while achieving $(6.99, n^{-1.1})$-DP. 

From Fig.~\ref{fig:com_}, we observe that DP-FedAvg converges to a lower accuracy than FedAvg due to the added DP noise. Fed-SMP converges slower than DP-FedAvg due to the use of sparsification. However, the final accuracy of Fed-SMP is higher than DP-FedAvg because of the advantage of sparsification in improving privacy and model accuracy. Moreover, Fig.~\ref{fig:com_} also demonstrates that Fed-SMP is more communication-efficient than DP-FedAvg, and Fed-SMP-$\topk_k$ is more communication-efficient than Fed-SMP-$\randk_k$. For instance, to achieve a target accuracy $72\%$ on Fashion-MNIST dataset, Fed-SMP-$\topk_k$ and Fed-SMP-$\randk_k$ save $99\%$ and $70\%$ of uplink communication cost compared with DP-FedAvg, respectively; to achieve a target accuracy $72\%$ on SVHN dataset, Fed-SMP-$\topk_k$ and Fed-SMP-$\randk_k$ save $99\%$ and $82\%$ of uplink communication cost compared with DP-FedAvg, respectively; to achieve a target accuracy $50\%$ on Shakespeare dataset, Fed-SMP-$\topk_k$ and Fed-SMP-$\randk_k$ save $95\%$ and $60\%$ of uplink communication cost compared with DP-FedAvg, respectively. 

\subsubsection{Tradeoff between privacy and compression errors}

From Table~\ref{tab:fmnist}-Table~\ref{tab:shakespeare}, we have observed that as compression ratio $p$ decreases from 1.0 to a small value (e.g., 0.005), the testing accuracy of Fed-SMP first increases and then decreases. This is due to the change of privacy error and compression error in the convergence bound \eqref{eqn:convergen_result}, which is controlled by $k$. Since the compression ratio $p=k/d$, the smaller $p$ is, the lower the privacy error is, but the compression error could increase. In the following, we conduct additional experiments to further demonstrate this tradeoff under various compression ratios. As it is hard to directly show the tradeoff between privacy error and compression error, we study the difference between the performances of Fed-SMP and FedAvg-$\randk_k$/$\topk_k$, which differ only in DP noise addition.  

Specifically, we show the training losses and testing accuracies of Fed-SMP, FedAvg-$\randk_k$ and FedAvg-$\topk_k$ with respect to different compression ratios in Fig.~\ref{fig:error}. Note that here Fed-SMP achieves $(1.01, n^{-1.1})$-DP for Fashion-MNIST and SVHN datasets and $(6.99, n^{-1.1})$-DP for Shakespeare dataset. From the results, we can see that as the compression ratio $p$ increases, the training losses of FedAvg-$\randk_k$ and FedAvg-$\topk_k$ on all datasets monotonically decrease, and the corresponding testing accuracies monotonically increase, due to the decreasing compression error incurred from sparsification. 
However, the training losses of Fed-SMP-$\randk_k$ and Fed-SMP-$\topk_k$ decrease at first and then increase, and the corresponding testing accuracies increase at first and then decrease. This matches our observations from Theorem~\ref{thm:convergence}, i.e., the performance of Fed-SMP depends on the tradeoff between the compression error and privacy error. When $p$ is too small (e.g., $p<0.1$ for Fed-SMP-$\randk_k$ on Fashion-MNIST datasets), despite the privacy amplification effect of compression that reduces the amount of added Gaussian noise, the compression error is significant and dominates the total convergence error, leading to a higher training loss and lower testing accuracy. 
As $p$ increases, the compression error decreases, leading to a lower training loss. However, if $p$ becomes too large, the privacy amplification effect of compression becomes negligible, and the privacy error starts to dominate the convergence error, leading to a higher training loss.
For example, the testing accuracies of Fed-SMP-$\randk_k$ start to decrease when $p>0.4$ for Fashion-MNIST and SVHN datasets and $p>0.2$ for Shakespeare dataset, and the testing accuracy of Fed-SMP-$\topk_k$ starts to decrease when $p>0.005$ on Fashion-MNIST dataset, $ p > 0.01$ on SVHN dataset, and $p>0.05$ for Shakespeare dataset. 
Therefore, to achieve the best performance of Fed-SMP, the choice of $p$ needs to balance the privacy error and compression error. Furthermore, we find that Fed-SMP-$\topk_k$ tends to achieve its best performance when $p$ is small (i.e., 0.005, 0.01 and 0.1 for Fashion-MNIST, SVHN and Shakespeare datasets, respectively), and Fed-SMP-$\randk_k$ tends to achieve its best performance when $p$ is large (i.e., $0.4, 0.4, 0.2$ for Fashion-MNIST, SVHN and Shakespeare datasets, respectively). 

\subsubsection{Tradeoff between privacy and accuracy}

\begin{table}[t]
\renewcommand{\arraystretch}{1.2}
\caption{Privacy-accuracy tradeoff of Fed-SMP-$\randk_k$ with different compression ratios on Fashion-MNIST dataset.}\label{tab:tradeoff-randk-fmnist}
\begin{center}
\resizebox{\columnwidth}{!}{
\begin{tabular}{c|c|c|c|c|c}
\hline
\multirow{2}{*}{Privacy Loss} & 
\multicolumn{5}{c}{Testing Accuracy (\%)}  \\\cline{2-6}
&$p=0.005$ &$p=0.01$ & $p=0.1$ &$p=0.4$ &	$p=0.8$\\\cline{1-6}
$\epsilon=2.02$ & $	60.02$ & $	67.77 $ & $	77.76 $ & $	81.50$ & $\textbf{81.89}$ \\\cline{1-6}
$\epsilon=1.01$ & $ 56.04$ & 	$ 65.61$ &  $	77.59$ & $\textbf{79.88}$ & ${79.82}$ \\\cline{1-6}
$\epsilon=0.58$  & $	54.13	$ & $61.90	$ & $\textbf{75.80}	$ & ${69.62}$ & $53.39$ \\\cline{1-6}
$\epsilon=0.44$ & $	51.72 $ & $	59.08$ & $\textbf{74.54}$ & $	52.69$ & $	29.26$ \\\cline{1-6}
\hline
\end{tabular}}
\end{center}
\vspace*{-10pt}
\end{table}
\begin{table}[htpb]
\vspace*{-5pt}
\renewcommand{\arraystretch}{1.2}
\caption{Privacy-accuracy tradeoff of Fed-SMP-$\topk_k$ with different compression ratios on Fashion-MNIST dataset.}\label{tab:tradeoff-topk-fmnist}
\begin{center}
\resizebox{\columnwidth}{!}{
\begin{tabular}{c|c|c|c|c|c}
\hline
\multirow{2}{*}{Privacy Loss} & 
\multicolumn{5}{c}{Testing Accuracy (\%)}  \\\cline{2-6}
&$p=0.001$ &$p=0.005$ & $p=0.01$ &$p=0.1$ &	$p=0.8$\\\cline{1-6}
$\epsilon=2.02$ & 78.11 &	81.10 &	81.62&	\textbf{82.04} &	81.71 \\\cline{1-6}
$\epsilon=1.01$ & 77.18&	80.76 &	\textbf{80.76} &	80.49&	80.16 \\\cline{1-6}
$\epsilon=0.58$  & 76.42&	\textbf{78.76} &	78.44&	75.04&	56.02 \\\cline{1-6}
$\epsilon=0.44$ & 74.55&\textbf{76.90} &	75.77&	67.79&	28.55 \\\cline{1-6}
\hline
\end{tabular}}
\end{center}
\vspace*{-10pt}
\end{table}
\begin{table}[htpb]
\vspace*{-10pt}
\renewcommand{\arraystretch}{1.2}
\caption{Privacy-accuracy tradeoff of Fed-SMP-$\randk_k$ with different compression ratios on SVHN dataset.}\label{tab:tradeoff-randk-svhn}
\begin{center}
\resizebox{\columnwidth}{!}{
\begin{tabular}{c|c|c|c|c|c}
\hline
\multirow{2}{*}{Privacy Loss} & 
\multicolumn{5}{c}{Testing Accuracy (\%)}  \\\cline{2-6}
&$p=0.05$ &$p=0.1$ & $p=0.2$ &$p=0.4$ &	$p=0.8$\\\cline{1-6} 
$\epsilon=2.02$ & 28.18&65.85&	79.76&	\textbf{81.35}&	80.39 \\\cline{1-6}
$\epsilon=1.01$ & 28.15 &	64.50 &	76.94 & \textbf{77.07} & 73.90 \\\cline{1-6}
$\epsilon=0.58$ & 26.30 &60.21 & \textbf{69.90} &65.75 &	61.39 \\\cline{1-6}
$\epsilon=0.44$ & 26.07 &55.37 & \textbf{61.69} & 54.81	& 24.99 \\\cline{1-6}
\hline
\end{tabular}}
\end{center}
\vspace*{-10pt}
\end{table}
\begin{table}[htpb]
\vspace*{-5pt}
\renewcommand{\arraystretch}{1.2}
\caption{Privacy-accuracy tradeoff of Fed-SMP-$\topk_k$ with different compression ratios on SVHN dataset.}\label{tab:tradeoff-topk-svhn}
\begin{center}
\resizebox{\columnwidth}{!}{
\begin{tabular}{c|c|c|c|c|c}
\hline
\multirow{2}{*}{Privacy Loss} & 
\multicolumn{5}{c}{Testing Accuracy (\%)}  \\\cline{2-6}
&$p=0.05$ &$p=0.1$ & $p=0.2$ &$p=0.4$ &	$p=0.8$\\\cline{1-6} 
$\epsilon=2.02$ & 81.24 &	81.86 &	\textbf{83.44} &	81.50 &	80.08 \\\cline{1-6}
$\epsilon=1.01$ & 80.94 &	\textbf{81.12} &	80.22 &	75.58 &	73.18 \\\cline{1-6}
$\epsilon=0.58$ & 78.42 &	\textbf{78.85} &	75.07 &	62.53 &	59.50 \\\cline{1-6}
$\epsilon=0.44$ & \textbf{78.76} &	76.45 &	67.11 &	32.32 &	22.92 \\\cline{1-6}
\hline
\end{tabular}}
\end{center}
\vspace*{-10pt}
\end{table}

\begin{table}[htpb]
\vspace*{-10pt}
\renewcommand{\arraystretch}{1.2}
\caption{Privacy-accuracy tradeoff of Fed-SMP-$\randk_k$ with different compression ratios on Shakespeare dataset.}\label{tab:tradeoff-randk-shakespeare}
\begin{center}
\resizebox{\columnwidth}{!}{
\begin{tabular}{c|c|c|c|c|c}
\hline
\multirow{2}{*}{Privacy Loss} & 
\multicolumn{5}{c}{Testing Accuracy (\%)}  \\\cline{2-6}
&$p=0.05$ &$p=0.1$ & $p=0.2$ &$p=0.4$ &	$p=0.8$\\\cline{1-6} 
$\epsilon=11.51$ & 50.87 & 53.16 &	54.78 &	\textbf{55.34} &	54.30 \\\cline{1-6}
$\epsilon=6.99$ & 50.79 &	53.15 &	\textbf{54.22} & 53.68 & 51.85 \\\cline{1-6}
$\epsilon=4.70$ & 50.59 &52.87& \textbf{53.26} & 51.73 &	49.36 \\\cline{1-6}
$\epsilon=3.39$ & 50.35 & \textbf{52.39} & 51.93 & 49.74	& 46.18 \\\cline{1-6}
\hline
\end{tabular}}
\end{center}
\vspace*{-10pt}
\end{table}

\begin{table}[htpb]
\vspace*{-5pt}
\renewcommand{\arraystretch}{1.2}
\caption{Privacy-accuracy tradeoff of Fed-SMP-$\topk_k$ with different compression ratios on Shakespeare dataset.}\label{tab:tradeoff-topk-shakespeare}
\begin{center}
\resizebox{\columnwidth}{!}{
\begin{tabular}{c|c|c|c|c|c}
\hline
\multirow{2}{*}{Privacy Loss} & 
\multicolumn{5}{c}{Testing Accuracy (\%)}  \\\cline{2-6}
&$p=0.005$ &$p=0.01$ &$p=0.05$ & $p=0.1$ &$p=0.2$ \\\cline{1-6} 
$\epsilon=11.51$ & 55.43 & 56.10 & 57.05 &	\textbf{57.30} &	56.97 \\\cline{1-6}
$\epsilon=6.99$ & 55.41 & 56.04	& \textbf{56.76} &	55.74 &	55.89  \\\cline{1-6}
$\epsilon=4.70$ & 55.34 & 56.02	& \textbf{56.24}	& 54.85	& 54.61 \\\cline{1-6}
$\epsilon=3.39$ & 55.00 & \textbf{55.87} & 54.97	& 54.34	& 53.00  \\\cline{1-6}
\hline
\end{tabular}}
\end{center}
\vspace*{-10pt}
\end{table}

In this part, we study the tradeoff between privacy and model accuracy of Fed-SMP. In Table~\ref{tab:tradeoff-randk-fmnist}-\ref{tab:tradeoff-topk-svhn}, we show the testing accuracy of Fed-SMP with respect to the privacy loss, under different compression ratios. Note that here we vary the noise multiplier $\sigma$ to achieve different privacy losses. We can see that under the same compression ratio $p$, the testing accuracy of Fed-SMP always decreases when the privacy loss $\epsilon$ decreases. For example, the testing accuracy of Fed-SMP-$\randk_k$ with $p=0.005$ decreases from $60.02\%$ to $51.72\%$ on Fashion-MNIST dataset when the privacy loss $\epsilon$ decreases from 2.02 to 0.44. The reason is that as $\epsilon$ decreases, the noise multiplier $\sigma$ increases (according to Theorem~\ref{thm:privacy_loss}), and hence, the convergence error increases (according to Theorem~\ref{thm:convergence}). Furthermore, we also observe that under the same privacy loss, the testing accuracy of Fed-SMP always increases at first and then decreases, as the compression ratio $p$ increases. For example, when the privacy loss $\epsilon = 2.02$, the testing accuracy of Fed-SMP-$\randk_k$ on SVHN dataset increases from $28.18\%$ to $81.35\%$ and then decreases to $80.39\%$, as the compression ratio $p$ increases from 0.05 to 0.8. This matches with our observations from Fig.~\ref{fig:error}, i.e., there exists an optimal compression ratio that achieves the highest testing accuracy under the same privacy loss (as highlighted in the table) due to the tradeoff between privacy and compression errors.

Besides, the optimal compression ratio decreases as the privacy loss decreases. For example, from Table~\ref{tab:tradeoff-topk-shakespeare}, the optimal $p$ for Fed-SMP-$\topk_k$ on Shakespeare dataset is 0.1 when $\epsilon=11.51$, and as $\epsilon$ decreases to 3.39, the optimal $p$ decreases to 0.01. We observe similar trends on Table~\ref{tab:tradeoff-randk-fmnist}-\ref{tab:tradeoff-randk-shakespeare}. It is because that the optimal $p$ always balances the tradeoff between privacy and compression errors to minimize the overall convergence error. As $\epsilon$ decreases, the noise multiplier $\sigma$ increases, and thus, the privacy error increases. In this case, the optimal $p$ should decrease to reduce the privacy error.

\section{Related Work}\label{sec:related}



FL, or distributed learning in general, with DP has attracted increasing attention recently. Different from centralized learning, FL involves multiple communication rounds between clients and the server; therefore, DP needs to be preserved for all communication rounds. The main challenge of achieving DP in FL lies in maintaining high model accuracy under a reasonable DP guarantee. %
%
%
Prior related works rely on specialized techniques such as shuffling \cite{liu2020flame,erlingsson2019amplification,girgis2021shuffled}, sparsification\cite{liu2020fedsel,lyu2021dp,hu2021federated}, and aggregation directly over wireless channels \cite{seif2020wireless} to boost DP guarantee with the same amount of noise, which are only applicable to some specific settings. For instance, the model shuffling method in \cite{liu2020flame} uses a shuffler to permute the local model updates before aggregation, improving the DP guarantee for protecting the local model update of each client. In \cite{liu2020fedsel}, the top$_k$ sparsification is integrated with record-level DP to protect the local model update of a client, and in \cite{hu2021federated} random sparsification is used with gradient perturbation to achieve record-level DP. Our approach is orthogonal to theirs as we aim to achieve client-level DP that protects the local dataset of a client and can be combined to achieve even better privacy-accuracy tradeoffs. Moreover, our work theoretically analyzes the convergence of the proposed methods to demonstrate the benefit of integrating biased/unbaised sparsification with DP in the classic FedAvg algorithm in a rigorous manner, filling the gap in the state-of-arts. Some works \cite{guo2018practical,huang2019dp} also develop differentially private versions of distributed algorithms that require fewer iterations than SGD-based algorithms to converge such as alternating direction method of multipliers, but they are only applicable to convex settings and do not allow partial client participation at each communication round, which is a key feature in FL. 

Client-level DP provides more realistic protections than record-level DP against information leakage in the FL setting. With client-level DP, the participation of a client rather than a single data record needs to be protected, therefore requiring the addition of a larger amount of noise than record-level DP to achieve the same level of DP \cite{mcmahan2017learning,wei2021user}. Our work follows this line of research and proposes to integrate SMP with FedAvg to achieve higher model accuracy than prior schemes, under the same client-level DP guarantee in FL. Note there are other approaches to preserve privacy in distributed learning, such as secure multi-party computation \cite{kanagavelu2020two} or homomorphic encryption \cite{aono2017privacy}. However, those approaches have different privacy goals and cannot protect against information leakage incurred from observing the final trained models at the server. 





Besides privacy, another bottleneck in FL is the high communication cost of transmitting model parameters. Typical techniques to address this issue include quantization and sparsification \cite{alistarh2017qsgd,bernstein2018signsgd, wang2018atomo, beguier2020efficient,chen2020scalecom}. For example, in \cite{beguier2020efficient}, sparsification and quantization are integrated with secure aggregation to mitigate the communication cost and privacy concern of FL simultaneously; in \cite{chen2020scalecom} the top-k coordinates of a client's gradient are selected cyclically for all clients to do gradient sparsification to save the communication cost in distributed learning. However, these works do not consider the rigorous privacy protection for clients. Some recent works \cite{agarwal2018cpsgd,zhang2020private,kerkouche2021compression} start to jointly consider communication efficiency and DP in distributed learning. Agarwal et al. \cite{agarwal2018cpsgd} propose cpSGD by making modifications to distributed SGD to make the method both private and communication-efficient. However, the authors treat these as separate issues and develop different approaches to address each within cpSGD. As shown in \cite{li2019privacy}, by separately reducing communication and enforcing privacy, errors in cpSGD are compounded and higher than considering privacy only. In comparison, our proposed Fed-SMP scheme considers those two issues jointly and provides substantially higher accuracy by leveraging sparsification to amplify privacy protection. Zhang et al.\cite{zhang2020private} also combine sparsification with Gaussian mechanism to achieve communication efficiency and DP in decentralized SGD, but their scheme adds noise before using sparsification and cannot leverage sparsification to improve privacy protection. Kerkouche et al.\cite{kerkouche2021compression} propose to use compressive sensing before adding noise to model updates in FL but relies on the strong assumption that model updates only have a few non-zero coordinates, which seldom holds in practice. 

%

\section{Conclusions}\label{sec:con}

This paper has proposed Fed-SMP, a new FL scheme based on sparsified model perturbation. Fed-SMP achieves client-level DP while maintaining high model accuracy and is also communication-efficient. We have rigorously analyzed the convergence and end-to-end DP guarantee of the proposed scheme and extensively evaluated its performance on two common benchmark datasets. Experimental results have shown that Fed-SMP with both $\randk_k$ and $\topk_k$ sparsification strategies can improve the privacy-accuracy tradeoff and communication efficiency simultaneously compared with the existing methods. In the future, we will consider other compression methods such as low-rank approximation and quantization, and extend the scheme to alternative FL settings such as shuffled model.  


\ifCLASSOPTIONcaptionsoff
  \newpage
\fi
\bibliographystyle{IEEEtran}
\bibliography{hu,guo_career,gong}

\onecolumn
\appendices

\section{Proof of Theorem~\ref{thm:privacy_loss}}\label{subsec:proof_priv}

Suppose the client is sampled without replacement with probability $q=r/n$ at each round. By Lemma~\ref{lemma:gaussian_mechanism} and Lemma~\ref{lemma:rdp_sub}, the $t$-th round of Fed-SMP satisfies $(\alpha, \rho_t(\alpha))$-RDP, where 
\begin{equation}
\label{eqn:perround}
    \rho_t(\alpha) = \frac{3.5q^2\alpha}{\sigma^2},
\end{equation}
if $\sigma^2 \geq 0.7$ and $\alpha \leq 1+ (2/3)C^2\sigma^2\log(1/q\alpha(1+ \sigma^2)) $. Then by Lemma~\ref{lemma:rdp_comr}, Fed-SMP satisfies $(\alpha, T\rho_t(\alpha))$-RDP after $T$ rounds of training. Next, in order to guarantee $(\epsilon,\delta)$-DP according to Lemma~\ref{lemma:rdp_dp}, we need 
\begin{equation}
\label{eqn:simplify}
    \frac{3.5q^2 T\alpha}{\sigma^2} + \frac{\log(1/\delta)}{\alpha-1} \leq \epsilon.
\end{equation}
Suppose $\alpha$ and $\sigma$ are chosen such that the conditions for \eqref{eqn:perround} are satisfied. Choose $\alpha = 1 + 2\log(1/\delta)/\epsilon$ and rearrange the inequality in \eqref{eqn:simplify}, we need
\begin{equation}
    \sigma^2 \geq \frac{7q^2T(\epsilon + 2\log(1/\delta))}{\epsilon^2}.
\end{equation}
Then using the constraint on $\epsilon$ concludes the proof.

\section{Proof of Theorem~\ref{thm:convergence} with $\randk_k$ sparsifier}\label{subsec:proof_conve_randk}
\subsection{Notations}
For ease of expression, let $\bm{\theta}_i^{t,s}$ denote client $i$'s local model at local iteration $s$ of round $t$, and let $\bm{b}_i^t$ denote the Gaussian noise added to the sparsified model update where $[\bm{b}_i^t]_j\sim \mathcal{N}(0,C^2\sigma^2/r)$ if $[\bm{m}^t]_j=0$ and $[\bm{b}_i^t]_j=0$ otherwise. Let $\spar$ denote the sparsifier. In Fed-SMP, the update rule of the global model can be summarized as:
\begin{equation}
    \bm{\theta}^{t+1} = \bm{\theta}^{t} -  \frac{1}{r}\sum_{i\in\mathcal{W}^t} \Delta_i^t, \text{ where }  \Delta_i^t = \spar(\eta_l\sum_{s=0}^{\tau-1} \bm{g}_i^{t,s}) + \bm{b}_i^t 
\end{equation}
where $\Delta_i^t$ represents the sparsified noisy model update of client $i$. Assume the clients are sampled uniformly at random without replacement, then we can directly validate that the client sampling is unbiased:
\begin{equation}
    \mathbb{E}_{\mathcal{W}^t}\left[ \frac{1}{r}\sum_{i\in\mathcal{W}^t} \Delta_i^t\right] =\frac{1}{r}\sum_{\substack{\mathcal{W}\in[n],\\|\mathcal{W}|=r}} \mathbb{P}(\mathcal{W}^t=\mathcal{W}) \sum_{i\in\mathcal{W}^t} \Delta_i^t  =  \frac{1}{n}\sum_{i=1}^{n} \Delta_i^t.
\end{equation}
Moreover, let $\nabla f_i(\bm{\theta}_i^{t,s})$ represent the local gradient so that $\mathbb{E}_{\xi_i^{t,s}}[\bm{g}_i^{t,s}]=\nabla f_i(\bm{\theta}_i^{t,s})$. 
For ease of expression, we let $\bm{d}_i^t = (1/\tau)\sum_{s=0}^{\tau-1} \bm{g}_i^{t,s} $ and $ \bm{h}_i^t =  (1/\tau)\sum_{s=0}^{\tau-1}  \nabla f_i(\bm{\theta}_i^{t,s})$, so we have $\Delta_i^t = \eta_l\tau\spar(\bm{d}_i^t) + \bm{b}_i^t $ and $ \mathbb{E}_t[\bm{d}_i^t] = \bm{h}_i^t$.

\subsection{Useful Inequalities}

\begin{lemma}[Jensen's inequality]\label{lemma:jensen}
For arbitrary set of $n$ vectors $\{\bm{a}_i\}_{i=1}^{n}, \bm{a}_i\in\mathbb{R}^d$ and positive weights $\{w_i\}_{i\in[n]}$,
\begin{equation}
    \left\|\sum_{i=1}^{n}w_{i}\bm{a}_i\right\|^2\leq \frac{\sum_{i=1}^{n}w_i\|\bm{a}_i\|^2}{\sum_{i=1}^{n}w_i}.
\end{equation}
\end{lemma}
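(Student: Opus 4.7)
The plan is to prove this as a direct application of the Cauchy-Schwarz inequality (equivalently, Jensen's inequality for the convex function $\bm{x}\mapsto\|\bm{x}\|^2$), after an appropriate normalization of the weights. Note: as written the inequality holds after dividing the left side by $(\sum_i w_i)^2$, i.e., the intended statement is the standard weighted-average form $\bigl\|\sum_i w_i \bm{a}_i/\sum_i w_i\bigr\|^2 \leq \sum_i w_i\|\bm{a}_i\|^2 / \sum_i w_i$; I will prove this standard form, which is what gets invoked in the convergence analysis.

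First, I would set $W := \sum_{i=1}^n w_i > 0$ and define $p_i := w_i/W$, so that $\{p_i\}_{i=1}^n$ forms a discrete probability distribution on $[n]$. Rewriting both sides in terms of $p_i$ reduces the claim to $\|\sum_i p_i \bm{a}_i\|^2 \leq \sum_i p_i \|\bm{a}_i\|^2$, which is the classical Jensen inequality for the convex map $\phi:\mathbb{R}^d\to\mathbb{R}$, $\phi(\bm{x}) = \|\bm{x}\|^2$. Convexity of $\phi$ follows because the Euclidean norm $\|\cdot\|$ is convex and nonnegative, and composing a convex nonnegative function with the nondecreasing convex map $t\mapsto t^2$ on $[0,\infty)$ preserves convexity; alternatively, one verifies directly that the Hessian of $\phi$ is $2\bm{I}_d \succeq 0$.

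Alternatively, and perhaps more transparently, I would give a one-line Cauchy-Schwarz proof: write $w_i = \sqrt{w_i}\cdot\sqrt{w_i}$ and apply Cauchy-Schwarz to the vectors $\bm{u}_i := \sqrt{w_i}\,\bm{a}_i \in\mathbb{R}^d$ and scalars $c_i := \sqrt{w_i}$, yielding
\begin{equation*}
\Bigl\|\sum_{i=1}^n w_i \bm{a}_i\Bigr\|^2 = \Bigl\|\sum_{i=1}^n c_i \bm{u}_i\Bigr\|^2 \leq \Bigl(\sum_{i=1}^n c_i^2\Bigr)\Bigl(\sum_{i=1}^n \|\bm{u}_i\|^2\Bigr) = W\sum_{i=1}^n w_i\|\bm{a}_i\|^2.
\end{equation*}
Dividing by $W^2$ then gives the intended inequality. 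The vector-valued Cauchy-Schwarz step itself can be justified by expanding $\|\sum_i c_i \bm{u}_i\|^2 = \sum_{i,j} c_i c_j \langle \bm{u}_i,\bm{u}_j\rangle$, bounding each inner product via $|\langle \bm{u}_i,\bm{u}_j\rangle|\leq \|\bm{u}_i\|\|\bm{u}_j\|$, and applying the scalar Cauchy-Schwarz to the resulting double sum.

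There is no real obstacle here; the only subtlety is the normalization (dividing the standard scalar Cauchy-Schwarz output by $W^2$ to match the stated form), and tracking that the weights being strictly positive is what ensures $W>0$ and the divisions are well defined. The proof fits in a few lines and requires no federated-learning-specific machinery.
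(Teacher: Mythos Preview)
The paper states this lemma without proof, listing it among standard ``useful inequalities,'' so there is no argument to compare against. Your proof via Cauchy--Schwarz (or equivalently Jensen applied to $\bm{x}\mapsto\|\bm{x}\|^2$) is correct, and you are right that the inequality as literally printed is off by a normalization: it fails whenever $\sum_i w_i\neq 1$ (e.g.\ $n=1$, $w_1=2$). The form you prove,
\[
\Bigl\|\tfrac{1}{W}\sum_i w_i\bm{a}_i\Bigr\|^2 \le \tfrac{1}{W}\sum_i w_i\|\bm{a}_i\|^2,\qquad W=\sum_i w_i,
\]
is exactly what the paper actually uses in its convergence analysis, where the weights are always $w_i=1/r$ or $w_i=1/n$ and hence sum to $1$, making the printed statement coincide with the correct one in those applications.
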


\begin{lemma}\label{lemma:aaaa}
For arbitrary set of $n$ vectors $\{\bm{a}_i\}_{i=1}^{n}, \bm{a}_i\in\mathbb{R}^d$,
\begin{equation}
    \left\|\sum_{i=1}^{n}\bm{a}_i\right\|^2\leq n{\sum_{i=1}^{n}\|\bm{a}_i\|^2}.
\end{equation}
\end{lemma}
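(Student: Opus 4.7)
The statement is the standard Cauchy--Schwarz-type inequality for sums of vectors, so the plan is a one-line derivation from a more general weighted inequality and, in particular, a direct specialization of Lemma~\ref{lemma:jensen} (Jensen's inequality) which is already stated in the excerpt.

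My plan is to apply Lemma~\ref{lemma:jensen} with the uniform weights $w_i = 1$ for all $i \in [n]$. Under this choice, the denominator $\sum_{i=1}^n w_i$ on the right-hand side equals $n$, while the left-hand side becomes $\|\sum_{i=1}^n \bm{a}_i\|^2$, which after clearing the denominator yields exactly the desired bound $\|\sum_{i=1}^n \bm{a}_i\|^2 \le n \sum_{i=1}^n \|\bm{a}_i\|^2$. If one instead prefers a self-contained argument bypassing Lemma~\ref{lemma:jensen}, I would expand the squared norm coordinate-wise, writing $\|\sum_i \bm{a}_i\|^2 = \sum_{j=1}^d \bigl(\sum_{i=1}^n [\bm{a}_i]_j\bigr)^2$, apply the scalar Cauchy--Schwarz inequality $\bigl(\sum_{i=1}^n x_i\bigr)^2 \le n \sum_{i=1}^n x_i^2$ to each coordinate, and then swap the order of summation to recover $n \sum_{i=1}^n \|\bm{a}_i\|^2$.

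There is no genuine obstacle here: the result is a textbook consequence of convexity of $x \mapsto \|x\|^2$, and the only thing to be careful about is citing (or re-deriving) the weighted Jensen inequality in a form consistent with Lemma~\ref{lemma:jensen} as stated. Because the result is elementary and used repeatedly in the subsequent convergence analysis, I would keep the proof to essentially a single line in the appendix.
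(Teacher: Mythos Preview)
The paper does not give a proof of this lemma at all; it is listed under ``Useful Inequalities'' in the appendix alongside Lemmas~\ref{lemma:jensen}, \ref{lemma:a+b}, and \ref{lemma:ab} as a standard fact, and is then invoked freely in the convergence proofs. So there is no ``paper's proof'' to compare against, and your plan to record a one-line argument is entirely appropriate.

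One caution about your first route, though: Lemma~\ref{lemma:jensen} as literally printed in the paper appears to have a typo. Substituting $w_i=1$ into the displayed inequality gives $\bigl\|\sum_{i=1}^n \bm{a}_i\bigr\|^2 \le \frac{1}{n}\sum_{i=1}^n \|\bm{a}_i\|^2$, which is false in general (take all $\bm{a}_i$ equal and nonzero), and ``clearing the denominator'' would then place the factor $n$ on the left-hand side, not the right. The intended Jensen inequality is $\bigl\|\frac{1}{n}\sum_{i=1}^n \bm{a}_i\bigr\|^2 \le \frac{1}{n}\sum_{i=1}^n \|\bm{a}_i\|^2$ (equivalently, $\sum_i w_i$ should multiply rather than divide the right-hand side of Lemma~\ref{lemma:jensen}); multiplying both sides of the correct form by $n^2$ indeed yields Lemma~\ref{lemma:aaaa}. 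Your self-contained coordinate-wise Cauchy--Schwarz argument is correct as stated and is the safer thing to write down.
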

\begin{lemma}\label{lemma:a+b}
For given two vectors $\bm{a}, \bm{b}\in\mathbb{R}^d$,
\begin{equation}
    \|\bm{a}+\bm{b}\|^2\leq (1+\alpha)\|\bm{a}\|^2 + (1+\alpha^{-1})\|\bm{b}\|^2, \forall \alpha>0.
\end{equation}
\end{lemma}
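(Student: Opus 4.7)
The plan is to prove this by expanding the squared norm and then bounding the cross term via the weighted arithmetic-geometric mean (AM-GM) inequality. This is the standard Young/Peter-Paul-style argument, so the proof is short and elementary; the only ``choice'' is to verify that the weighting parameter in AM-GM can be identified with the given $\alpha>0$.

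First, I would expand using the inner product:
\begin{equation*}
\|\bm{a}+\bm{b}\|^2 = \|\bm{a}\|^2 + 2\langle \bm{a},\bm{b}\rangle + \|\bm{b}\|^2.
\end{equation*}
Next, I would bound the cross term. By the Cauchy-Schwarz inequality, $\langle \bm{a},\bm{b}\rangle \leq \|\bm{a}\|\,\|\bm{b}\|$. Then, applying the weighted AM-GM inequality $2xy \leq \alpha x^2 + \alpha^{-1} y^2$ (which is just the expansion of the nonnegative quantity $(\sqrt{\alpha}\,x - \alpha^{-1/2}y)^2 \geq 0$) with $x=\|\bm{a}\|$ and $y=\|\bm{b}\|$, I would obtain
\begin{equation*}
2\langle \bm{a},\bm{b}\rangle \leq 2\|\bm{a}\|\,\|\bm{b}\| \leq \alpha\|\bm{a}\|^2 + \alpha^{-1}\|\bm{b}\|^2.
\end{equation*}

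Finally, substituting this bound back into the expanded expression gives
\begin{equation*}
\|\bm{a}+\bm{b}\|^2 \leq \|\bm{a}\|^2 + \alpha\|\bm{a}\|^2 + \alpha^{-1}\|\bm{b}\|^2 + \|\bm{b}\|^2 = (1+\alpha)\|\bm{a}\|^2 + (1+\alpha^{-1})\|\bm{b}\|^2,
\end{equation*}
which is exactly the claimed inequality for all $\alpha>0$. There is no real obstacle here; the only thing to be careful about is ensuring $\alpha>0$ so that both the square $(\sqrt{\alpha}\,x-\alpha^{-1/2}y)^2$ and the reciprocal $\alpha^{-1}$ are well-defined, which is given by hypothesis.
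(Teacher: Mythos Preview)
Your argument is correct and is the standard Young/Peter--Paul derivation. The paper itself states this lemma among its ``Useful Inequalities'' without proof, so there is nothing to compare against; your proposal simply supplies the routine verification that the paper omits.
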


\begin{lemma}\label{lemma:ab}
For given two vectors $\bm{a}, \bm{b}\in\mathbb{R}^d$,
\begin{equation}
  2\langle \bm{a}, \bm{b}\rangle \leq \gamma\|\bm{a}\|^2 + \gamma^{-1}\|\bm{b}\|^2, \forall \gamma>0.
\end{equation}
\end{lemma}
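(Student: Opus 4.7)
The plan is to derive this as a direct consequence of the non-negativity of a squared norm, which is the standard route for Young-type inequalities in Hilbert spaces. Specifically, for any $\gamma > 0$ I would introduce the auxiliary vector $\sqrt{\gamma}\,\bm{a} - \gamma^{-1/2}\,\bm{b} \in \mathbb{R}^d$ and exploit the fact that its squared Euclidean norm is non-negative.

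The key steps are as follows. First, note that $\|\sqrt{\gamma}\,\bm{a} - \gamma^{-1/2}\,\bm{b}\|^2 \geq 0$ since it is a squared norm. Second, expand this squared norm using the bilinearity of the inner product:
\begin{equation*}
\|\sqrt{\gamma}\,\bm{a} - \gamma^{-1/2}\,\bm{b}\|^2 = \gamma\|\bm{a}\|^2 - 2\langle \bm{a}, \bm{b}\rangle + \gamma^{-1}\|\bm{b}\|^2,
\end{equation*}
where the cross term simplifies because $\sqrt{\gamma}\cdot \gamma^{-1/2} = 1$. Third, combine the two observations to conclude
\begin{equation*}
0 \leq \gamma\|\bm{a}\|^2 - 2\langle \bm{a}, \bm{b}\rangle + \gamma^{-1}\|\bm{b}\|^2,
\end{equation*}
and rearrange to obtain the desired inequality $2\langle \bm{a}, \bm{b}\rangle \leq \gamma\|\bm{a}\|^2 + \gamma^{-1}\|\bm{b}\|^2$. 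Since $\gamma > 0$ was arbitrary, the bound holds for all such $\gamma$.

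There is no real obstacle here: the proof is a one-line completing-the-square argument, and the only thing to be careful about is ensuring $\gamma > 0$ so that $\sqrt{\gamma}$ and $\gamma^{-1/2}$ are well-defined real numbers (so the squared norm is genuinely non-negative rather than a formal expression). An alternative but equivalent route would be to apply the scalar AM-GM inequality $2xy \leq x^2 + y^2$ with $x = \sqrt{\gamma}\,\|\bm{a}\|$ and $y = \gamma^{-1/2}\|\bm{b}\|$ after first invoking Cauchy--Schwarz $\langle \bm{a}, \bm{b}\rangle \leq \|\bm{a}\|\,\|\bm{b}\|$; however, the completing-the-square approach is cleaner because it avoids the intermediate Cauchy--Schwarz step and handles the sign of $\langle \bm{a}, \bm{b}\rangle$ automatically.
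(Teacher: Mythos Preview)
Your proof is correct; the completing-the-square argument is the standard derivation of this Young-type inequality. The paper itself does not supply a proof for this lemma, listing it among a handful of elementary ``useful inequalities'' that are simply stated without justification, so your argument is exactly what one would write if a proof were required.
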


\subsection{Detailed Proof}
Let $\spar:=(d/k)\randk_k$ denote the unbiased random sparsifier. By the $L$-smoothness of function $f$, we have
\begin{align}
    \nonumber
    \mathbb{E}_t[f(\bm{\theta}^{t+1}) -f(\bm{\theta}^t)] & \leq  \mathbb{E}_t\left\langle \nabla f(\bm{\theta}^{t}),   \bm{\theta}^{t+1} - \bm{\theta}^t\right\rangle + \frac{L}{2}\mathbb{E}_t\left\| \bm{\theta}^{t+1} - \bm{\theta}^t\right\|^2
    \\\nonumber
    & = -\mathbb{E}_t\left\langle \nabla f(\bm{\theta}^{t}),   \mathbb{E}_{\mathcal{W}^t}\left[\frac{1}{r} \sum_{i\in\mathcal{W}^t} \Delta_i^t\right] \right\rangle + \frac{L}{2}\mathbb{E}_t\left\|  \frac{1}{r} \sum_{i\in\mathcal{W}^t} \Delta_i^t\right\|^2\\\nonumber
    & = -\mathbb{E}_t\left\langle \nabla f(\bm{\theta}^{t}),\frac{1}{n} \sum_{i=1}^{n}\Delta_i^t \right\rangle + \frac{L}{2}\mathbb{E}_t\left\|  \frac{1}{r} \sum_{i\in\mathcal{W}^t} \Delta_i^t\right\|^2\\\label{eqn:A}
    & = \underbrace{-\mathbb{E}_t\left\langle \nabla f(\bm{\theta}^{t}),\frac{1}{n} \sum_{i=1}^{n}\left(\eta_l\tau\spar(\bm{d}_i^t) + \bm{b}_i^t\right) \right\rangle}_{T_1} + \underbrace{ \frac{L}{2}\mathbb{E}_t\left\|  \frac{1}{r} \sum_{i\in\mathcal{W}^t}\left( \eta_l\tau\spar(\bm{d}_i^t) + \bm{b}_i^t\right) \right\|^2}_{T_2}
\end{align}
where the expectation $\mathbb{E}_t[\cdot]$ is taken over the sampled clients $\mathcal{W}^t$ and mini-batches $\xi_i^s,\forall i\in[n], s\in\{0,\dots,\tau-1\}$ at round $t$ and the sparsifier $\spar$. 
As $\spar=(d/k)\randk_k$, due to the unbiasedness of the stochastic gradient and Gaussian noise, we have
\begin{align}
\nonumber
    T_1 & = -\left\langle \nabla f(\bm{\theta}^{t}),\mathbb{E}_t\left[\frac{1}{n} \sum_{i=1}^{n}\eta_l\tau\bm{d}_i^t  \right]\right\rangle -\left\langle \nabla f(\bm{\theta}^{t}),   \mathbb{E}_t\left[\frac{1}{n} \sum_{i=1}^{n}\bm{b}_i^t \right]\right\rangle \\\nonumber
    & = -{\eta_l\tau}\mathbb{E}_t\left\langle \nabla f(\bm{\theta}^{t}),   \frac{1}{n} \sum_{i=1}^{n}\bm{h}_i^t  \right\rangle\\\nonumber
    & = -\frac{\eta_l\tau }{2}\left\|  \nabla f(\bm{\theta}^{t})\right\|^2 - \frac{\eta_l\tau }{2}\mathbb{E}_t\left\| \frac{1}{n} \sum_{i=1}^{n}\bm{h}_i^t  \right\|^2 
    + \frac{\eta_l\tau }{2} \mathbb{E}_t\left\| \nabla f(\bm{\theta}^{t}) -  \frac{1}{n} \sum_{i=1}^{n}\bm{h}_i^t \right\|^2 \\\nonumber
    &\leq -\frac{\eta_l\tau }{2}\left\|  \nabla f(\bm{\theta}^{t})\right\|^2 + \frac{\eta_l\tau }{2} \mathbb{E}_t\left\| \frac{1}{n} \sum_{i=1}^{n} \frac{1}{\tau}\sum_{s=0}^{\tau-1}\left( \nabla f_i(\bm{\theta}^{t}) - \nabla f_i(\bm{\theta}_i^{t,s}) \right) \right\|^2 \\\nonumber
    & \leq -\frac{\eta_l\tau }{2}\left\|  \nabla f(\bm{\theta}^{t})\right\|^2 + \frac{\eta_l\tau }{2n} \sum_{i=1}^{n} \frac{1}{\tau}\sum_{s=0}^{\tau-1} \mathbb{E}_t\left\| \nabla f_i(\bm{\theta}^{t}) - \nabla f_i(\bm{\theta}_i^{t,s}) \right\|^2\\\label{eqn:T_1_randk}
    & \leq -\frac{\eta_l\tau }{2}\left\|  \nabla f(\bm{\theta}^{t})\right\|^2 + \frac{\eta_l  L^2}{2n} \sum_{i=1}^{n}\sum_{s=0}^{\tau-1} \mathbb{E}_t\left\| \bm{\theta}^{t}- \bm{\theta}_i^{t,s} \right\|^2,
\end{align}
where the first inequality results from the fact that $\|(1/n)\sum_{i=1}^{n}\bm{h}_i^t \|^2\geq 0$, the second inequality uses Lemma~\ref{lemma:aaaa}, and the third inequality uses the $L$-smoothness of function $f_i$.

For $T_2$, let $\phi_k:= d/k-1$, we have
\begin{align}
\nonumber
    T_2 & \leq \frac{L\eta_l^2\tau^2}{2}\mathbb{E}_t\left[\frac{1}{r} \sum_{i\in\mathcal{W}^t}\left\| \spar(\bm{d}_i^t) \right\|^2\right] + \frac{L}{2}\mathbb{E}_t\left\|  \frac{1}{r} \sum_{i\in\mathcal{W}^t} \bm{b}_i^t \right\|^2\\\nonumber
    & = \frac{L\eta_l^2\tau^2}{2n} \sum_{i=1}^{n}\mathbb{E}_t\left\|  \spar(\bm{d}_i^t) - \bm{d}_i^t +  \bm{d}_i^t \right\|^2 + \frac{LkC^2\sigma^2}{2r^2}\\\nonumber
    & \leq \frac{L\eta_l^2\tau^2}{2n} \sum_{i=1}^{n}\mathbb{E}_t\left[2\left\| \spar(\bm{d}_i^t) - \bm{d}_i^t \right\|^2 + 2\left\| \bm{d}_i^t \right\|^2\right] + \frac{LC^2k\sigma^2}{2r^2}\\
    & \leq  \frac{L\eta_l^2\tau^2(\phi_k +1)}{n} \sum_{i=1}^{n} \mathbb{E}_t\left\|\bm{d}_i^t \right\|^2  + \frac{LC^2k\sigma^2}{2r^2}
\end{align}
where the first inequality uses the independence of Gaussian noise and Lemma~\ref{lemma:jensen}, the second inequality also uses Lemma~\ref{lemma:aaaa}, and the last inequality results from Lemma~\ref{lemma:bounded-sparsifier}. 
By Lemma~\ref{lemma:bounded_local_model}, $T_2$ is bounded as follows:
\begin{align}
\label{eqn:T_2_randk}
    T_2 & \leq 2L\eta_l^2\tau^2(\phi_k +1)\beta^2\left\|\nabla f(\bm{\theta}^{t})\right\|^2  + \frac{2L^3\eta_l^2\tau(\phi_k +1)}{n} \sum_{i=1}^{n}\sum_{s=0}^{\tau-1}\mathbb{E}_t\left\|\bm{\theta}^{t}-\bm{\theta}_i^{t,s}\right\|^2 + L\eta_l^2\tau^2(\phi_k +1)( \bar{\zeta}^2 + 2 \kappa^2)
    + \frac{LC^2k\sigma^2}{2r^2}
\end{align}

Combining \eqref{eqn:A}, \eqref{eqn:T_1_randk} and \eqref{eqn:T_2_randk}, one yields 
\begin{align}
\nonumber
    \mathbb{E}_t[f(\bm{\theta}^{t+1}) -f(\bm{\theta}^t)] & \leq \left( -\frac{\eta_l\tau}{2} +  2L\eta_l^2\tau^2(\phi_k +1)\beta^2\right) \left\|  \nabla f(\bm{\theta}^{t})\right\|^2 \\\nonumber
    & \ \ + \frac{\eta_l L^2 + 4L^3\eta_l^2\tau(\phi_k +1)}{2n} \sum_{i=1}^{n}\sum_{s=0}^{\tau-1} \mathbb{E}_t\left\| \bm{\theta}^{t}- \bm{\theta}_i^{t,s} \right\|^2 + L\eta_l^2\tau^2(\phi_k +1)( \bar{\zeta}^2 + 2 \kappa^2) + \frac{LC^2k\sigma^2}{2r^2}\\
    & \leq -\frac{\eta_l\tau}{4}\left\|  \nabla f(\bm{\theta}^{t})\right\|^2 + \frac{\eta_l L^2 (2\beta^2+1)}{4n\beta^2} \sum_{i=1}^{n}\sum_{s=0}^{\tau-1} \mathbb{E}_t\left\| \bm{\theta}^{t}- \bm{\theta}_i^{t,s} \right\|^2 + L\eta_l^2\tau^2(\phi_k +1)( \bar{\zeta}^2 + 2 \kappa^2)
    + \frac{LkC^2\sigma^2}{2r^2},
\end{align}
if we choose $\eta_l \leq {1}/(8\tau L (\phi_k +1)\beta^2)$. Next, by Lemma~\ref{lemma:bounded_local_divergence}, we obtain that 
\begin{align}
\nonumber
    \mathbb{E}_t[f(\bm{\theta}^{t+1}) -f(\bm{\theta}^t)] & \leq  -\frac{\eta_l\tau}{4}\left\|  \nabla f(\bm{\theta}^{t})\right\|^2 + \frac{\eta_l L^2 (2\beta^2+1)}{4\beta^2}\sum_{s=0}^{\tau-1} \left[16\eta_l^2\tau^2\beta^2\left\| \nabla f(\bm{\theta}^t)\right\|^2 + 16\eta_l^2\tau^2\kappa^2 + {4\tau\eta_l^2}\bar{\zeta}^2\right] \\\nonumber
    & \ \ + L\eta_l^2\tau^2(\phi_k +1)( \bar{\zeta}^2 + 2 \kappa^2)
    + \frac{LC^2k\sigma^2}{2r^2}\\\nonumber
    & = \left(-\frac{\eta_l\tau}{4}  + {4\eta_l L^2 (2\beta^2+1)\eta_l^2\tau^3} \right) \left\|\nabla f(\bm{\theta}^{t})\right\|^2 +  \frac{\eta_l^3\tau^2 L^2 (2\beta^2+1)\left(4\tau\kappa^2 + \bar{\zeta}^2\right)}{\beta^2}\\\nonumber
    & \ \ + L\eta_l^2\tau^2(\phi_k +1)( \bar{\zeta}^2 + 2 \kappa^2)   + \frac{LC^2k\sigma^2}{2r^2} \\\nonumber
    & \leq -\frac{\eta_l\tau}{8} \left\|\nabla f(\bm{\theta}^{t})\right\|^2 + {\eta_l^2\tau^2L(12\tau\eta_lL + 2)\kappa^2} + \eta_l^2\tau^2L(3\eta_l L + 1 ) \bar{\zeta}^2\\\nonumber
    & \ \ +  L\eta_l^2\tau^2\phi_k( \bar{\zeta}^2 + 2 \kappa^2)   + \frac{LC^2k\sigma^2}{2r^2}\\\label{eqn:final}
    & \leq -\frac{\eta_l\tau}{8} \left\|\nabla f(\bm{\theta}^{t})\right\|^2 + {\eta_l^2\tau^2L}(3\kappa^2 + 2\bar{\zeta}^2) +  \eta_l^2\tau^2L(2\kappa^2 + \bar{\zeta}^2)\phi_k   + \frac{LC^2k\sigma^2}{2r^2}
\end{align}
if ${ \eta_l} \leq \min\{ {1}/(4\tau L \sqrt{4\beta^2+2} ), 1/(12\tau L) \}$. Rearranging the above inequality in \eqref{eqn:final} and summing it form $t=0$ to $T-1$, we get
\begin{align}
\nonumber
    \sum_{t=0}^{T-1}\left\|\nabla f(\bm{\theta}^{t})\right\|^2 & \leq  \frac{8}{\eta_l\tau}  \mathbb{E}[\sum_{t=0}^{T-1}f(\bm{\theta}^{t}) -f(\bm{\theta}^{t+1})] + {8T}{\eta_l\tau L}(3\kappa^2 + 2\bar{\zeta}^2) +  {8T}\eta_l\tau L(2\kappa^2 + \bar{\zeta}^2)\phi_k   + \frac{4T LC^2k\sigma^2}{\eta_l\tau r^2}\\\label{eqn:final_randk}
    & \leq  \frac{8(f(\bm{\theta}^{0}) -f(\bm{\theta}^{*}))}{\eta_l\tau}   + {8T}{\eta_l\tau L}(3\kappa^2 + 2\bar{\zeta}^2) +  {8T}\eta_l\tau L(2\kappa^2 + \bar{\zeta}^2)\phi_k   + \frac{4T LC^2k\sigma^2}{\eta_l\tau r^2},
\end{align}
where the expectation is taken over all rounds $t\in[0,T-1]$. Dividing both sides of \eqref{eqn:final_randk} by $T$, one yields
\begin{align}
\nonumber
    \frac{1}{T}\sum_{t=0}^{T-1}\left\|\nabla f(\bm{\theta}^{t})\right\|^2 & \leq \frac{8(f(\bm{\theta}^{0}) -f^{*})}{T\eta_l\tau}   + {8}{\eta_l\tau L}(3\kappa^2 + 2\bar{\zeta}^2) +  {8}\eta_l\tau L(2\kappa^2 + \bar{\zeta}^2)\phi_k   + \frac{4LC^2k\sigma^2}{\eta_l\tau r^2},
\end{align}
if the learning rates $ \eta_l$ satisfy ${ \eta_l} \leq \min\{ {1}/(8\tau L (\phi_k +1)\beta^2),1/(4\tau L \sqrt{4\beta^2+2} ), 1/(12\tau L)  \}$. Here, we use the fact that $f^*\leq f(\bm{\theta}^{T})$.

\section{Proof of Theorem~\ref{thm:convergence} with $\topk_k$ sparsifier}\label{subsec:proof_conve_topk}
Here, $\spar$ represents the operation of $\topk_k$ sparsification in Fed-SMP, and we assume that the distribution of the public distribution is similar to the overall distribution $\{\mathcal{D}_i\}_i\in[n]$. By the $L$-smoothness of function $f$, we have
\begin{align}
    \nonumber
    \mathbb{E}_t[f(\bm{\theta}^{t+1}) -f(\bm{\theta}^t)] & \leq  \mathbb{E}_t\left\langle \nabla f(\bm{\theta}^{t}),   \bm{\theta}^{t+1} - \bm{\theta}^t\right\rangle + \frac{L}{2}\mathbb{E}_t\left\| \bm{\theta}^{t+1} - \bm{\theta}^t\right\|^2
    \\\nonumber
    & = -\mathbb{E}_t\left\langle \nabla f(\bm{\theta}^{t}),   \mathbb{E}_{\mathcal{W}^t}\left[\frac{1}{r} \sum_{i\in\mathcal{W}^t} \Delta_i^t\right] \right\rangle + \frac{L}{2}\mathbb{E}_t\left\|  \frac{1}{r} \sum_{i\in\mathcal{W}^t} \Delta_i^t\right\|^2\\\nonumber
    & = -\mathbb{E}_t\left\langle \nabla f(\bm{\theta}^{t}),\frac{1}{n} \sum_{i=1}^{n}\Delta_i^t \right\rangle + \frac{L}{2}\mathbb{E}_t\left\|  \frac{1}{r} \sum_{i\in\mathcal{W}^t} \Delta_i^t\right\|^2\\\label{eqn:A_topk}
    & = \underbrace{-\mathbb{E}_t\left\langle \nabla f(\bm{\theta}^{t}),\frac{1}{n} \sum_{i=1}^{n}\left(\eta_l\tau\spar(\bm{d}_i^t) + \bm{b}_i^t\right) \right\rangle}_{T_1} + \underbrace{ \frac{L}{2}\mathbb{E}_t\left\|  \frac{1}{r} \sum_{i\in\mathcal{W}^t}\left( \eta_l\tau\spar(\bm{d}_i^t) + \bm{b}_i^t\right) \right\|^2}_{T_2}
\end{align}
where the expectation $\mathbb{E}_t[\cdot]$ is taken over the sampled clients $\mathcal{W}^t$ and mini-batches $\xi_i^s,\forall i\in[n], s\in\{0,\dots,\tau-1\}$ at round $t$. 
Due to the unbiasedness of the stochastic gradient and Gaussian noise, we have
\begin{align}
\nonumber
    T_1 & = -\left\langle \nabla f(\bm{\theta}^{t}),\mathbb{E}_t\left[\frac{1}{n} \sum_{i=1}^{n}\eta_l\tau\bm{d}_i^t  \right]\right\rangle 
    + \left\langle \nabla f(\bm{\theta}^{t}),\mathbb{E}_t\left[\frac{1}{n} \sum_{i=1}^{n}\eta_l\tau\bm{d}_i^t -\frac{1}{n} \sum_{i=1}^{n}\eta_l\tau\spar(\bm{d}_i^t) \right]\right\rangle 
    -\left\langle \nabla f(\bm{\theta}^{t}),   \mathbb{E}_t\left[\frac{1}{n} \sum_{i=1}^{n}\bm{b}_i^t \right]\right\rangle \\\label{eqn:T_1_topk} 
    & = \underbrace{-\eta_l\tau\mathbb{E}_t\left\langle \nabla f(\bm{\theta}^{t}),   \frac{1}{n} \sum_{i=1}^{n}\bm{h}_i^t  \right\rangle }_{A_1}
    + \underbrace{{\eta_l\tau} \left\langle \nabla f(\bm{\theta}^{t}),\mathbb{E}_t\left[\frac{1}{n}\sum_{i=1}^{n}\bm{d}_i^t -\spar\left(\frac{1}{n}\sum_{i=1}^{n}\bm{d}_i^t\right)\right]\right\rangle}_{A_2}
\end{align}
since the clients at each round use the same $\topk_k$ sparsifier. Using the fact that $2\langle a, b \rangle = \|a\|^2 + \|b\|^2 - \|a-b\|^2$, we have
\begin{align}
\nonumber
    A_1 &= -\frac{\eta_l\tau }{2}\left\|  \nabla f(\bm{\theta}^{t})\right\|^2 - \frac{\eta_l\tau}{2}\mathbb{E}_t\left\| \frac{1}{n} \sum_{i=1}^{n}\bm{h}_i^t  \right\|^2 
    + \frac{\eta_l\tau}{2} \mathbb{E}_t\left\| \nabla f(\bm{\theta}^{t}) -  \frac{1}{n} \sum_{i=1}^{n}\bm{h}_i^t \right\|^2 \\\nonumber
    &= -\frac{\eta_l\tau}{2}\left\|  \nabla f(\bm{\theta}^{t})\right\|^2 + \frac{\eta_l\tau}{2} \mathbb{E}_t\left\| \frac{1}{n} \sum_{i=1}^{n} \frac{1}{\tau}\sum_{s=0}^{\tau-1}\left( \nabla f_i(\bm{\theta}^{t}) - \nabla f_i(\bm{\theta}_i^{t,s}) \right) \right\|^2 - \frac{\eta_l\tau}{2}\mathbb{E}_t\left\| \frac{1}{n} \sum_{i=1}^{n}\bm{h}_i^t  \right\|^2  \\\nonumber
    & \leq -\frac{\eta_l\tau}{2}\left\|  \nabla f(\bm{\theta}^{t})\right\|^2 + \frac{\eta_l\tau}{2n} \sum_{i=1}^{n} \frac{1}{\tau}\sum_{s=0}^{\tau-1} \mathbb{E}_t\left\| \nabla f_i(\bm{\theta}^{t}) - \nabla f_i(\bm{\theta}_i^{t,s}) \right\|^2  - \frac{\eta_l\tau}{2}\mathbb{E}_t\left\| \frac{1}{n} \sum_{i=1}^{n}\bm{h}_i^t  \right\|^2 \\\label{eqn:A_1_top_k}
    & \leq -\frac{\eta_l\tau}{2}\left\|  \nabla f(\bm{\theta}^{t})\right\|^2 + \frac{\eta_l L^2}{2n} \sum_{i=1}^{n} \sum_{s=0}^{\tau-1} \mathbb{E}_t\left\| \bm{\theta}^{t}- \bm{\theta}_i^{t,s} \right\|^2  - \frac{\eta_l\tau}{2}\mathbb{E}_t\left\| \frac{1}{n} \sum_{i=1}^{n}\bm{h}_i^t  \right\|^2,
\end{align}
where the first inequality uses Lemma~\ref{lemma:aaaa}, and the second inequality uses the $L$-smoothness of function $f_i$. Next, let $\phi_k:= 1-k/d$, we get
\begin{align}
\nonumber
    A_2 & \leq \frac{\eta_l\tau}{2} \left(\gamma\left\|\nabla f(\bm{\theta}^{t}) \right\|^2 + \gamma^{-1} \mathbb{E}_t\left\| \frac{1}{n}\sum_{i=1}^{n}\bm{d}_i^t -\spar\left(\frac{1}{n}\sum_{i=1}^{n}\bm{d}_i^t\right) \right\|^2\right)\\\nonumber
    & \leq \frac{\eta_l\tau\gamma}{2}\left\|\nabla f(\bm{\theta}^{t}) \right\|^2  + \frac{\eta_l\tau\phi_k}{2\gamma} \mathbb{E}_t\left\|\frac{1}{n}\sum_{i=1}^{n}\bm{d}_i^t \right\|^2\\\nonumber
    & = \frac{\eta_l\tau\gamma}{2}\left\|\nabla f(\bm{\theta}^{t}) \right\|^2  + \frac{\eta_l\tau\phi_k}{2\gamma} \mathbb{E}_t\left\|\frac{1}{n}\sum_{i=1}^{n}\bm{d}_i^t - \frac{1}{n}\sum_{i=1}^{n}\bm{h}_i^t + \frac{1}{n}\sum_{i=1}^{n}\bm{h}_i^t \right\|^2\\\nonumber
    & = \frac{\eta_l\tau\gamma}{2}\left\|\nabla f(\bm{\theta}^{t}) \right\|^2  + \frac{\eta_l\tau\phi_k}{2\gamma}  \mathbb{E}_t\left[\left\| \frac{1}{n}\sum_{i=1}^{n}\bm{d}_i^t -  \frac{1}{n}\sum_{i=1}^{n}\bm{h}_i^t\right\|^2 + \left\| \frac{1}{n}\sum_{i=1}^{n}\bm{h}_i^t \right\|^2 +2\left\langle \frac{1}{n}\sum_{i=1}^{n}\bm{h}_i^t,  \frac{1}{n}\sum_{i=1}^{n}\bm{d}_i^t -  \frac{1}{n}\sum_{i=1}^{n}\bm{h}_i^t\right\rangle\right] \\\label{eqn:A_2_top_k}
    & = \frac{\eta_l\tau\gamma}{2}\left\|\nabla f(\bm{\theta}^{t}) \right\|^2  + \frac{\eta_l\tau\phi_k}{2\gamma}  \mathbb{E}_t\left[\left\| \frac{1}{n}\sum_{i=1}^{n}\bm{d}_i^t -  \frac{1}{n}\sum_{i=1}^{n}\bm{h}_i^t\right\|^2 + \left\| \frac{1}{n}\sum_{i=1}^{n}\bm{h}_i^t \right\|^2 \right]\\
    & \leq \frac{\eta_l\tau\gamma}{2}\left\|\nabla f(\bm{\theta}^{t}) \right\|^2  + \frac{\eta_l\tau\phi_k}{2n\gamma}  \sum_{i=1}^{n}\mathbb{E}_t\left\| \bm{d}_i^t - \bm{h}_i^t\right\|^2 + \frac{\eta_l\tau\phi_k}{2\gamma} \mathbb{E}_t\left\| \frac{1}{n}\sum_{i=1}^{n}\bm{h}_i^t \right\|^2,
\end{align}
where the first inequality uses Lemma~\ref{lemma:ab}, the second inequality uses Lemma~\ref{lemma:bounded-sparsifier}, and the third inequality uses Lemma~\ref{lemma:aaaa}. Given that 
\begin{align}
    \mathbb{E}_t\left\| \bm{d}_i^t -\bm{h}_i^t\right\|^2 = \mathbb{E}_t\left\| \frac{1}{\tau}\sum_{s=0}^{\tau-1}\left(\bm{g}_i^{t,s} - \nabla f_i(\bm{\theta}_i^{t,s})\right)\right\|^2 \leq \frac{1}{\tau}\sum_{s=0}^{\tau-1}\mathbb{E}_t\left\| \bm{g}_i^{t,s} - \nabla f_i(\bm{\theta}_i^{t,s})\right\|^2 \leq \frac{1}{\tau}\sum_{s=0}^{\tau-1} \zeta_i^2 =  \zeta_i^2,
\end{align}
by Lemma~\ref{lemma:aaaa} and Assumption~\ref{assp:bounded_variance}, we have
\begin{align}
    A_2 & \leq \frac{\eta_l\tau\gamma}{2}\left\|\nabla f(\bm{\theta}^{t}) \right\|^2  + \frac{\eta_l\tau\phi_k}{2n\gamma}  \sum_{i=1}^{n}\zeta_i^2 +\frac{\eta_l\tau\phi_k}{2\gamma}  \mathbb{E}_t\left\| \frac{1}{n}\sum_{i=1}^{n}\bm{h}_i^t \right\|^2. 
\end{align}
Combining \eqref{eqn:T_1_topk}, \eqref{eqn:A_1_top_k} and \eqref{eqn:A_2_top_k} and let $\bar{\zeta}^2:=(1/n)\sum_{i=1}^{n}\zeta_i^2$, we get 
\begin{align}
\nonumber
    T_1 & \leq -\frac{\eta_l\tau (1- \gamma)}{2}\left\|  \nabla f(\bm{\theta}^{t})\right\|^2 + \frac{\eta_l\tau L^2}{2n} \sum_{i=1}^{n} \frac{1}{\tau}\sum_{s=0}^{\tau-1} \mathbb{E}_t\left\| \bm{\theta}^{t}- \bm{\theta}_i^{t,s} \right\|^2 - \frac{\eta_l\tau (1-\phi_k/\gamma)}{2}\mathbb{E}_t\left\| \frac{1}{n} \sum_{i=1}^{n}\bm{h}_i^t  \right\|^2 + \frac{\eta_l\tau\phi_k\bar{\zeta}^2}{2\gamma}\\\label{eqn:new_T_1_topk}
    & \leq  -\frac{\eta_l\tau (1- \gamma)}{2}\left\|  \nabla f(\bm{\theta}^{t})\right\|^2 + \frac{\eta_l L^2}{2n} \sum_{i=1}^{n}\sum_{s=0}^{\tau-1} \mathbb{E}_t\left\| \bm{\theta}^{t}- \bm{\theta}_i^{t,s} \right\|^2 + \frac{\eta_l\tau\phi_k\bar{\zeta}^2}{2\gamma},
\end{align}
if $\phi_k \leq \gamma $. 

To bound $T_2$, we utilize the independence of Gaussian noise and obtain that
\begin{align}
\nonumber
   T_2 & = \frac{L\eta_l^2\tau^2}{2}\mathbb{E}_t\left\| \spar\left( \frac{1}{r} \sum_{i\in\mathcal{W}^t} \bm{d}_i^t\right)  \right\|^2 +  \frac{L}{2}\mathbb{E}_t\left\|  \frac{1}{r} \sum_{i\in\mathcal{W}^t}\bm{b}_i^t\right\|^2 \\\nonumber
   & =  \frac{L\eta_l^2\tau^2}{2}\mathbb{E}_t\left\| \spar\left( \frac{1}{r} \sum_{i\in\mathcal{W}^t} \bm{d}_i^t\right) - \frac{1}{r} \sum_{i\in\mathcal{W}^t} \bm{d}_i^t +  \frac{1}{r} \sum_{i\in\mathcal{W}^t} \bm{d}_i^t \right\|^2 +  \frac{LkC^2\sigma^2}{2r^2}\\\nonumber
    & \leq  {L\eta_l^2\tau^2}\mathbb{E}_t\left\| \spar\left( \frac{1}{r} \sum_{i\in\mathcal{W}^t} \bm{d}_i^t\right) - \frac{1}{r} \sum_{i\in\mathcal{W}^t} \bm{d}_i^t  \right\|^2 + {L\eta_l^2\tau^2}\mathbb{E}_t\left\| \frac{1}{r} \sum_{i\in\mathcal{W}^t} \bm{d}_i^t \right\|^2 +  \frac{LkC^2\sigma^2}{2r^2}\\\nonumber
   & \leq  {L\eta_l^2\tau^2(\phi_k+1)}\mathbb{E}_t\left\| \frac{1}{r} \sum_{i\in\mathcal{W}^t} \bm{d}_i^t  \right\|^2 +  \frac{LkC^2\sigma^2}{2r^2}\\\nonumber
   & \leq {L\eta_l^2\tau^2(\phi_k+1)}\mathbb{E}_t\left[  \frac{1}{r} \sum_{i\in\mathcal{W}^t}\left\| \bm{d}_i^t  \right\|^2\right] +  \frac{LkC^2\sigma^2}{2r^2}\\
   & = {L\eta_l^2\tau^2(\phi_k+1)}\mathbb{E}_t\left[\frac{1}{n} \sum_{i=1}^{n}\left\| \bm{d}_i^t  \right\|^2\right] +  \frac{LkC^2\sigma^2}{2r^2},
\end{align}
where the first inequality uses the fact that $\|a+b\|^2 \leq 2\|a\|^2 + 2\|b\|^2$, the second inequality uses Lemma~\ref{lemma:bounded-sparsifier}, and the last inequality results from Lemma~\ref{lemma:jensen}. Then, by Lemma~\ref{lemma:bounded_local_model}, we get 
\begin{align}
\label{eqn:new_T_2_topk}
    T_2 & \leq {L\eta_l^2\tau^2(\phi_k+1)}\left( \frac{2L^2}{n\tau} \sum_{i=1}^{n} \sum_{s=0}^{\tau-1}\mathbb{E}_t\left\|\bm{\theta}^{t}-\bm{\theta}_i^{t,s}\right\|^2 + 2(\beta^2\left\|\nabla f(\bm{\theta}^{t})\right\|^2 + \kappa^2) + \bar{\zeta}^2 \right) +  \frac{LkC^2\sigma^2}{2r^2}
\end{align}

Combining \eqref{eqn:A_topk}, \eqref{eqn:new_T_1_topk} and \eqref{eqn:new_T_2_topk}, one yields
\begin{align}
\nonumber
    \mathbb{E}_t[f(\bm{\theta}^{t+1}) -f(\bm{\theta}^t)] & \leq  \left( -\frac{\eta_l\tau (1- \gamma)}{2} + 2{L\eta_l^2\tau^2(\phi_k+1)}\beta^2 \right)\left\|  \nabla f(\bm{\theta}^{t})\right\|^2 \\\nonumber
    & \ \ + \left(\frac{\eta_l L^2}{2n} + \frac{{2L\eta_l^2\tau^2(\phi_k+1)}L^2}{n\tau} \right) \sum_{i=1}^{n}\sum_{s=0}^{\tau-1} \mathbb{E}_t\left\| \bm{\theta}^{t}- \bm{\theta}_i^{t,s} \right\|^2 + \frac{\eta_l\tau\phi_k\bar{\zeta}^2}{2\gamma} \\\nonumber
    & \ \ + {L\eta_l^2\tau^2(\phi_k+1)}(2\kappa^2+\bar{\zeta}^2) +  \frac{LkC^2\sigma^2}{2r^2}\\\nonumber
    & \leq -\frac{\eta_l\tau}{4} \left\|  \nabla f(\bm{\theta}^{t})\right\|^2 + \frac{\eta_l L^2(2\beta^2+1)}{4n\beta^2} \sum_{i=1}^{n}\sum_{s=0}^{\tau-1} \mathbb{E}_t\left\| \bm{\theta}^{t}- \bm{\theta}_i^{t,s} \right\|^2 + \frac{\eta_l\tau\phi_k\bar{\zeta}^2}{2\gamma} \\
    & \ \ + {L\eta_l^2\tau^2(\phi_k+1)}(2\kappa^2+\bar{\zeta}^2) +  \frac{LkC^2\sigma^2}{2r^2}
\end{align}
if $\eta_l \leq (1-2\gamma) /(8\tau L (\phi_k+1)\beta^2)$ and $\gamma < 1/2$. Then, by Lemma~\ref{lemma:bounded_local_divergence}, we have
\begin{align}
\nonumber
    \mathbb{E}_t[f(\bm{\theta}^{t+1}) -f(\bm{\theta}^t)] & \leq -\frac{\eta_l\tau}{4} \left\|  \nabla f(\bm{\theta}^{t})\right\|^2 + \frac{\eta_l L^2(2\beta^2+1)}{4\beta^2} \sum_{s=0}^{\tau-1} \left[16\eta_l^2\tau^2\beta^2\left\| \nabla f(\bm{\theta}^t)\right\|^2 + 16\eta_l^2\tau^2\kappa^2 + {4\tau\eta_l^2}\bar{\zeta}^2\right] + \frac{\eta_l\tau\phi_k\bar{\zeta}^2}{2\gamma} \\\nonumber
    & \ \ + {L\eta_l^2\tau^2(\phi_k+1)}(2\kappa^2+\bar{\zeta}^2) +  \frac{LkC^2\sigma^2}{2r^2}\\\nonumber
    & \leq \left(-\frac{\eta_l\tau}{4} + {4\eta_l^3\tau^3 L^2(2\beta^2+1)} \right) \left\|  \nabla f(\bm{\theta}^{t})\right\|^2 + \frac{\eta_l^3\tau^2 L^2(2\beta^2+1)(4\tau\kappa^2 + \bar{\zeta}^2)}{\beta^2}  + \frac{\eta_l\tau\phi_k\bar{\zeta}^2}{2\gamma} \\\nonumber
    & \ \ + {L\eta_l^2\tau^2(\phi_k+1)}(2\kappa^2+\bar{\zeta}^2) +  \frac{LkC^2\sigma^2}{2r^2}\\\nonumber
    & \leq -\frac{\eta_l\tau}{8}\left\|  \nabla f(\bm{\theta}^{t})\right\|^2 +  \eta_l^2\tau^2 L (12\eta_l\tau L + 2) \kappa^2 + \eta_l^2\tau^2 L (3\eta_l L + 1)\bar{\zeta}^2  \\\nonumber
    & \ \ + \left({L\eta_l^2\tau^2}(2\kappa^2+\bar{\zeta}^2) + \frac{\eta_l\tau\bar{\zeta}^2}{2\gamma} \right)\phi_k +  \frac{LkC^2\sigma^2}{2r^2}\\\label{eqn:final_topk}
     & \leq -\frac{\eta_l\tau}{8}\left\|  \nabla f(\bm{\theta}^{t})\right\|^2 +  \eta_l^2\tau^2 L (3\kappa^2 + 2\bar{\zeta}^2) + \left({L\eta_l^2\tau^2}(2\kappa^2+\bar{\zeta}^2) + \frac{\eta_l\tau\bar{\zeta}^2}{2\gamma} \right)\phi_k  +  \frac{LkC^2\sigma^2}{2r^2},
\end{align}
if ${ \eta_l} \leq \min\{ {1}/(4\tau L \sqrt{4\beta^2+2} ), 1/(12\tau L) \}$. Rearranging the above inequality in \eqref{eqn:final_topk} and summing it form $t=0$ to $T-1$, we get
\begin{align}
\label{eqn:final_v2_topk}
    \sum_{t=0}^{T-1}\left\|  \nabla f(\bm{\theta}^{t})\right\|^2 & \leq \frac{8}{\eta_l\tau} \sum_{t=0}^{T-1}\mathbb{E}_t[f(\bm{\theta}^{t+1}) -f(\bm{\theta}^t)]  +  8T\eta_l\tau L (3\kappa^2 + 2\bar{\zeta}^2) + {8T}\left({\eta_l\tau L}(2\kappa^2+\bar{\zeta}^2) + \frac{\bar{\zeta}^2}{2\gamma} \right)\phi_k  + \frac{4TLkC^2\sigma^2}{\eta_l\tau r^2},
\end{align}
where the expectation is taken over all rounds $t\in[0,T-1]$. Dividing both sides of \eqref{eqn:final_v2_topk} by $T$, one yields
\begin{align*}
    \frac{1}{T}\sum_{t=0}^{T-1}\left\|  \nabla f(\bm{\theta}^{t})\right\|^2 & \leq \frac{8(f(\bm{\theta}^{0}) -f^*)}{T\eta_l\tau}   +  8\eta_l\tau L (3\kappa^2 + 2\bar{\zeta}^2) + {8}\left({\eta_l\tau L}(2\kappa^2+\bar{\zeta}^2) + \frac{\bar{\zeta}^2}{2\gamma} \right)\phi_k  + \frac{4LkC^2\sigma^2}{\eta_l\tau r^2},
\end{align*}
Here, we use the fact that $f^*\leq f(\bm{\theta}^{T})$. We summarize the convergence results of Fed-SMP with $\randk_k$ sparsifier and $\topk_k$ sparsifier in Theorem~\ref{thm:convergence} by selecting a reasonable constant $\gamma = 1/3$ (which satisfies $\gamma < 1/2$). 

\section{Intermediate Results}
\begin{lemma}[Bounded Sparsification]\label{lemma:bounded-sparsifier}
Given a vector $\mathbf{x}\in\mathbb{R}^d$, parameter $k \in [d]$. The sparsifier $\{ \randk_k(\mathbf{x}), \topk_k(\mathbf{x})\}$ holds that 
\begin{equation*}
 \mathbb{E}\|\topk_k(\mathbf{x})-\mathbf{x}\|^2 \leq \left(1-\frac{k}{d}\right)\|\mathbf{x}\|^2; \quad \mathbb{E}\left\|\frac{d}{k}\randk_k(\mathbf{x})-\mathbf{x}\right\|^2 \leq \left(\frac{d}{k}-1\right)\|\mathbf{x}\|^2
\end{equation*}
\end{lemma}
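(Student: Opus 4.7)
\medskip

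My plan is to handle the two sparsifiers separately, as the two bounds have rather different flavors (one deterministic, one variance-based).

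For the top-$k$ part, I would observe that by definition $\topk_k(\mathbf{x})-\mathbf{x}$ is supported on the coordinates corresponding to the $d-k$ \emph{smallest}-magnitude entries of $\mathbf{x}$, namely $\pi(k+1),\ldots,\pi(d)$ in the notation of Definition~\ref{def:sparsifier}. Hence
\[
\|\topk_k(\mathbf{x})-\mathbf{x}\|^2 \;=\; \sum_{j=k+1}^{d}[\mathbf{x}]_{\pi(j)}^2.
\]
The key step is then the elementary averaging fact that the average of the $d-k$ smallest squared entries is at most the average of all $d$ squared entries, i.e.
\[
\frac{1}{d-k}\sum_{j=k+1}^{d}[\mathbf{x}]_{\pi(j)}^2 \;\leq\; \frac{1}{d}\sum_{j=1}^{d}[\mathbf{x}]_{\pi(j)}^2 \;=\; \frac{\|\mathbf{x}\|^2}{d},
\]
which rearranges to the desired bound $(1-k/d)\|\mathbf{x}\|^2$. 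No expectation is needed since $\topk_k$ is deterministic. I don't expect any obstacle here; it is purely combinatorial.

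For the $\randk_k$ part, I would exploit the fact that the scaled sparsifier $(d/k)\randk_k(\mathbf{x},\omega)$ is an unbiased estimator of $\mathbf{x}$, and then compute its variance coordinatewise. Writing $[\randk_k(\mathbf{x},\omega)]_j=[\mathbf{x}]_j\,\mathbbm{1}[j\in\omega]$ with $\omega\sim_{u.a.r.}\Omega_k$, each indicator satisfies $\Pr[j\in\omega]=k/d$, so
\[
\mathbb{E}\!\left[\tfrac{d}{k}[\randk_k(\mathbf{x},\omega)]_j\right] = [\mathbf{x}]_j, \qquad \mathbb{E}\!\left[\bigl(\tfrac{d}{k}[\randk_k(\mathbf{x},\omega)]_j\bigr)^2\right] = \tfrac{d}{k}[\mathbf{x}]_j^2.
\]
Summing the coordinatewise variances via independence of the centered terms (or simply expanding $\|\cdot\|^2$ and using linearity of expectation) gives
\[
\mathbb{E}\bigl\|\tfrac{d}{k}\randk_k(\mathbf{x},\omega)-\mathbf{x}\bigr\|^2 = \sum_{j=1}^{d}\bigl(\tfrac{d}{k}-1\bigr)[\mathbf{x}]_j^2 = \bigl(\tfrac{d}{k}-1\bigr)\|\mathbf{x}\|^2.
\]
Technically the different coordinates of $\randk_k$ are not independent (since $\omega$ has fixed size $k$), so the \emph{main obstacle} — really a minor bookkeeping point — is to verify that cross terms still cancel. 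This follows because the expansion of $\|\mathbb{E}[\cdot]-\cdot\|^2$ reduces to a sum of per-coordinate variances plus covariances, and one checks that $\mathrm{Cov}(\mathbbm{1}[i\in\omega],\mathbbm{1}[j\in\omega])$ contributes a non-positive term for $i\neq j$ (negatively correlated indicators), so upper bounding by zero preserves the stated inequality. Alternatively, one can just expand directly and note that the centered quantity $\tfrac{d}{k}\randk_k(\mathbf{x},\omega)-\mathbf{x}$ has the explicit form $\sum_j [\mathbf{x}]_j(\tfrac{d}{k}\mathbbm{1}[j\in\omega]-1)$, compute $\mathbb{E}\|\cdot\|^2$ by a direct calculation using $\mathbb{E}[\mathbbm{1}[i\in\omega]\mathbbm{1}[j\in\omega]] = k(k-1)/(d(d-1))$ for $i\neq j$, and observe that the resulting off-diagonal terms are non-positive, yielding the claimed upper bound.
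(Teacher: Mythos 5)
Your proof is correct and takes essentially the same route as the paper's: the $\randk_k$ bound is the identical per-coordinate computation using $\Pr[j\in\omega]=k/d$ (and since the squared Euclidean norm decomposes over orthogonal coordinates, the cross terms you worry about never actually appear, so that bookkeeping is moot). For the $\topk_k$ part you argue directly that the sum of the $d-k$ smallest squared entries is at most $(1-k/d)\|\mathbf{x}\|^2$, whereas the paper reaches the same bound by dominating the top-$k$ residual with the expected $\randk_k$ residual; these are two phrasings of the same elementary fact.
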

\begin{proof}
For the $\randk_k$ sparsifier, by applying the expectation over the active set $\omega$, we have
\begin{align*}
    \mathbb{E}_{\omega}[\frac{d}{k}\randk_k(\mathbf{x})] =\frac{d}{k}\left[\frac{k}{d}[\mathbf{x}]_1, \dots, \frac{k}{d}[\mathbf{x}]_d\right]=\mathbf{x},
\end{align*}
,
\begin{align*}
   \mathbb{E}_{\omega}\|\randk_k(\mathbf{x})-\mathbf{x}\|^2 & = \sum_{j=1}^{d}\left( \frac{k}{d} \left([\mathbf{x}]_j - [\mathbf{x}]_j\right)^2 +  \left(1-\frac{k}{d}\right)[\mathbf{x}]_j^2\right)\\
   & = \left(1-\frac{k}{d}\right)\|\mathbf{x}\|^2,
\end{align*}
and 
\begin{align*}
   \mathbb{E}_{\omega}\left\|\frac{d}{k}\randk_k(\mathbf{x})-\mathbf{x}\right\|^2 & = \sum_{j=1}^{d}\left( \frac{k}{d} \left(\frac{d}{k}[\mathbf{x}]_j - [\mathbf{x}]_j\right)^2 +  \left(1-\frac{k}{d}\right)[\mathbf{x}]_j^2\right)\\
   & = \left( \frac{k}{d} \left(\frac{d}{k} - 1\right)^2 +  \left(1-\frac{k}{d}\right)\right)\sum_{j=1}^{d}[\mathbf{x}]_j^2\\
   & = \left( \frac{d}{k} - 1\right)\|\mathbf{x}\|^2,
\end{align*}

For the $\topk_k$ sparsifier, as $\mathbb{E}\|\topk_k(\mathbf{x})-\mathbf{x}\|^2 \leq \mathbb{E}\|\randk_k(\mathbf{x})-\mathbf{x}\|^2$, we have
\begin{align*}
    \mathbb{E}\|\topk_k(\mathbf{x})-\mathbf{x}\|^2 \leq \left(1-\frac{k}{d}\right)\|\mathbf{x}\|^2.
\end{align*}
\end{proof}

\begin{lemma}[Bounded Local Divergence]\label{lemma:bounded_local_divergence}
The local model difference at round $t$ is bounded as follows:
\begin{align}
    \frac{1}{n}\sum_{i=1}^{n}\mathbb{E}_t\left\|\bm{\theta}^{t} -  \bm{\theta}_i^{t,s}\right\|^2 \leq  32\eta_l^2\tau^2\beta^2\left\| \nabla f(\bm{\theta}^t)\right\|^2 + 32\eta_l^2\tau^2\kappa^2 + {4\tau\eta_l^2}\bar{\zeta}^2.
\end{align}
where $\bar{\zeta}^2 := (1/n)\sum_{i=1}^{n}\zeta_i^2$.
\end{lemma}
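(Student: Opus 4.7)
The plan is to unroll the local SGD recursion and convert it into a self-referential Gronwall-type inequality that can then be closed using the stepsize restriction from Theorem~\ref{thm:convergence}. Starting from $\bm{\theta}_i^{t,0} = \bm{\theta}^t$ and $\bm{\theta}_i^{t,s+1} = \bm{\theta}_i^{t,s} - \eta_l \bm{g}_i^{t,s}$, I first write $\bm{\theta}^t - \bm{\theta}_i^{t,s} = \eta_l\sum_{r=0}^{s-1}\bm{g}_i^{t,r}$ and split each stochastic gradient as $\bm{g}_i^{t,r} = \nabla f_i(\bm{\theta}_i^{t,r}) + (\bm{g}_i^{t,r} - \nabla f_i(\bm{\theta}_i^{t,r}))$. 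Because the noise term has zero conditional mean and is independent across local iterations (Assumption~\ref{assp:bounded_variance}), expanding the squared norm kills the cross terms and gives
\begin{equation*}
\mathbb{E}_t\|\bm{\theta}^t - \bm{\theta}_i^{t,s}\|^2 = \eta_l^2\sum_{r=0}^{s-1}\mathbb{E}_t\|\bm{g}_i^{t,r}-\nabla f_i(\bm{\theta}_i^{t,r})\|^2 + \eta_l^2\mathbb{E}_t\Big\|\sum_{r=0}^{s-1}\nabla f_i(\bm{\theta}_i^{t,r})\Big\|^2.
\end{equation*}
The first sum is bounded by $s\zeta_i^2$ via the variance bound, and the second by $s\sum_{r=0}^{s-1}\|\nabla f_i(\bm{\theta}_i^{t,r})\|^2$ via Lemma~\ref{lemma:aaaa}.

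Next, I would average over clients, apply $\|a+b\|^2\leq 2\|a\|^2+2\|b\|^2$ together with the $L$-smoothness of each $f_i$ to write $\|\nabla f_i(\bm{\theta}_i^{t,r})\|^2 \leq 2L^2\|\bm{\theta}_i^{t,r}-\bm{\theta}^t\|^2 + 2\|\nabla f_i(\bm{\theta}^t)\|^2$, and invoke Assumption~\ref{assp:bounded_divergence} on $(1/n)\sum_i\|\nabla f_i(\bm{\theta}^t)\|^2$ to replace it by $\beta^2\|\nabla f(\bm{\theta}^t)\|^2+\kappa^2$. Setting $V_s := (1/n)\sum_{i=1}^{n}\mathbb{E}_t\|\bm{\theta}^t-\bm{\theta}_i^{t,s}\|^2$ and using $s\leq\tau$ to bound $s$ and $s^2$ uniformly, this yields the recursive inequality
\begin{equation*}
V_s \leq \eta_l^2\tau\bar\zeta^2 + 2\eta_l^2\tau^2\bigl(\beta^2\|\nabla f(\bm{\theta}^t)\|^2+\kappa^2\bigr) + 2\eta_l^2\tau L^2\sum_{r=0}^{s-1}V_r,
\end{equation*}
valid for every $0\leq s\leq\tau$.

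Finally, I would close the recursion by taking $V_{\max}:=\max_{0\leq s\leq\tau}V_s$, which upper-bounds the sum $\sum_{r=0}^{s-1}V_r$ by $\tau V_{\max}$ and gives $(1-2\eta_l^2\tau^2L^2)V_{\max}\leq \eta_l^2\tau\bar\zeta^2 + 2\eta_l^2\tau^2(\beta^2\|\nabla f(\bm{\theta}^t)\|^2+\kappa^2)$. The stepsize restriction $\eta_l \leq 1/(12\tau L)$ assumed in Theorem~\ref{thm:convergence} makes the prefactor on the left-hand side bounded below by a constant strictly smaller than $1$, allowing division and absorption into the constants $32,\,32,\,4$ of the stated bound. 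Alternatively, one can obtain the same conclusion by a direct induction, showing $V_s \leq D(1+c)^s$ with $D$ the forcing term and $c=2\eta_l^2\tau L^2$, and bounding $(1+c)^\tau$ by a constant under the stepsize condition.

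The main obstacle is the constant-tracking in the Gronwall step: the loose triangle-inequality split and the uniform replacement of $s$ and $s^2$ by $\tau$ and $\tau^2$ inflate the coefficients, so one must verify that the stepsize assumption $\eta_l=O(1/(\tau L))$ provided by the theorem is tight enough to keep $2\eta_l^2\tau^2 L^2$ away from $1$ while still permitting the stated factors $(32\beta^2,32,4)$. Once the recursion coefficient is controlled, the rest is routine bookkeeping.
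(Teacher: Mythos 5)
Your overall strategy is sound and lands in the same family as the paper's argument (a discrete Gronwall bound on the local drift $V_s$), but one step as written is false: the displayed \emph{equality}
\begin{equation*}
\mathbb{E}_t\Big\|\sum_{r=0}^{s-1}\bm{g}_i^{t,r}\Big\|^2 = \sum_{r=0}^{s-1}\mathbb{E}_t\big\|\bm{g}_i^{t,r}-\nabla f_i(\bm{\theta}_i^{t,r})\big\|^2 + \mathbb{E}_t\Big\|\sum_{r=0}^{s-1}\nabla f_i(\bm{\theta}_i^{t,r})\Big\|^2
\end{equation*}
does not hold, because not all cross terms vanish. Writing $n_r := \bm{g}_i^{t,r}-\nabla f_i(\bm{\theta}_i^{t,r})$, the terms $\mathbb{E}_t\langle n_r, n_{r'}\rangle$ for $r\neq r'$ and $\mathbb{E}_t\langle \nabla f_i(\bm{\theta}_i^{t,r}), n_{r'}\rangle$ for $r\leq r'$ do vanish by conditioning, but for $r>r'$ the iterate $\bm{\theta}_i^{t,r}$ depends on the earlier noise $n_{r'}$, so $\mathbb{E}_t\langle \nabla f_i(\bm{\theta}_i^{t,r}), n_{r'}\rangle\neq 0$ in general (e.g., for a quadratic $f_i$ this cross term equals $-L\eta_l\,\mathbb{E}\|n_{r'}\|^2$ up to lower-order corrections). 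The standard repair is to apply $\|a+b\|^2\leq 2\|a\|^2+2\|b\|^2$ \emph{before} invoking the martingale-difference property on $\sum_r n_r$; this costs a factor of $2$ on both the variance term and the gradient term, turning your recursion into $V_s \leq 2\eta_l^2\tau\bar\zeta^2 + 4\eta_l^2\tau^2(\beta^2\|\nabla f(\bm{\theta}^t)\|^2+\kappa^2) + 4\eta_l^2\tau L^2\sum_{r<s}V_r$. Your closing step still goes through: with $\eta_l\leq 1/(12\tau L)$ the coefficient $4\eta_l^2\tau^2L^2\leq 1/36$, so the max argument yields constants comfortably inside $(32\beta^2,\,32,\,4)$. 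This is exactly why the paper avoids the issue by running a one-step recursion, splitting $\bm{g}_i^{t,s-1}$ at each step where $\bm{\theta}_i^{t,s-1}$ is measurable with respect to the current filtration, so the single cross term genuinely vanishes.

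Beyond that error, your mechanics differ from the paper's in a harmless way: the paper obtains $V_s\leq(1+\tfrac{1}{\tau-1})V_{s-1}+(\text{forcing})$ under $\eta_l\leq 1/(3\tau L)$ via Lemma~\ref{lemma:a+b} with $\alpha=\tfrac{1}{2\tau-1}$, unrolls the geometric series, and bounds $(1+\tfrac{1}{\tau-1})^\tau\leq 5$; you instead unroll first and close with a $V_{\max}$ absorption. Both give the stated constants; the paper's route localizes the stepsize condition to $\eta_l\lesssim 1/(\tau L)$ explicitly, while yours leans on the theorem's stronger hypothesis $\eta_l\leq 1/(12\tau L)$. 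Once you replace the false orthogonality equality with the Young-inequality split, your proof is correct.
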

\begin{proof}
Plugging into the local update rule, we have
\begin{align}
    \nonumber
    \mathbb{E}_t\left\|\bm{\theta}^{t} -  \bm{\theta}_i^{t,s}\right\|^2 & = \mathbb{E}_t\left\| \bm{\theta}_i^{t,s-1} - \bm{\theta}^{t} - \eta_l \bm{g}_i^{t,s-1} \right\|^2 \\\nonumber
    & = \mathbb{E}_t\left\| \bm{\theta}_i^{t,s-1} - \bm{\theta}^{t} - \eta_l \bm{g}_i^{t,s-1} + \eta_l \nabla f_i( \bm{\theta}_i^{t,s-1}) -\eta_l \nabla f_i( \bm{\theta}_i^{t,s-1}) + \eta_l \nabla f_i(\bm{\theta}^t) - \eta_l \nabla f_i(\bm{\theta}^t) \right\|^2\\\nonumber
    & =  \mathbb{E}_t\left\| \bm{\theta}_i^{t,s-1} - \bm{\theta}^{t}  -\eta_l \nabla f_i( \bm{\theta}_i^{t,s-1}) + \eta_l \nabla f_i(\bm{\theta}^t) - \eta_l \nabla f_i(\bm{\theta}^t) \right\|^2 + \eta_l^2 \mathbb{E}_t\left\| \bm{g}_i^{t,s-1} - \nabla f_i( \bm{\theta}_i^{t,s-1})\right\|^2\\\nonumber
    & \leq \left(1+\frac{1}{2\tau-1}\right) \mathbb{E}_t\left\| \bm{\theta}_i^{t,s-1} - \bm{\theta}^{t}\right\|^2 +  2\eta_l^2\tau\mathbb{E}_t\left\|\nabla f_i( \bm{\theta}_i^{t,s-1}) +\nabla f_i(\bm{\theta}^t) - \nabla f_i(\bm{\theta}^t) \right\|^2 + \eta_l^2\zeta_i^2\\\nonumber
    & \leq \left(1+\frac{1}{2\tau-1}\right) \mathbb{E}_t\left\| \bm{\theta}_i^{t,s-1} - \bm{\theta}^{t}\right\|^2 +  4\eta_l^2\tau\mathbb{E}_t\left\|\nabla f_i( \bm{\theta}_i^{t,s-1}) - \nabla f_i(\bm{\theta}^t)\right\|^2  + 4\eta_l^2\tau\left\|\nabla f_i(\bm{\theta}^t) \right\|^2 +  \eta_l^2\zeta_i^2\\\nonumber
    & \leq \left(1+\frac{1}{2\tau-1}\right) \mathbb{E}_t\left\| \bm{\theta}_i^{t,s-1} - \bm{\theta}^{t}\right\|^2 +  4\eta_l^2L^2\tau\mathbb{E}_t\left\| \bm{\theta}_i^{t,s-1} - \bm{\theta}^t\right\|^2  + 4\eta_l^2\tau\left\|\nabla f_i(\bm{\theta}^t)\right\|^2  +  \eta_l^2\zeta_i^2,
\end{align}
by using Lemma~\ref{lemma:a+b} and Assumption~\ref{assp:bounded_variance}, and hence, 
\begin{align}
    \nonumber
    \frac{1}{n}\sum_{i=1}^{n}\mathbb{E}_t\left\|\bm{\theta}^{t} -  \bm{\theta}_i^{t,s}\right\|^2 & \leq \left(1+\frac{1}{2\tau-1} + 4\eta_l^2L^2\tau\right) \frac{1}{n}\sum_{i=1}^{n}\mathbb{E}_t\left\| \bm{\theta}_i^{t,s-1} - \bm{\theta}^{t}\right\|^2  + \frac{4\eta_l^2\tau}{n}\sum_{i=1}^{n}\mathbb{E}_t\left\|\nabla f_i(\bm{\theta}^t) \right\|^2  +  \frac{\eta_l^2}{n}\sum_{i=1}^{n}\zeta_i^2\\\nonumber
    & \leq \left(1+\frac{1}{2\tau-1} + 4\eta_l^2L^2\tau\right) \frac{1}{n}\sum_{i=1}^{n}\mathbb{E}_t\left\| \bm{\theta}_i^{t,s-1} - \bm{\theta}^{t}\right\|^2  + 4\eta_l^2\tau\beta^2\left\| \nabla f(\bm{\theta}^t)\right\|^2 + 4\eta_l^2\tau\kappa^2 + \frac{\eta_l^2}{n}\sum_{i=1}^{n}\zeta_i^2\\
    & \leq \left(1+\frac{1}{\tau-1}\right) \frac{1}{n}\sum_{i=1}^{n}\mathbb{E}_t\left\| \bm{\theta}_i^{t,s-1} - \bm{\theta}^{t}\right\|^2  + 4\eta_l^2\tau\beta^2\left\| \nabla f(\bm{\theta}^t)\right\|^2 + 4\eta_l^2\tau\kappa^2 + \frac{\eta_l^2}{n}\sum_{i=1}^{n}\zeta_i^2,
\end{align}
when $\eta_l\leq 1/3\tau L$. Unrolling the recursion, we obtain the following:
\begin{align}
\nonumber
    \frac{1}{n}\sum_{i=1}^{n}\mathbb{E}_t\left\|\bm{\theta}^{t} - \bm{\theta}_i^{t,s}\right\|^2 & \leq \sum_{h=0}^{s-1}  \left(1+\frac{1}{\tau-1}\right)^h\left[4\eta_l^2\tau\beta^2\left\| \nabla f(\bm{\theta}^t)\right\|^2 + 4\eta_l^2\tau\kappa^2 + \frac{\eta_l^2}{n}\sum_{i=1}^{n}\zeta_i^2\right]\\\nonumber
    & \leq (\tau-1)\left[\left(1+\frac{1}{\tau-1}\right)^\tau -1\right] \times \left[4\eta_l^2\tau\beta^2\left\| \nabla f(\bm{\theta}^t)\right\|^2 + 4\eta_l^2\tau\kappa^2 + \frac{\eta_l^2}{n}\sum_{i=1}^{n}\zeta_i^2\right]\\\label{eqn:t4-3}
    & \leq 16\eta_l^2\tau^2\beta^2\left\| \nabla f(\bm{\theta}^t)\right\|^2 + 16\eta_l^2\tau^2\kappa^2 + \frac{4\tau\eta_l^2}{n}\sum_{i=1}^{n}\zeta_i^2,
\end{align}
where the last inequality results from the fact that $\left(1+\frac{1}{\tau-1}\right)^\tau \leq 5 $ when $\tau>1$.
\end{proof}

\begin{lemma}[Bounded Local Model Update]\label{lemma:bounded_local_model}
The local model update $\bm{d}_i^t$ at round $t$ is bounded as follows:
\begin{align}
     \frac{1}{n} \sum_{i=1}^{n}\mathbb{E}_t\left\|\bm{d}_i^t\right\|^2 & \leq  \frac{2L^2}{n\tau} \sum_{i=1}^{n} \sum_{s=0}^{\tau-1}\mathbb{E}_t\left\|\bm{\theta}^{t}-\bm{\theta}_i^{t,s}\right\|^2 + 2(\beta^2\left\|\nabla f(\bm{\theta}^{t})\right\|^2 + \kappa^2) + \bar{\zeta}^2 .
\end{align}
where $\bar{\zeta}^2 := (1/n)\sum_{i=1}^{n}\zeta_i^2$.
\end{lemma}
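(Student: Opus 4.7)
The plan is to bound $\|\bm{d}_i^t\|^2=\|(1/\tau)\sum_{s=0}^{\tau-1}\bm{g}_i^{t,s}\|^2$ coordinate by coordinate in $s$, and for each $s$ to split the stochastic gradient into its mean and a zero-mean noise. First, by applying Lemma~\ref{lemma:aaaa} to the average over $s$, I would write
\begin{equation*}
\mathbb{E}_t\|\bm{d}_i^t\|^2 \;\leq\; \frac{1}{\tau}\sum_{s=0}^{\tau-1}\mathbb{E}_t\|\bm{g}_i^{t,s}\|^2 .
\end{equation*}
Next, for each fixed $s$ I would condition on the history up to the $s$-th local step (which determines $\bm{\theta}_i^{t,s}$) so that $\mathbb{E}[\bm{g}_i^{t,s}\mid\cdot]=\nabla f_i(\bm{\theta}_i^{t,s})$. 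The variance decomposition together with Assumption~\ref{assp:bounded_variance} then gives $\mathbb{E}_t\|\bm{g}_i^{t,s}\|^2 = \mathbb{E}_t\|\bm{g}_i^{t,s}-\nabla f_i(\bm{\theta}_i^{t,s})\|^2 + \mathbb{E}_t\|\nabla f_i(\bm{\theta}_i^{t,s})\|^2 \leq \zeta_i^2 + \mathbb{E}_t\|\nabla f_i(\bm{\theta}_i^{t,s})\|^2$.

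Second, I would handle the expected-gradient term $\|\nabla f_i(\bm{\theta}_i^{t,s})\|^2$ by adding and subtracting $\nabla f_i(\bm{\theta}^{t})$ and using the elementary bound $\|a+b\|^2\leq 2\|a\|^2+2\|b\|^2$ (i.e.\ Lemma~\ref{lemma:aaaa} with $n=2$):
\begin{equation*}
\|\nabla f_i(\bm{\theta}_i^{t,s})\|^2 \;\leq\; 2\|\nabla f_i(\bm{\theta}_i^{t,s})-\nabla f_i(\bm{\theta}^{t})\|^2 + 2\|\nabla f_i(\bm{\theta}^{t})\|^2 \;\leq\; 2L^2\|\bm{\theta}_i^{t,s}-\bm{\theta}^{t}\|^2 + 2\|\nabla f_i(\bm{\theta}^{t})\|^2,
\end{equation*}
where Assumption~\ref{assp:smooth} is invoked for the last step.

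Finally, averaging the combined bound over $i\in[n]$ and invoking Assumption~\ref{assp:bounded_divergence} to control $(1/n)\sum_{i=1}^{n}\|\nabla f_i(\bm{\theta}^{t})\|^2 \leq \beta^2\|\nabla f(\bm{\theta}^{t})\|^2+\kappa^2$, together with the definition $\bar\zeta^2=(1/n)\sum_i\zeta_i^2$, yields exactly the claimed inequality
\begin{equation*}
\frac{1}{n}\sum_{i=1}^{n}\mathbb{E}_t\|\bm{d}_i^t\|^2 \;\leq\; \frac{2L^2}{n\tau}\sum_{i=1}^{n}\sum_{s=0}^{\tau-1}\mathbb{E}_t\|\bm{\theta}^{t}-\bm{\theta}_i^{t,s}\|^2 + 2\bigl(\beta^2\|\nabla f(\bm{\theta}^{t})\|^2+\kappa^2\bigr) + \bar\zeta^2 .
\end{equation*}

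There is no substantial obstacle in this proof: the argument is a chain of standard convexity-of-norm, variance-decomposition, and smoothness inequalities. The only point requiring a little care is the conditioning order when invoking the variance bound, since $\bm{\theta}_i^{t,s}$ itself depends on prior mini-batches; conditioning on the history through step $s-1$ (so that $\bm{\theta}_i^{t,s}$ is measurable) makes the cross term in the variance identity vanish, after which taking the outer expectation $\mathbb{E}_t$ and using the tower property gives the stated unconditional bound.
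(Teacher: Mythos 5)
Your proof is correct and uses essentially the same ingredients as the paper's: a bias--variance split of the stochastic gradient, Jensen's inequality over the $\tau$ local steps, $L$-smoothness to compare $\nabla f_i(\bm{\theta}_i^{t,s})$ with $\nabla f_i(\bm{\theta}^{t})$, and Assumption~\ref{assp:bounded_divergence} at the end. The only difference is the order of operations---you apply Jensen over $s$ first and then variance-decompose each $\mathbb{E}_t\|\bm{g}_i^{t,s}\|^2$, whereas the paper first splits $\bm{d}_i^t$ into $(\bm{d}_i^t-\bm{h}_i^t)+\bm{h}_i^t$ and then applies Jensen inside each piece---and both routes yield exactly the same bound.
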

\begin{proof}
Since $ \mathbb{E}_t[\bm{d}_i^t]=\bm{h}_i^t$, we have
\begin{align}
\nonumber
  \frac{1}{n} \sum_{i=1}^{n}\mathbb{E}_t\left\|\bm{d}_i^t\right\|^2 & =\frac{1}{n} \sum_{i=1}^{n}\mathbb{E}_t\left\|\bm{d}_i^t -\bm{h}_i^t + \bm{h}_i^t\right\|^2\\\nonumber
  & = \frac{1}{n} \sum_{i=1}^{n}\left[\mathbb{E}_t\left\| \bm{d}_i^t -\bm{h}_i^t\right\|^2 + \mathbb{E}_t\left\| \bm{h}_i^t\right\|^2 + \mathbb{E}_t\left\langle\bm{d}_i^t -\bm{h}_i^t,  \bm{h}_i^t\right\rangle\right]\\\nonumber
    & = \frac{1}{n} \sum_{i=1}^{n}\left[\mathbb{E}_t\left\| \bm{d}_i^t -\bm{h}_i^t\right\|^2 + \mathbb{E}_t\left\| \bm{h}_i^t\right\|^2\right]\\\nonumber
    &= \frac{1}{n} \sum_{i=1}^{n}\left[\mathbb{E}_t\left\| \bm{d}_i^t -\bm{h}_i^t\right\|^2 + \mathbb{E}_t\left\| \bm{h}_i^t - \nabla f_i(\bm{\theta}^{t}) + \nabla f_i(\bm{\theta}^{t})\right\|^2\right]\\\nonumber
    & \leq \frac{1}{n} \sum_{i=1}^{n}\left[ \mathbb{E}_t\left\| \bm{d}_i^t -\bm{h}_i^t\right\|^2 + 2 \mathbb{E}_t\left\| \bm{h}_i^t - \nabla f_i(\bm{\theta}^{t})\right\|^2 \right] + \frac{2}{n} \sum_{i=1}^{n}\mathbb{E}_t\left\|\nabla f_i(\bm{\theta}^{t})\right\|^2\\
    & \leq \frac{1}{n} \sum_{i=1}^{n}\left[ \mathbb{E}_t\left\| \bm{d}_i^t -\bm{h}_i^t\right\|^2 + 2 \mathbb{E}_t\left\| \bm{h}_i^t - \nabla f_i(\bm{\theta}^{t})\right\|^2\right] + 2(\beta^2\left\|\nabla f(\bm{\theta}^{t})\right\|^2 + \kappa^2)
\end{align}
where the first inequality uses Lemma~\ref{lemma:aaaa}, and the last inequality uses Assumption~\ref{assp:bounded_divergence}. Given that 
\begin{align}
    \mathbb{E}_t\left\| \bm{d}_i^t -\bm{h}_i^t\right\|^2 = \mathbb{E}_t\left\| \frac{1}{\tau}\sum_{s=0}^{\tau-1}\left(\bm{g}_i^{t,s} - \nabla f_i(\bm{\theta}_i^{t,s})\right)\right\|^2 \leq \frac{1}{\tau}\sum_{s=0}^{\tau-1}\mathbb{E}_t\left\| \bm{g}_i^{t,s} - \nabla f_i(\bm{\theta}_i^{t,s})\right\|^2 \leq \frac{1}{\tau}\sum_{s=0}^{\tau-1} \zeta_i^2 =  \zeta_i^2,
\end{align}
by Lemma~\ref{lemma:aaaa} and Assumption~\ref{assp:bounded_variance}, and 
\begin{align}
    \mathbb{E}_t\left\| \bm{h}_i^t - \nabla f_i(\bm{\theta}^{t})\right\|^2= \mathbb{E}_t\left\| \frac{1}{\tau}\sum_{s=0}^{\tau-1} \left(\nabla f_i(\bm{\theta}_i^{t,s}) -\nabla f_i(\bm{\theta}^{t}) \right)\right\|^2 \leq  \frac{1}{\tau}\sum_{s=0}^{\tau-1}\mathbb{E}_t\left\| \nabla f_i(\bm{\theta}_i^{t,s}) -\nabla f_i(\bm{\theta}^{t})\right\|^2 \leq \frac{L^2}{\tau}\sum_{s=0}^{\tau-1}\mathbb{E}_t\left\|\bm{\theta}^{t}-\bm{\theta}_i^{t,s}\right\|^2
\end{align}
by Lemma~\ref{lemma:aaaa} and the $L$-smoothness of function $f_i$, one yields 
\begin{align}
   \frac{1}{n} \sum_{i=1}^{n}\mathbb{E}_t\left\|\bm{d}_i^t\right\|^2 & \leq  \bar{\zeta}^2 + \frac{2}{n} \sum_{i=1}^{n} \frac{L^2}{\tau}\sum_{s=0}^{\tau-1}\mathbb{E}_t\left\|\bm{\theta}^{t}-\bm{\theta}_i^{t,s}\right\|^2 + 2(\beta^2\left\|\nabla f(\bm{\theta}^{t})\right\|^2 + \kappa^2).
\end{align}
\end{proof}

\end{document}